\DeclareMathOperator{\Tr}{Tr}
\DeclareMathOperator{\Diag}{Diag}
\newcommand{\rmd}{\mathrm{d}}
\pgfplotsset{compat=newest}
\newcommand{\widetildeC}{\widetilde{C}}
\newcommand{\widetildeL}{\widetilde{L}}
\newcommand{\widetildeK}{\widetilde{K}}
\newcommand{\calR}{\mathcal{R}}
\newcommand{\calC}{\mathcal{C}}
\newcommand{\bma}{\bm{\alpha}}
\newcommand{\bmb}{\bm{\beta}}
\newcommand{\bmx}{\bm{x}}
\newcommand{\bmu}{\bm{u}}
\newcommand{\bmxp}{\bm{x}'}
\newcommand{\sfy}{\mathsf{y}}
\newcommand{\sfx}{\mathsf{x}}
\newcommand{\sfz}{\mathsf{z}}
\newcommand{\sfu}{\mathsf{u}}
\newcommand{\sfe}{\mathsf{\epsilon}}
\newcommand{\Vol}{\mathrm{Vol}}
\newtheorem{example}{Example}
\crefname{hypothesis}{Hypothesis}{Hypotheses}
\title{Determinantal Point Processes Implicitly Regularize Semi-parametric Regression Problems\thanks{\funding{EU: The research leading to these results has received funding from the European Research Council under the European Union's Horizon 2020 research and innovation program / ERC Advanced Grant E-DUALITY (787960). This paper reflects only the authors' views and the Union is not liable for any use that may be made of the contained information.
Research Council KU Leuven: Optimization frameworks for deep kernel machines C14/18/068 Flemish Government: FWO: projects: GOA4917N (Deep Restricted Kernel Machines: Methods and Foundations), PhD/Postdoc grant
This research received funding from the Flemish Government (AI Research Program). Ford KU Leuven Research Alliance Project KUL0076 (Stability analysis and performance improvement of deep reinforcement learning algorithms) EU H2020 ICT-48 Network TAILOR (Foundations of Trustworthy AI - Integrating Reasoning, Learning and Optimization). Leuven.AI Institute.}}}
\author{Micha\"el Fanuel\thanks{KU Leuven, Department of Electrical Engineering (ESAT),
STADIUS Center for Dynamical Systems, Signal Processing and Data Analytics,
Kasteelpark Arenberg 10, B-3001 Leuven, Belgium. email: michael.fanuel@kuleuven.be}
\and Joachim Schreurs\footnotemark[2]
\and Johan A.K. Suykens\footnotemark[2]}
\DeclareMathOperator{\diag}{diag}
\begin{document}
\begin{nolinenumbers}
\maketitle

\begin{abstract}
Semi-parametric regression models are used in several applications which require comprehensibility without sacrificing accuracy. Typical examples are
    spline interpolation in geophysics, or non-linear time series problems, where the system includes a linear and non-linear component. We discuss here the use of a finite Determinantal Point Process (DPP) for approximating semi-parametric models.
    Recently, Barthelm\'e,  Tremblay,  Usevich,  and Amblard introduced a novel representation of some finite DPPs. These authors formulated \emph{extended $L$-ensembles} that can conveniently represent partial-projection DPPs and suggest their use for optimal interpolation. With the help of this formalism, we derive a key identity illustrating the implicit regularization effect of determinantal sampling for semi-parametric regression and interpolation. Also, a novel \emph{projected} Nystr\"om approximation is defined and used to derive a bound on the expected risk for the corresponding approximation of semi-parametric regression. This work naturally extends similar results obtained for kernel ridge regression.
\end{abstract}

\begin{keywords}
determinantal point processes, semi-parametric regression, Nystr\"om approximation, implicit regularization
\end{keywords}


\section{Introduction}
Kernel methods provide a theoretically grounded framework for non-parametric regression and have been able to achieve excellent performance~\cite{BLESS,rudi2015less} in the last years. In applications that require more explainability, a parametric component, usually a polynomial, is added to the kernel regressor. This semi-parametric model has the best of both worlds, a parametric component that is understandable for the user and a non-parametric kernel component that boosts the accuracy of the prediction. Full-size kernel regression problems do no scale well with the size of data sets, for that reason several approximations have been studied. In particular, in the case of massive data sets, smart sampling and sketching methods have allowed to scale up kernel ridge regression~\cite{meanti2020kernel}, while preserving its statistical guarantees. Not only to reduce memory requirements, sampling methods are interesting to reduce the number of parameters of such models for enhancing prediction speed, for instance, in the context of embedded applications. In this paper, we consider the specific setting of semi-parametric regression which generalizes and improves the interpretability of kernel ridge regression for the applications where a parametric (e.g., polynomial) estimator can be an educated guess. We combine this semi-parametric approach with a custom sampling scheme based on Determinantal Point Processes, thereby allowing to obtain subsets of important and diverse points. This leads to similar results to the ones obtained for kernel ridge regression~\cite{fanuel2020diversity}, that is to say, DPP sampling implicitly regularizes (semi-parametric) kernel regression problems. 

\paragraph{Sampling with a determinantal point process} Discrete Determinantal Point Processes (DPPs) provide elegant ways to sample  random subsets $\calC\subseteq \{1,\dots , n\}$, sometimes called `coresets'~\cite{JMLR:v20:18-167}, so that the selected items are diverse. In a word, discrete DPPs are represented by a marginal kernel, that is, a $n\times n$ matrix $P$ with eigenvalues within $[0,1]$, giving the inclusion probabilities: if $\calC$ is a random subset distributed according to a DPP with marginal kernel $P$, then the inclusion probabilities are
\[
\Pr\left(\mathcal{E}\subseteq\calC \right) = \det ( P_{\mathcal{E}\mathcal{E}}),
\]
where $P_{\mathcal{E}\mathcal{E}}$ is the square submatrix obtained by selecting the rows and columns of $P$ indexed by $\mathcal{E}$. The off-diagonal entries of the marginal kernel are interpreted as similarity scores. Thus, a subset with a large probability is diverse. This can intuitively be seen thanks to the interpretation of the determinant of a positive definite matrix in terms of squared volume. In general, the expression of the probability for sampling a given subset $\Pr(\calC)$ is known but non trivial. Therefore, it is often very convenient to work with a $L$-ensemble, which is a DPP, denoted here by $ DPP_L(L)$, such that
\begin{equation}
    \Pr(\calC) = \det ( L_{\calC\calC})/\det(\mathbb{I}+L),\label{eq:Lensemble}
\end{equation}
 where $L$ is a $n\times n$ positive semi-definite matrix. The marginal kernel of an $L$-ensembles has the following simple expression
\[
P = L(L+\mathbb{I})^{-1},
\]
which is a matrix encountered in kernel rigde regression as we explain hereafter.
In the context of \emph{sketched} kernel ridge regression, sampling with $L$-ensemble DPPs yields very simple theoretical guarantees displaying an implicit regularization effect~\cite{fanuel2020diversity}. Before discussing semi-parametric regression, we briefly outline the known results about kernel ridge regression.

\paragraph{Sketching Kernel Ridge Regression}  Given input-output pairs $(\bm{x}_i,y_i)\in \mathbb{R}^d\times \mathbb{R}$ for $1\leq i\leq n$, kernel ridge regression (KRR) estimates a function of the form $f(\bmx) = \sum_{i=1}^n \alpha_i k(\bmx,\bmx_i)$ with the help of a positive semi-definite kernel function $k(\bmx, \bmxp)$ defined on $\mathbb{R}^d\times \mathbb{R}^d$, such as the Gaussian kernel $k(\bmx,\bmx') = \exp(-\|\bmx-\bmx'\|_2^2/\sigma^2)$.
Classically, the numerical solution of KRR relies on a $n\times n$ positive semi-definite kernel matrix
\[
K = [k(\bmx_i,\bmx_j)]_{1\leq i,j\leq n},
\]
constructed from the input data. Such a matrix can be potentially large if the size of the data set is large. Therefore, low rank approximations of $K$ have been developed~\cite{williams2001using,rudi2015less,fanuel2020diversity,fanuel2019nystr}.  In particular, an $L$-ensemble DPP can be used to select subsets $\calC =  \{c_1,\dots , c_{k}\}$ of $\{1,\dots , n\}$ in order to sample a subset of entries of $K$.  In this context, a natural choice is $L = K/\lambda$ for some $\lambda>0$. Then, it is customary to approximate $K$ thanks to the low rank Nystr\"om method which uses its submatrices, such as the square $k\times k$ submatrix $K_{\calC\calC}$. Sampling is conveniently done with the use of a $n\times k$ sampling matrix, that is obtained by selecting the columns of the identity matrix indexed by $\calC$ as follows
$
C = (\bm{e}_{c_1}  \cdots  \bm{e}_{c_k}),
$
where $\bm{e}_i$ denotes the $i$-th element of the canonical basis. In the case of the Nystr\"om approximation, one considers the pseudo-inverse of the \emph{sparse} matrix $C K_{\calC\calC} C^\top$ (see below), which is a $n\times n$ matrix whose entry $(i,j)$ is $K_{ij}$ if $i,j\in \calC$ and zero otherwise.  Explicitly, the \emph{common} Nystr\"om approximation of $K$ is defined as 
\begin{align}
K(C K_{\calC\calC} C^\top)^+ K = KC K_{\calC\calC}^{+} C^\top K, \tag{\text{Nystr\"om}}
\end{align}
where $(\cdot)^{+}$ denotes the Moore-Penrose pseudo-inverse\footnote{See Lemma~\ref{lem:pinv} in Appendix for a formal statement concerning the pseudo-inverse of this kind of matrices.}.
 This subsampling with $L$-ensembles has an implicit regularization effect~\cite{fanuel2020diversity}, which is based on the following expectation formula, also independently shown by~\cite{Mutny}:
\begin{equation}
    \mathbb{E}_\calC\left(CK_{\calC\calC}C^\top\right)^+= (K+\lambda\mathbb{I})^{-1},\label{eq:Exppinv}
\end{equation}
where $\calC\sim DPP_L(L)$  with $L = K/\lambda$ and $\lambda>0$. Varying $\lambda$ allows to vary the expected size of the subset and the amount of regularization. A similar identity has been revisited in the context of fixed-size $L$-ensemble DPP in~\cite{Schreurs2020diversity}. Albeit the exact sampling of a $L$-ensemble DPP has a time complexity $\mathcal{O}(n^3)$, the obtained expected error for Nystr\"om approximation $\mathbb{E}[K-KC K_{\calC\calC}^{+} C^\top K] = \lambda K(K+\lambda\mathbb{I})^{-1}$ for $\calC\sim DPP_L(L)$ provides a generalization of error bounds obtained with Ridge Leverage Score sampling~\cite{ElAlaouiMahoney,MuscoMusco,BLESS}. To the best of our knowledge, such results have not been obtained for semiparametric regression problems generalizing KRR.

\paragraph{Partial projection DPPs and semi-parametric regression} There are DPPs which are not $L$-ensembles, for instance, the projection DPPs for which the marginal kernel is a projector, i.e., a symmetric matrix such that $P^2= P$. In practice, it is often convenient to have a simple formula for the probability that a subset is sampled, i.e., $\Pr(\calC)$. Therefore, the elegant framework of `extended $L$-ensembles' has been introduced in~\cite{Barthelme2020} which provides a handy formula generalizing~\eqref{eq:Lensemble}. This formalism is used extensively in this paper to deal with partial-projection DPPs. In Layman's Terms, both partial projection DPPs and semi-parametric regression rely on mathematical expressions involving a sum of two objects living in orthogonal subspaces. This analogy is the main motivation to consider approximations of semi-parametric regression models with partial projection DPPs.
For a partial-projection DPP, denoted by $DPP(L,V)$, the marginal kernel is of the following form
\begin{equation}
    P = \mathbb{P}_V + \widetildeL (\widetildeL +\mathbb{I})^{-1},\label{eq:marginalKernel}
\end{equation}
where the matrix $V$ is a $n\times p$ matrix with full column rank and $\mathbb{P}_V = V (V^\top V)^{-1} V^\top$ is the projection on its column space, while
the matrix $\widetildeL = \widetildeK/\lambda$ with $\lambda>0$ is defined thanks to the $n\times n$ projected kernel 
\[
\widetildeK \triangleq \mathbb{P}_{V^\perp} K \mathbb{P}_{V^\perp} \text{ with } \mathbb{P}_{V^\perp} = \mathbb{I} - \mathbb{P}_V.
\]
In what follows, such a projected quantity is denoted by using a tilde. It is natural to assume $\widetildeL$ to be positive semi-definite so that the inverse matrix in~\eqref{eq:marginalKernel} is well-defined.

Extended $L$-ensembles, that we describe below, represent  partial-projection DPPs (see~\eqref{eq:marginalKernel}) by giving a convenient formula for $\Pr(\calC)$.
The reference~\cite{Barthelme2020} also points out a connection between extended $L$-ensembles and optimal interpolation in Section 2.8.2. This remark has motivated the following case study: the Nystr\"om approximation of semi-parametric regression problem.  
The problem consists in recovering a function from noisy function values   
\[y_i = z_i + \epsilon_i \quad \text{ with }  z_i = f(\bmx_i),\text{ and } 1\leq i\leq n, \]
where $\epsilon_i$ denotes i.i.d. $\mathcal{N}(0,\sigma^2)$ noise. 
We consider here the semi-parametric model (see Figure~\ref{fig:toy_semi} for an illustration)
\[
f(\bmx) = \sum_{i=1}^n \alpha_i k(\bmx,\bmx_i) + \sum_{m=1}^p \beta_m p_m(\bmx),
\]
where the first term is the non-parametric component associated to a \emph{conditionally} positive semi-definite kernel $k(\bmx, \bmxp)$, while the second term is the parametric component that is typically given by polynomials. The estimation problem amounts to solve
$
    \hat{f} = \arg\min_{f}\frac{1}{n} \sum_{i=1}^n \left(y_i -f(\bmx_i)\right)^2 +\gamma J(f),
$
for a suitable penalization functional $J(f)$ defined in Section~\ref{sec:thin-plate}, and given $\gamma>0$.
Then, the marginal kernel~\eqref{eq:marginalKernel} appears interestingly in the known formula for the in-sample estimate of a semi-parametric $\gamma$-regularized least squares problem,
\begin{equation*}
    \hat{\bm{z}} = P \bm{y} \quad \text{ with } \quad P = \mathbb{P}_V + \widetildeL (\widetildeL +\mathbb{I})^{-1},
\end{equation*}
with $\widetildeL = \widetildeK/(n\gamma)$ and where $\hat{z}_i$ denotes the estimated function value $z_i = f(\bmx_i)$ for $1\leq i \leq n$ and $\gamma>0$ is a regularization parameter.
In this setting, the $V$ and $K$ matrices used in~\eqref{eq:marginalKernel} are identified with the following matrices obtained from the parametric and non-parametric components:  
\[
V = [p_m(\bmx_i)]_{1\leq i\leq n, 1\leq m\leq p} \text{ and } K = [k(\bmx_i,\bmx_j)]_{1\leq i,j\leq n}. \]
\begin{figure}[h]
		\centering
	\begin{subfigure}[b]{0.4\textwidth}
			\includegraphics[width=\textwidth, height= 0.8\textwidth]{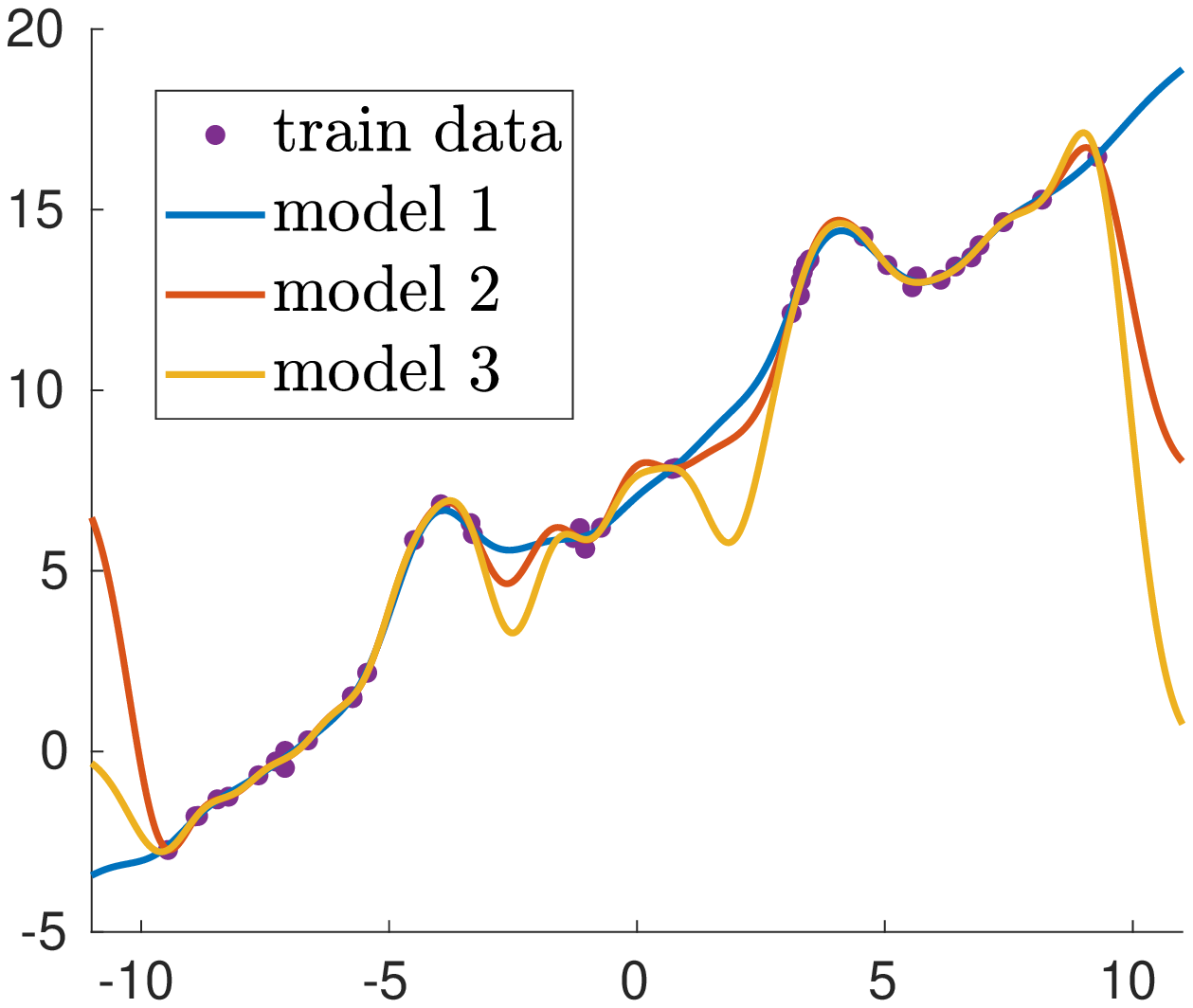}
			\caption{Function prediction}
			\label{fig:toy_semi_1}
		\end{subfigure}
		\begin{subfigure}[b]{0.4\textwidth}
			\includegraphics[width=\textwidth, height= 0.8\textwidth]{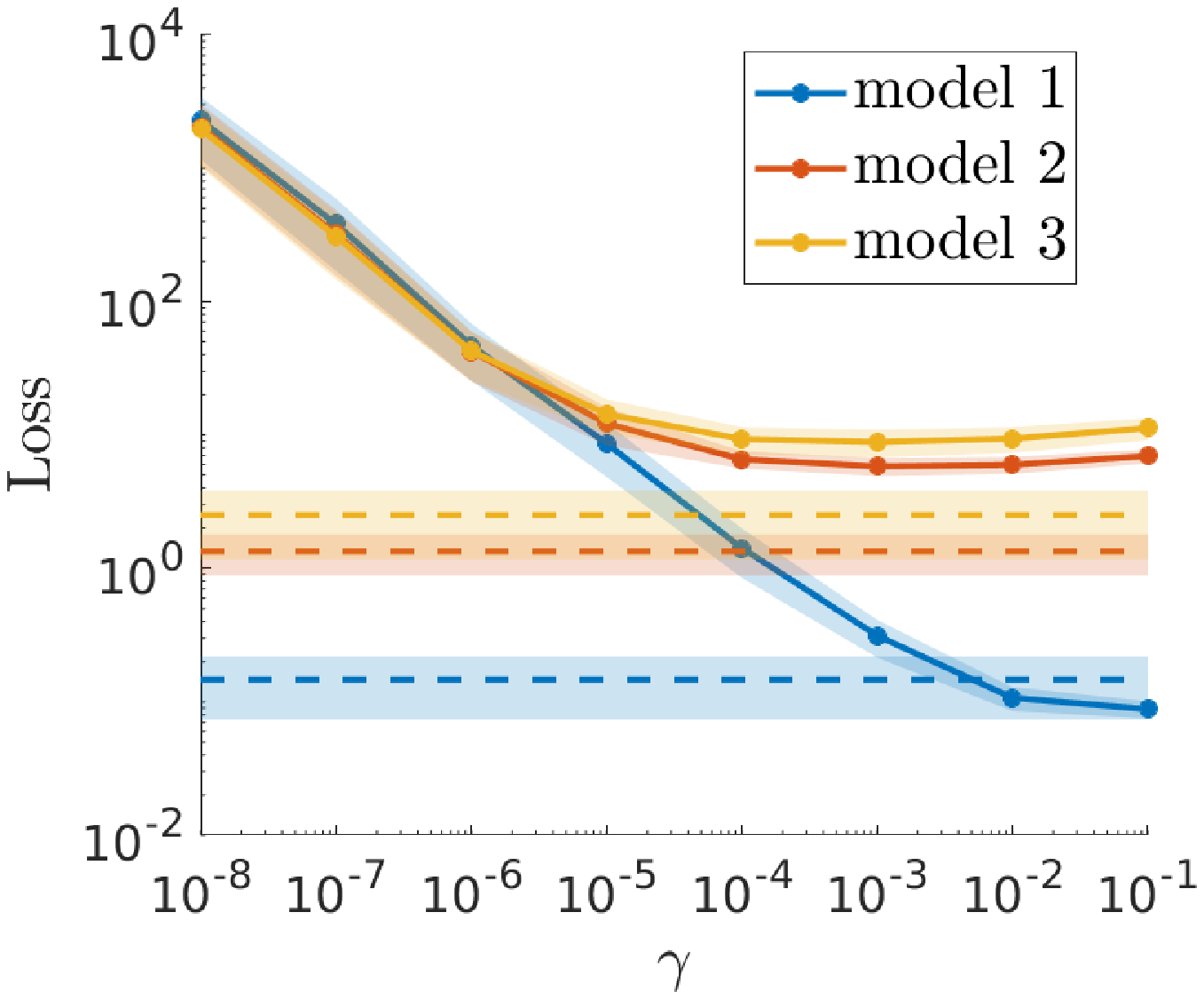}
			\caption{Expected loss (MSE)}
			\label{fig:toy_semi_2}
		\end{subfigure}
	\caption{A Toy example of semi-parametric regression with a Gaussian kernel ($d=1$). Figure \ref{fig:toy_semi_1} shows the training points and the estimated function with $\sigma = 1$ and best performing $\gamma$. Only the semi-parametric model, i.e. \texttt{model 1}: $\hat{f}(x) = \beta_1 + \beta_2 x + \sum_i \alpha_i k(x,x_i)$, predicts the linear trend in low density regions as well outside the training interval $[-10,10]$, contrary to \texttt{model 2} (LS-SVM, i.e. $\beta_2 =0$) and \texttt{model 3} (KRR, i.e. $\beta_1 = \beta_2 =0$). The MSE of each model with bandwidth $\sigma = 1$ is visualized as a function of the regularization parameter in Figure \ref{fig:toy_semi_2}. The dashed line shows the best performance when cross-validating over both $\gamma$ and $\sigma$. See Section~\ref{sec:Gaussian+Poly} for more details. }\label{fig:toy_semi}
\end{figure}
We now outline the contributions of this paper.
\subsection{Contributions}
\paragraph{Sketched semi-parametric regression}
First and foremost, a \emph{key} contribution of this paper is a formula analogous to~\eqref{eq:Exppinv} involving a sampling with a custom partial-projection DPP. As it is explained above, full-fledged semi-parametric regression involves two orthogonal components, associated to the matrices $V$ and $\mathbb{P}_{V^\perp} K \mathbb{P}_{V^\perp}$ respectively. To preserve this orthogonal decomposition for the sketched problem, we begin by defining \emph{a sampling of the rows} of the matrix $V$--which stores the non-parametric component of the regression problem--and a sampling of rows and columns of $K$ as follows
\[
V_\calC = [p_m(\bm{x}_i)]_{i\in\calC, 1\leq m\leq p} \text{ and } K_{\calC\calC} = [K_{ij}]_{ i,j\in \calC}.
\]
We analyse a sketched regression problem which is constructed so that the orthogonality between the parametric and non-parametric components is preserved. A key ingredient to analyse this problem is the projector $\mathbb{P}_{V^\perp_\calC}$ onto the orthogonal of the column space of $V_\calC = C^\top V$. 
Then, we address the following question:
\begin{center}
{\it 
What is the relationship between the sketched regression problem associated to $V_\calC$ and $\mathbb{P}_{V^\perp_\calC} K_{\calC\calC} \mathbb{P}_{V^\perp_\calC}$, and the full regression problem associated to $V$ and $\mathbb{P}_{V^\perp} K \mathbb{P}_{V^\perp}$?
}
\end{center}
Our main result, Theorem~\ref{thm:implicit_reg} given hereafter, implies the following identity for the expectation of the pseudo-inverse\footnote{Technically, we require here that $K$ is conditionally positive semi-definite definite with respect to $V$, i.e., $\mathbb{P}_{V^\perp} K \mathbb{P}_{V^\perp}$ is positive semi-definite.}
\begin{equation}
    \mathbb{E}_\calC\left(C \mathbb{P}_{V^\perp_\calC} K_{\calC\calC} \mathbb{P}_{V^\perp_\calC} C^\top\right)^+= \mathbb{P}_{V^\perp}(\widetildeK + \lambda \mathbb{I})^{-1}\mathbb{P}_{V^\perp}\quad \text{ with }\quad \widetildeK = \mathbb{P}_{V^\perp} K \mathbb{P}_{V^\perp},\label{eq:ExtendedExpectedpinv}
\end{equation}
where $\calC$ is sampled according to the partial-projection $DPP(K/\lambda,V)$. We emphasize that there is no trivial connection between $\mathbb{P}_{V^\perp}$ and $\mathbb{P}_{V^\perp_\calC}$, which are  the projectors onto the orthogonal of the column spaces of $V$ and $V_\calC$ respectively. 
The implicit regularization in~\eqref{eq:ExtendedExpectedpinv} is `conditional' since it occurs only within the subspace orthogonal to $V$. As in the case of $L$-ensembles, the real number $\lambda>0$ influences the expected subset size and the amount of regularization.
The identity~\eqref{eq:ExtendedExpectedpinv} is novel to the best of our knowledge and is also instrumental to derive two key contributions of this paper.
\paragraph{Projected Nystr\"om approximation} In Section~\ref{sec:LargeScaleRegression}, we define a projected Nystr\"om approximation $\widetilde{L(\mathcal{C})}$ of the projected kernel matrix  $\widetildeK$ under the assumption that $\widetildeK$ is positive semi-definite:
    \begin{align*}
                \widetilde{L(\mathcal{C})} \triangleq \widetildeK S(\calC)\left(S(\calC)^\top \widetilde{K}S(\calC)\right)^{+} S(\calC)^\top \widetildeK, \tag{\text{Projected Nystr\"om}}
    \end{align*}
    where the sketching matrix is $S(\calC) = CB(\calC)\in \mathbb{R}^{n\times (k-p)}$,
    with $B(\calC)\in \mathbb{R}^{k\times (k-p)}$ a matrix whose columns are an orthonormal basis of the orthogonal of the column space of $V_\calC$. The projected Nystr\"om  naturally extends the common Nystr\"om approximation to semi-parametric regression problems and is essential for scaling the model to larger data sets. The low rank approximation of the projected kernel matrix can be constructed conveniently with submatrices of the original kernel $K$. Indeed, it is not necessary to construct explicitly the sketching matrix $S(\calC)$. In comparison with the common Nystr\"om approximation, the sketching matrix involves here a projection since $B(\calC)B(\calC)^\top = \mathbb{P}_{V^\perp_\calC}$. Importantly, we give an expected error formula for the projected Nystr\"om approximation in Corollary~\ref{corol:Nyst_error},
    \[
    \mathbb{E}_\calC[\widetildeK - \widetilde{L(\mathcal{C})}] = \lambda \widetildeK (\widetildeK + \lambda\mathbb{I})^{-1}, \text{ where } \calC\sim DPP(K/\lambda,V).    
    \]
     Notice that the expected subset size of $\calC\sim DPP(K/\lambda,V)$ is given by
    \[
    \mathbb{E}_{\calC}[|\calC|] = p + d_{\rm eff}(\widetilde{K}/\lambda), \text{ with } d_{\rm eff}(\widetilde{K}/\lambda) = \Tr\left(\widetilde{K} (\widetilde{K}+ \lambda\mathbb{I}_n)^{-1} \right),
    \]
    where $p$ is the number of columns of $V$ and with $\widetildeK = \mathbb{P}_{V^\perp} K \mathbb{P}_{V^\perp}$.
    The interpretation of the above identities is that a small $\lambda>0$ yields a large number of samples and a small error on expectation. This extends similar results obtained independently in~\cite[Corollary 2]{fanuel2020diversity} and~\cite{ImprovedNys}.
    
    \paragraph{Stability result} In Section~\ref{sec:BoundRisk}, we give an expected risk bound for the estimator $\hat{\bm{z}}_{N}$ of the $\gamma$-regularized semi-parametric regression obtained with the projected Nystr\"om approximation, that is,
    \[
    \mathbb{E}_{\calC}\left[\sqrt{\frac{\mathcal{R}(\hat{\bm{z}}_{N})}{\mathcal{R}(\hat{\bm{z}})}}\right]\leq 1+ \frac{\lambda}{n\gamma} d_{\rm eff}(\widetilde{K}/\lambda),
    \text{ with }     \calC\sim DPP(K/\lambda,V),
    \]
where the expected risk of the estimator $\hat{\bm{z}}$ is 
$\mathcal{R}(\hat{\bm{z}}) \triangleq \mathbb{E}_\epsilon\|\hat{\bm{z}}-\bm{z}\|_2^2,
$ as it is detailed in Theorem~\ref{thm:Bound} hereafter. This stability result indicates that the estimation thanks to the Nystr\"om approximation cannot be arbitrarily worse than the estimation obtained without approximation.


Two different applications are considered within the penalized kernel regression framework. 1) The first case occurs  when the output values (the $y_i$'s) are initially unknown to the user and costly to retrieve. This is for example the case in an active learning approach where the data points have to be manually labelled or when measurements are expensive. 
This application is known as `discrete' experimental design and was previously studied, e.g., for linear regression in~\cite{derezinski2017unbiased,derezinski2020bayesian,derezinski2019minimax}. In this setting, one interpolates on a small number of selected landmark points to minimize the number of necessary labeled points. The question now poses itself: what is a good way of selecting points such that the performance is maintained together with a good conditioning of the linear system? In this paper, we propose a determinantal design approach. 2) The user has knowledge of the full response vector $\bm{y}$, but the (embedded) application requires a number of parameters smaller than $n+p$, or the number of data points $n$ is too large to solve the corresponding linear system. 
\paragraph{Random design regression} Incidentally, we provide in Section~\ref{sec:randomDesign} a discrete random design method for parametric problems of the type $ \min_{\bm{\beta}}\|V \bm{\beta} - \bm{y}\|_2^2$. Essentially, a partial projection DPP is used in order to sample a subset $\calC$ so that the estimator $ \hat{\bm{\beta}}(\calC) =\arg\min_{\bm{\beta}}\|V_\calC \bm{\beta} - \bm{y}_\calC\|_2^2$ is unbiased. These result are analogous to those of~\cite{JMLR:Warmuth} although the sampling algorithm is different. Our analysis directly follows from the main result in Theorem~\ref{thm:implicit_reg}, and provides an alternative method generalizing volume sampling which might be of independent interest.
\subsection{Related work}
While it is currently  an active topic of research in the context of deep neural networks, implicit regularization has been studied already previously in \cite{Mahoney12,MahoneyOrecchia}.
Recently, DPPs and implicit regularization also appeared in the context of double descent phenomena \cite{DerezinskiLiangManhoney}, while we refer to~\cite{DerezinskiManhoney} for a review. DPPs are useful methods to sample diverse subsets that have been applied in machine learning in variety of tasks, such as  diverse recommendations, summarizing text or search tasks~\cite{KuleszaTaskar}.  Implicit regularization is not specific to sampling, since it is also observed with Gaussian and Rademacher sketches of Gram matrices in~\cite{DerezinskiRP}, although a closed-form formula can  be advantageously derived with $L$-ensemble sampling.

As it was mentioned earlier in the introduction, large scale KRR has been successfully solved thanks to the Nystr\"om method combined with smart approximations of ridge leverage score (RLS) sampling in ~\cite{meanti2020kernel,BLESS} for data sets of several million points. Sampling with RLSs can be interpreted as an approximation of $L$-ensemble DPP sampling, where negative dependence is neglected~\cite{DerezinskiManhoney,fanuel2020diversity,Schreurs2020diversity}. In this spirit, new bounds on the Nystr\"om approximation errors with $L$-ensemble sampling, naturally generalize the bounds obtained with RLS sampling~\cite{ElAlaouiMahoney}. The results of the aforementioned papers have been extended to Column Subset Selection Problems (CSSP) in~\cite{ImprovedNys}.

Semi-parametric models are useful tools when some domain knowledge exists about the function to be estimated (e.g., a user wants to correct the data for a linear trend) or more understandability is required from a model~\cite{ruppert2003semiparametric,pmlr-v33-huang14}. These models combine a parametric part which is easy to understand and non-parametric term to improve performance. Semi-parametric models are used in some critical applications where a user wants to have an understandable model, without sacrificing accuracy~\cite{EspinozaConf,EspinozaJournal,smola1999semiparametric}.  

A natural application of conditionally positive semi-definite matrices is radial basis function interpolation~\cite{mouat2001fast}, which is an attractive method for interpolating and smoothing function values on scattered points in the plane. However, a potential problem is that their computation involves the solution of a linear system that is often ill-conditioned for large data sets.  Importantly, the paper~\cite{BeatsonLightBillings} studies a slightly different question, which mainly concerns the case of thin-plate splines. Thin-plate spline basis functions have been shown to be very accurate in medical imaging~\cite{carr1997surface}, surface reconstruction~\cite{carr2001reconstruction}, as well as other engineering applications~\cite{biancolini2017fast}. Given a fixed set of landmark points, the authors of~\cite{BeatsonLightBillings} propose an elegant method for choosing a suitable basis of the function space so that the linear system under study has an improved condition number. This strategy is also described in~\cite{WENDLAND2006231}.

Random design regression has been studied recently from the viewpoint of repulsive point processes, whereas optimal design has received already a lot of attention in statistics (see for example~\cite{GauthierPronzato} or~\cite{Pronzato} for a review). Prominent recent works using random designs involve volume sampling~\cite{JMLR:Warmuth,VolumeRescaled} or use the Bayesian perspective~\cite{derezinski2020bayesian}. Several extensions of DPP and volume sampling have been developed recently such as in~\cite{DerezinskiLiangManhoney} or in the generalization to Polish spaces in~\cite{Poinas}.

\subsection{Organization of the paper} In Section~\ref{sec:ExtendedL}, we introduce basic definitions. Then, the penalized semi-parametric regression and interpolation problems that we study in this paper are discussed in Section~\ref{sec:Regression}. There, we consider two case studies: thin-plate splines regression and Gaussian kernel semi-parametric regression. Next, in Section~\ref{sec:design}, we explain the implicit regularization effect of a determinantal design for optimal interpolation. In Section~\ref{sec:LargeScaleRegression}, a large scale semi-parametric regression problem and its approximation thanks to a determinantal sampling are discussed together with a custom Nystr\"om approximation. We also provide the stability result for the expected risk of this approximation, which was announced in the inroduction. Technical proofs and results are deferred to Appendix.
\subsection{Notations}
Matrices ($A$,$B$,$K$,\dots) are denoted by upper-case letters, whereas (column) vectors are denoted by bold lower-case letters, e.g., $\bmx = [x_{1 } , \dots, x_{d }]^\top \in \mathbb{R}^{d\times 1}$. As mentioned above, the canonical basis of $\mathbb{R}^n$ is written $\bm{e}_i$ for $1\leq i \leq n$. The constant $n\times 1$ vector of ones is $\bm{1}_n$. We write $A\succeq 0$ (resp. $A\preceq 0$)  if $A$ (resp. $-A$) is positive semi-definite (\emph{psd}), while $A\succeq B$ indicates that $A-B$ is \emph{psd}. The $n\times n$ identity matrix is written $\mathbb{I}_n$. The Moore-Penrose pseudo-inverse of a matrix $A$ is denoted here by $A^+$, and is given by $A^+ = (A^\top A)^{-1} A^\top$ if $A$ has full column rank.
We use calligraphic letters to denote sets, with the following exception $[n] = \{1,\dots , n\}$. In this paper, we consider the problem of sampling subsets $\calC\subseteq  [n]$. Then, it is convenient to use a sampling matrix $C\in\mathbb{R}^{n\times k}$ obtained by selecting the columns of the identity matrix corresponding to $\calC$.
The sampled submatrices are denoted as follows: $V_\calC = C^\top V$ where $V\in \mathbb{R}^{n\times p}$ and $A_{\calC\calC} = C^\top A C$ with $A\in \mathbb{R}^{n\times n}$. The characteristic function of a set $X$ is written $\mathbbm{1}(X)$. 

\section{Extended $L$-ensembles and definitions \label{sec:ExtendedL}}
To begin, we recall some definitions that were introduced in~\cite{Barthelme2020}. 
First, an essential element is the non-negative pair, which is used to define the parametric and non-parametric components of the regressors.
\begin{definition}[non-negative pair \cite{Barthelme2020}]\label{def:NNP}
Let $V\in \mathbb{R}^{n\times p}$ be a matrix with full column rank, and let $A\in \mathbb{R}^{n\times n}$ be a conditionally positive semi-definite matrix with respect to $V$, i.e., a matrix satisfying $\widetilde{A}\succeq 0$ where
\[
\widetilde{A} \triangleq \mathbb{P}_{V^\perp} A  \mathbb{P}_{V^\perp},
\]
with $\mathbb{P}_{V^\perp}$ the linear projector on the orthogonal of the space spanned by the columns of $V$. Then, the couple $(A,V)$ is called a Non-Negative Pair (NNP). 
\end{definition}
The NNPs are closely related to a certain class of kernel functions, called conditionally positive semi-definite, whose kernel matrices are \emph{psd} only within the orthogonal of a subspace. Several examples of these functions have been used in the context of optimal interpolation of scattered data. 
\begin{definition}[conditionally positive semi-definite kernels]
A kernel function $k(\bmx,\bmx')$ is conditionally positive semi-definite with respect to $\{p_m(\bmx)\}_{m=1}^p$, if for all finite sets $\{\bm{x}_i\}_{i=1}^n$ so that $V_{ij} = [p_j(\bmx_i)]$  is full column rank, the matrix $K_{ij} = [k(\bmx_i,\bmx_j)]_{i,j}$ is such that $\mathbb{P}_{V^\perp} K  \mathbb{P}_{V^\perp} \succeq 0$.
\end{definition}
The following example of NNP is classical. Other examples will be considered in the context of thin-plate spline interpolation.
\begin{example}
The kernel $k(\bmx,\bmx') = -\|\bmx-\bmx'\|_2^2$ is a conditionally positive semi-definite kernel with respect to the constant function. Indeed, let $\bmx_i\in \mathbb{R}^d$ with $1\leq i\leq n$. Then, the Gram matrix $K= [k(\bmx_i,\bm\bmx_j)]_{i,j}$ satisfies $\bm{v}^\top K \bm{v}\geq 0$ for all $\bm{v}\in \mathbb{R}^d$ such that $\bm{1}^\top_n \bm{v} = 0$. Then, $(K,\bm{1}_n)$ is a NNP. Other examples are given by the generalized multiquadrics mentioned in~\cite{Barthelme2020}.
\end{example}
Equipped with these definitions, a DPP associated to a NNP can be formally defined thanks to the following representation called extended $L$-ensemble.
\begin{definition}[extended $L$-ensemble \cite{Barthelme2020}]\label{def:ExtendedL}
Let $(L,V)$ be a NNP and $\widetildeL = \mathbb{P}_{V^\perp} L \mathbb{P}_{V^\perp}$. Then, an extended $L$-ensemble $\mathcal{Y}\sim DPP(L,V)$ satisfies
\begin{equation*}
    \Pr(\mathcal{Y}= \calC) = N^{-1} \times \det
\begin{pmatrix}
L_{\calC\calC} & V_\calC\\
V^\top_\calC  & 0\\
\end{pmatrix},
\end{equation*}
 where the normalization writes  $N = (-1)^p\det(\widetildeL+ \mathbb{I}) \det ( V^\top V)$.
\end{definition}
The connection between the marginal kernel of a partial projection DPP given in~\eqref{eq:marginalKernel} and extended $L$-ensembles is discussed in~\cite{Barthelme2020}. It is worth emphasizing that extended $L$-ensemble elegantly combine the probability mass functions of projection DPPs and $L$-ensemble (cfr.~\eqref{eq:Lensemble}). Indeed, the determinant of the block matrix in Definition~\ref{def:ExtendedL} can be conveniently calculated (see Lemma~\ref{lemma:Qperp} in Appendix), so that an equivalent expression writes
\begin{equation}
    \Pr(\mathcal{Y}= \calC) = \frac{\det\left(B(\calC)^\top \widetildeL_{\calC\calC}B(\calC)\right)}{\det(\mathbb{I} + \widetildeL)}\frac{\det(V_\calC^\top V_\calC)}{\det(V^\top V)},\label{eq:LensembleBis}
\end{equation}
where, as defined in the introduction, $B(\calC)\in \mathbb{R}^{k\times (k-p)}$ is a matrix whose columns form an orthonormal basis of the orthogonal of the column space of $V_\calC$ and conveniently can be calculated using the QR decomposition.
\begin{remark}[limit case]\label{remark:ProjDPP}
Consider the partial projection process $DPP(t\mathbb{I}, V)$ with $t>0$. Its probability mass function converges pointwisely to 
\[
\Pr(\mathcal{Y} =\calC) = \mathbbm{1}(|\calC| = p)\frac{\det(V_\calC^\top V_\calC)}{\det(V^\top V)},
\]
as $t\to 0$. Thus, the limit process is actually a projection DPP, i.e., a DPP with the following marginal kernel $P = \mathbb{P}_V \triangleq V(V^\top V)^{-1} V^\top$.
\end{remark}

The subset size of an extended $L$-ensemble is also a random variable. Explicitly, the expected size of $\calC\sim DPP(K/\lambda,V)$ is
\[
\mathbb{E}\left[|\calC|\right] = \Tr\left( \widetildeK(\widetildeK + \lambda \mathbb{I})^{-1}\right) + p.
\]
The parameter $\lambda>0$ allows to vary the sample size, i.e., a small $\lambda$ value yields a large sample size on expectation and conversely.  This can be shown thanks to the marginal kernel of an extended $L$-ensemble, which was given in~\eqref{eq:marginalKernel}.
\begin{remark}[sampling]
The sampling algorithm used in this paper relies on Algorithm~3 in~\cite{JMLR:v20:18-167} (see also~\cite{KuleszaTaskar}). In all the numerical simulations, we use a fixed-size DPP sampling. A fixed-size DPP is a DPP conditioned on a fixed subset size. Our choice is motivated by the asymptotic equivalence between DPPs and fixed-size DPPs~\cite{bernouilli2019}. Importantly, the number of operations to exactly sample a DPP is $\mathcal{O}(n^3)$. This cost can be reduced if the marginal kernel has a low rank structure. Nonetheless, several approximation techniques have been published recently~\cite{DPPwithoutLooking,DPP-FX} in order to alleviate the cost of sampling fixed-size DPPs, especially if the subset size is small.
\end{remark}
\begin{remark}[leverage scores]
Leverage scores~\cite{drineas} and $\lambda$-ridge leverage scores~\cite{ElAlaouiMahoney} have been designed for randomized matrix approximations with i.i.d. sampling. Ridge leverage score (RLS) sampling can be seen as an approximation of DDP sampling by neglecting the negative dependence~\cite{DerezinskiManhoney, fanuel2020diversity}.
In regards to the marginal kernel~\eqref{eq:marginalKernel}, the marginal probabilities of a partial-projection $\mathcal{Y} \sim DPP(\widetildeK/\lambda,V)$ are the sum of a leverage score and a $\lambda$-ridge leverage score:
\begin{equation*}
    \Pr(i\in \mathcal{Y}) = \underbrace{\bm{e}_i^\top V (V^\top V)^{-1}V^\top\bm{e}_i}_{\text{leverage score}} + \underbrace{\bm{e}_i^\top\widetildeK (\widetildeK +\lambda\mathbb{I})^{-1}\bm{e}_i}_{\text{ridge leverage score}}.
\end{equation*}
Hence, partial projection DPP sampling provide a generalization of leverage score sampling. To the best of our knowledge, the above combination of leverage scores and $\lambda$-ridge leverage score sampling has received up to now little interest in the literature.
\end{remark}
%
\section{Basics of penalized semi-parametric regression\label{sec:Regression}}
We begin by introducing the framework of semi-parametric regression with a \emph{psd} kernel while an example with a conditionally positive semi-definite kernel is given below.
\subsection{Semi-parametric regression with semi-positive definite kernels}

Let $(\mathcal{H}_1,\langle \cdot , \cdot \rangle_1)$ be a Reproducing Kernel Hilbert Space (RKHS) with kernel $k(\bmx,\bmx')$. Also, let $\mathcal{H}_0$ be a Hilbert space of dimension $p<\infty$ with a basis\footnote{Any finite dimensional vector space can be endowed with a suitable scalar product so that is also a RKHS. Let $(\mathcal{H}_0,\langle \cdot , \cdot \rangle_0)$ be a RKHS. Then, $\mathcal{H}_0$ is the orthogonal complement of $\mathcal{H}_1$ with respect to the following inner product: $\langle f_0 + f_1, g_0 + g_1\rangle \triangleq  \langle f_0, g_0\rangle_0 +  \langle f_1, g_1\rangle_{1}$.} given by $p_j(\bmx)$ for $1\leq j\leq p$ and such that that $\mathcal{H}_0 \cap \mathcal{H}_1 = \{0\}$.

The function space that we consider is the direct sum $\mathcal{H}_0 \oplus \mathcal{H}_1$.
By construction, every $f\in \mathcal{H}_0 \oplus \mathcal{H}_1$ can be decomposed uniquely as $f = f_0+ f_1$ with $f_0 \in \mathcal{H}_0$ and $f_1 \in \mathcal{H}_1$. Hence, we define the penalty functional 
\[
J(f) = \langle f_1, f_1\rangle_{1},
\]
so that its null space is naturally $\mathcal{N}_J \triangleq \{f\in \mathcal{H}_0 \oplus \mathcal{H}_1: J(f) = 0\} = \mathcal{H}_0$. The penalized least-squares~\eqref{eq:pen_reg} problem reads
\begin{equation}
    \min_{f\in\mathcal{N}_J\oplus \mathcal{H}_1}\frac{1}{n} \sum_{i=1}^n \left(y_i -f(\bmx_i)\right)^2 +\gamma J(f)\tag{$\text{PLS}$}.\label{eq:pen_reg}
\end{equation}
When $\mathcal{N}_J = \{0\}$, \eqref{eq:pen_reg} reduces to Kernel Ridge Regression (KRR).
By a classical argument\footnote{A `representer theorem', see, e.g.~\cite[Section 2.3.2]{GuBook} or~\cite{NosedalSanchez2012ReproducingKH}.}, the solutions of~\eqref{eq:pen_reg} are in the semi-parametric form
$
f(\bmx) = \sum_{i=1}^n \alpha_i k(\bmx,\bmx_i) + \sum_{m=1}^p \beta_m p_m(\bmx),
$
where the first term includes only a \emph{finite} number of terms.
By plugging the above expression into the minimization problem~\eqref{eq:pen_reg}, we find the discrete minimization problem
\begin{equation}
    \min_{\bma,\bmb}\frac{1}{n}\|\bm{y}-V \bmb-K\bma\|_2^2 + \gamma \bma^\top K \bma,\label{eq:min_prob}
\end{equation}
with $V_{im} = [p_m(\bmx_i)]$ for $1\leq i\leq n$ and $1\leq m\leq p$.
Notice that $\frac{1}{n} \sum_{i=1}^n \left(y_i -f(\bmx_i)\right)^2$ is strictly convex on $\mathcal{N}_J$ if $V$ is full column rank. Furthermore, \eqref{eq:pen_reg} is strictly convex on $\mathcal{H}_0 \oplus \mathcal{H}_1$ if $V$ is full column rank.
Therefore, we assume in what follows that the data set is `unisolvent' with respect to $(p_j)_j$, i.e., such that $V$ is full column rank. In this case, the solution of~\eqref{eq:pen_reg} is unique in the light of Theorem~\ref{thm:existence} in Appendix (see~\cite{GuBook}).
The first order optimality condition of the optimization problem~\eqref{eq:min_prob} reads
 \begin{align*}
    K\left[\left(K+n\gamma\mathbb{I}\right)\bma + V \bmb -\bm{y}\right] &= 0\\
    V^\top \left(K\bma+V\bmb - \bm{y}\right)& = 0,
    \end{align*}
where $K$ is positive semi-definite. We assume first that $K$ is non-singular.
As it can be verified by a simple substitution of $\bma$ and $\bmb$ in the first order conditions, the unique solution of~\eqref{eq:min_prob} is obtained by solving
\begin{align}
\begin{pmatrix}
K+n\gamma\mathbb{I} & V\\
V^\top & 0\\
\end{pmatrix}
\begin{pmatrix}
\bma\\
\bmb\\
\end{pmatrix}
= \begin{pmatrix}
\bm{y}\\
 \bm{0}\\
\end{pmatrix}.\label{eq:LinearSystem}
\end{align}
Second, if $K$ is singular, then~\eqref{eq:min_prob} can have several solutions $\bma^\star$ and $\bmb^\star$, which yield the same in-sample estimator $\hat{\bm{z}} =K\bma^\star + V \bmb^\star$ of the true function values  $z_i =f(\bmx_i)$ for $1\leq i\leq n$. In that case, we select the solution corresponding to the coefficients obtained by solving~\eqref{eq:LinearSystem}.
\subsection{Case study: the Gaussian kernel RKHS does not contain polynomials\label{sec:Gaussian+Poly}}
Let $X\subset\mathbb{R}^d$ be any set with non-empty interior and let $\mathcal{H}_1$ be the RKHS of the Gaussian kernel $k(\bmx,\bmx') = \exp(-\|\bmx-\bmx'\|_2^2/\sigma^2)$ defined on $X\times X$.   Under these assumptions, Theorem 2 in~\cite{Minh2010SomePO} states  that $\mathcal{H}_1$ does  not  contain  any  polynomial  on $X$, including  the non-zero constant function. We can then solve the functional minimization problem~\eqref{eq:pen_reg} where $\mathcal{H}_0$ is a finite set of polynomials, since $\mathcal{H}_0\cap \mathcal{H}_1= \{0\}$. 

\begin{example}[LS-SVM with the Gaussian kernel]
In particular, the constant $p_1(\bmx) = 1$ is not part of the RKHS of the Gaussian kernel. Therefore, Least-Squares Support Vector Machine (LS-SVM)~\cite{suykens:worldsci2002} with the Gaussian kernel is also a particular case of the above discussion. Its dual optimization problem indeed reads
$
    \min_{\bma,b}\frac{1}{n}\|\bm{y}-K\bma - b\bm{1}_n\|_2^2 + \gamma \bma^\top K \bma,
$
where the real $b$ is the so-called bias term.
\end{example}

In Figure~\ref{fig:toy_semi}, we illustrate the use of semi-parametric regression with a Gaussian kernel on a toy example consisting of a linear trend with two Gaussian bumps, i.e., $f(x) = x + 7 + 4 \, \mathrm{exp}(-(x - 4)^2) -  4 \, \mathrm{exp}(-(x + 4)^2)$. The training points are sampled uniformly within the interval $[-10,10]$ and the function samples are $y_i  = f(x_i) +\sfe_i$ where $\sfe_i\sim \mathcal{N}(0, 0.2)$ for $1\leq i\leq n$ and $n = 40$. The test set consists of 1000 points sampled uniformly in the interval $[-11,11]$. This construction allows to assess the ability of the estimated function to capture the linear trend of the ground truth.

Let $\mathcal{H}_1$ be the RKHS of the Gaussian kernel. We compare the results obtained by different choices of the space of polynomials $\mathcal{H}_0$. Specifically, the following models are estimated: \texttt{Model 1}: $\hat{f}(x) = \beta_1 + \beta_2 x + \sum_i \alpha_i k(x,x_i)$ (semi-parametric), \texttt{Model 2}: $\hat{f}(x) = \beta_1 + \sum_i \alpha_i k(x,x_i)$ (LS-SVM) and \texttt{Model 3}: $\hat{f}(x) = \sum_i \alpha_i k(x,x_i)$ (KRR). For all the models drawn in Figure~\ref{fig:toy_semi_1}, the bandwidth is fixed to $\sigma =1$ to match the width of the two Gaussians of the ground truth and the regularization parameter $\gamma$ takes values in the set $\{10^{-j}\}_{ j\in\{1,\dots, 8\}}$. For completeness, we also include the performance of each model after parameter tuning in Figure~\ref{fig:toy_semi_2} (dashed line), where both the bandwidth $\sigma \in \{0.1,0.2,\dots,0.9,1,2,3,\ldots,10\}$ and regularization parameter are determined by using $10$ fold cross-validation, on the above-mentioned grid. The simulation is repeated $25$ times and the error bars show the $97.5\%$ confidence interval.
In Figure~\ref{fig:toy_semi_1}, the function prediction in low density regions of both \texttt{model 2} and \texttt{model 3} quickly moves to the bias. This is avoided by including a parametric linear part such as in \texttt{model 1}. We emphasize that the differences between the three models are reduced within the interval $[-10,10]$ if the number of training samples becomes larger.

\subsection{Case study: conditionally positive semi-definite kernels and thin-plate splines\label{sec:thin-plate}}
 A well-known choice of penalty functional yielding to a linear system with a conditionally positive semi-definite kernel is associated to thin-plate splines, and is given by
\[
J_p^d(f) \triangleq \langle f,f\rangle_1\text{ with } \langle f,g\rangle_1 =  \sum_{\alpha_1+\dots+\alpha_d = m}\frac{p!}{\alpha_1!\dots\alpha_d!}\int_{\mathbb{R}^d}\frac{\partial^{p}f}{\partial x_{1 }^{\alpha_1} \dots \partial x_{d}^{\alpha_d}} \frac{\partial^{p}g}{\partial x_{1}^{\alpha_1} \dots \partial x_{ d}^{\alpha_d}}\rmd x_{1} \dots \rmd x_{d},
\]
where $J_p^d(f)$ is a squared semi-norm on $\{f:J_p^d(f)<\infty\}$ for a large enough regularity index with respect to the dimension, i.e., for $2p> d$. Its null space $\mathcal{N}_J$ consists of polynomials of maximal total order equal to $p-1$.
Following section 4.3.2 of~\cite{GuBook}, the penality functional $J_p^d(f)$ satisfies
\[
J_p^d\left(\sum_{i=1}^n \alpha_i k(\bmx,\bmx_i)\right) = \sum_{i,j=1}^n \alpha_i \alpha_j k(\bmx_i,\bmx_j),
\text{ for all } \bma\in\mathbb{R}^n \text{ such that } V^\top \bma=0,
\]
where $V$ is assumed to be full column rank and where the thin-plate spline kernel reads
\[
k(\bmx,\bmx')=\begin{cases}
\|\bmx-\bmx'\|^{2p-d}_2 \log \|\bmx-\bmx'\|_2 \text{ for even } d\\
\|\bmx-\bmx'\|^{2p-d}_2 \text{ for odd } d.
\end{cases}
\]
The kernel $k(\bmx,\bmx')$ is conditionally  positive semi-definite, namely, $\sum_{i,j} \alpha_i \alpha_j k(\bmx_i,\bmx_j)\geq 0,$ for all vectors satisfying $V^\top \bm{\alpha}=0$. Again, $V$ is assumed here to be full column rank.
By an similar argument as in the previous section (see~\cite{GuBook}), the solution of the least-squares penalized regression
\[
\min_{J_p^d(f)<\infty}\frac{1}{n} \sum_{i=1}^n \left(y_i -f(\bmx_i)\right)^2 +\gamma J_p^d(f),
\]
is of the form
$f(\bmx) = \sum_{i=1}^n \alpha_i k(\bmx,\bmx_i) + \sum_{j=1}^p \beta_j p_j(\bmx) \text{ with } V^\top \bma = 0. $
This result is proved in~\cite[Theorem 4 bis]{Duchon1976SplinesMR} in the case of optimal interpolation (i.e., $\gamma\to 0$). 
The substitution of $f(\bmx)$ into the objective above yields a similar discrete minimization problem as~\eqref{eq:min_prob} with the extra condition $V^\top \bma = 0$. In particular, a solution of this minimization problem is given by the same system as \eqref{eq:LinearSystem}
with the exception that, here, $K$ is \emph{conditionally} positive semi-definite. Let $V_\perp$ a matrix with orthonormal columns such that $\mathbb{P}_{V_\perp} = V_\perp V_\perp^\top$.
The solution of this linear system is given as 
\begin{align*}
    \bma^\star &= V_\perp (V_\perp^\top K V_\perp + n\gamma \mathbb{I}_{n-p})^{-1}V_\perp^\top \bm{y}\\
    \bmb^\star &= (V^\top V)^{-1} V^\top (\bm{y}- K\bma),
\end{align*}
where we used the fact that $V$ is full column rank and  $\mathbb{P}_{V_\perp} K \mathbb{P}_{V_\perp}\succeq 0$.
Notice that the full in-sample estimator is
$
\hat{\bm{z}} = \widetilde{K} ( \widetilde{K} + n\gamma \mathbb{I})^{-1}  \bm{y} + \mathbb{P}_V \bm{y}.
$
The RKHS associated to the thin-plate splines and a \emph{psd} kernel built from the conditionally positive semi-definite kernel are determined in~\cite{GuBook}, where this problem is put in the form of~\eqref{eq:pen_reg}. Then, the domain $\{f: J_p^d(f)<\infty\}$ is shown to be the direct sum of a RKHS and the space of polynomials of maximal total degree $p-1$. 
For completeness, let us mention that a discussion of optimal interpolation with conditionally positive semi-definite kernels, within the framework of Hilbertian subspaces of L. Schwartz, can be found in the PhD thesis~\cite{phdthesisGauthier}.
Consider now the use of extended $L$-ensembles for obtaining designs in the context of optimal interpolations.
\FloatBarrier
\section{Implicit regularization of optimal interpolation with a determinantal design\label{sec:design}}
In the context of the applications mentioned in the introduction, we consider here the problem of interpolating function values given a small training data set.
In this section, we assume that obtaining the responses $y_i$ is expensive and therefore we look for a discrete design $(\bmx_{i_\ell},y_{i_\ell})$ for $i_\ell\in \calC$.
An interpolator is obtained by taking the `ridgeless' limit $\gamma\to 0$ of the regression problem~\eqref{eq:min_prob}.
Let $\bm{k}_{\bmx} = [k(\bmx, \bmx_1) \dots k(\bmx, \bmx_n)]^\top \in \mathbb{R}^n$ and $\bm{p}_{\bmx} = [ p_1(\bmx)  \dots  p_p(\bmx)]^\top\in \mathbb{R}^p$ for all $\bmx\in \mathbb{R}^d$. The estimated interpolator on a subset $\calC$ reads
\begin{equation}
\hat{f}_0(\bmx,\calC) = 
\begin{pmatrix}
\bm{k}_{\bmx}^\top C & \bm{p}^\top_{\bmx}
\end{pmatrix}
\begin{pmatrix}
K_{\calC\calC} & V_\calC\\
V^\top_\calC & 0\\
\end{pmatrix}^{-1}\begin{pmatrix}
 \bm{y}_\calC\\
 \bm{0}\\
\end{pmatrix}.\label{eq:subsampledInterpolation}
\end{equation}
If $\calC\sim DPP(K,V)$, notice that: (i)
 the square matrix on the RHS of~\eqref{eq:subsampledInterpolation}  is non-singular almost surely, and (ii)
 $V_\calC$ is full column rank almost surely, in the light of Lemma~\ref{lemma:Qperp}.
One of our main results is that the linear system in~\eqref{eq:LinearSystem} is regularized on expectation when the subsets $\calC$ are sampled according to a suitable DPP.
\begin{theorem}[implicit regularization on expectation]
\label{thm:implicit_reg}
Let $(K,V)$ be a NNP. Let $\bm{u}_0, \bm{v}_0 \in \mathbb{R}^n$ and $\bm{u}_1, \bm{v}_1 \in \mathbb{R}^p$. Then, we have the following identity
\[
\mathbb{E}_{\calC\sim DPP(K,V)}\left[ \begin{pmatrix}
\bm{u}_{0,\calC}\\
\bm{u}_1
\end{pmatrix}^\top\begin{pmatrix}
K_{\calC\calC} & V_\calC\\
V^\top_\calC & 0\\
\end{pmatrix}^{-1} \begin{pmatrix}
\bm{v}_{0,\calC}\\
\bm{v}_1
\end{pmatrix}\right]= \begin{pmatrix}
\bm{u}_0\\
\bm{u}_1
\end{pmatrix}^\top\begin{pmatrix}
K + \mathbb{I}  &  V\\
V^\top  & 0\\
\end{pmatrix}^{-1}\begin{pmatrix}
\bm{v}_0\\
\bm{v}_1
\end{pmatrix}.
\]
\end{theorem}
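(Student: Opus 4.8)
\emph{Overall approach.} I would turn both sides into (ratios of) determinants of bordered ``saddle-point'' matrices and then exploit the fact that the determinantal form of $\Pr(\calC)$ cancels the determinant appearing in the denominator. Write $M_\calC=\begin{pmatrix} K_{\calC\calC} & V_\calC \\ V_\calC^\top & 0 \end{pmatrix}$ and $M=\begin{pmatrix} K+\mathbb{I} & V \\ V^\top & 0 \end{pmatrix}$; the matrix $M_\calC$ is invertible for $DPP(K,V)$-almost every $\calC$ by Lemma~\ref{lemma:Qperp}, and $M$ is invertible because $\widetildeK\succeq 0$ makes $V_\perp^\top(K+\mathbb{I})V_\perp\succeq\mathbb{I}\succ 0$. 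The starting point is the elementary Schur-complement identity $\bm a^\top M_0^{-1}\bm b=-\det\!\begin{pmatrix} M_0 & \bm b \\ \bm a^\top & 0 \end{pmatrix}\big/\det M_0$, valid for any invertible $M_0$ and conformable $\bm a,\bm b$. Applying it to the summand on the left (with $\bm a=(\bm u_{0,\calC}^\top,\bm u_1^\top)^\top$, $\bm b=(\bm v_{0,\calC}^\top,\bm v_1^\top)^\top$) and to the right-hand side (with $\bm a=(\bm u_0^\top,\bm u_1^\top)^\top$, $\bm b=(\bm v_0^\top,\bm v_1^\top)^\top$) reduces the claim to an identity between sums of plain determinants.

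\emph{Cancelling the denominator.} By Definition~\ref{def:ExtendedL}, $\Pr(\calC)=\det(M_\calC)/N$ with $N=(-1)^p\det(\widetildeK+\mathbb{I})\det(V^\top V)$; and by the block-determinant formula underlying Lemma~\ref{lemma:Qperp}, $\det M=(-1)^p\det\!\big(V_\perp^\top(K+\mathbb{I})V_\perp\big)\det(V^\top V)$, which equals $N$ since $\widetildeK=\mathbb{P}_{V^\perp}K\mathbb{P}_{V^\perp}$ and $V_\perp^\top K V_\perp$ have the same nonzero spectrum. Hence $\Pr(\calC)\,\bm a^\top M_\calC^{-1}\bm b=-\tfrac1N\det\!\begin{pmatrix} M_\calC & \bm b \\ \bm a^\top & 0 \end{pmatrix}$, the factor $\det M_\calC$ having cancelled, so the left-hand side of the theorem equals $-\tfrac1N\sum_{\calC}\det\!\begin{pmatrix} K_{\calC\calC} & V_\calC & \bm v_{0,\calC} \\ V_\calC^\top & 0 & \bm v_1 \\ \bm u_{0,\calC}^\top & \bm u_1^\top & 0 \end{pmatrix}$, and the right-hand side equals $-\tfrac1N$ times the analogous $(n+p+1)\times(n+p+1)$ determinant with $\calC\mapsto[n]$ and $K_{\calC\calC}\mapsto K+\mathbb{I}$.

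\emph{The combinatorial core.} It then remains to show that summing those bordered minors over subsets reproduces the big bordered determinant. For the right-hand determinant I would split its top-left block as $K+\mathbb{I}$ and apply the classical multilinear expansion $\det(A+D)=\sum_{S}\big(\prod_{i\in S}d_i\big)\det(A_{\bar S\bar S})$ for diagonal $D$, here with $D$ equal to the identity on the $[n]$-block of indices and to $0$ on the remaining $p+1$ indices: the surviving terms are indexed by $S\subseteq[n]$, each with weight $1$, and $A_{\bar S\bar S}$ is precisely the bordered minor attached to $\calD=[n]\setminus S$. Matching this with the previous display and reversing the Schur-complement identity then yields the theorem. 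One may, if convenient, first reduce to the case $\bm u_0=\bm v_0$, $\bm u_1=\bm v_1$ by bilinearity of both sides.

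\emph{Main obstacle.} I expect the real work to lie in reconciling the DPP sum — which runs only over the support of $DPP(K,V)$ — with the full sum over $\calC\subseteq[n]$ produced by the multilinear expansion, i.e.\ in showing that the extra subsets (those with $V_\calC$ rank-deficient, with $\mathbb{P}_{V^\perp_\calC}K_{\calC\calC}\mathbb{P}_{V^\perp_\calC}$ singular, or with $|\calC|<p$) do not disturb the identity. This is where the NNP hypothesis is used: since the columns of $CB(\calC)$ are orthogonal to $\range V$, one has $B(\calC)^\top K_{\calC\calC}B(\calC)=(CB(\calC))^\top\widetildeK\,(CB(\calC))\succeq 0$, so $M_\calC$ is a genuine saddle matrix whose inertia is governed by the rank of this positive semi-definite block; when $M_\calC$ has nullity at least two its one-row–one-column border is still singular by an adjugate/rank argument, and the low-cardinality and nullity-one configurations must be disposed of individually. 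That bookkeeping, rather than any single sharp estimate, is the crux.
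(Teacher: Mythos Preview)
Your approach is essentially the paper's own. Both convert $\Pr(\calC)\cdot\bm u_\calC^\top M_\calC^{-1}\bm v_\calC$ into a pure determinant (using $\Pr(\calC)=\det(M_\calC)/N$) and then sum over all $\calC\subseteq[n]$ via the identity $\sum_{\calC:\mathcal A\subseteq\calC}\det Q_{\calC\calC}=\det(Q+\mathds 1_{\bar{\mathcal A}})$ (Lemma~\ref{lemma:KT}), which is exactly your multilinear expansion specialised to $D=\mathds 1_{[n]}$ on the enlarged index set. The only difference is cosmetic: the paper invokes the matrix determinant lemma, writing $\det(M_\calC)\,\bm u^\top M_\calC^{-1}\bm v=\det(M_\calC)-\det(M_\calC-\bm v\bm u^\top)$ and then applying Lemma~\ref{lemma:KT} separately to $Q$ and to $Q-\bm v\bm u^\top$ on $[n+p]$, whereas you use the bordered Schur identity and apply the expansion once to the $(n{+}p{+}1)\times(n{+}p{+}1)$ bordered matrix. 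Both routes amount to summing $\bm u^\top\mathrm{adj}(M_\calC)\bm v$, and your identification $N=\det M$ is precisely Proposition~\ref{prop:norm}.

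Your ``main obstacle'' is genuine, but the paper simply does not engage with it. The paper writes the expectation directly as $\tfrac1N\sum_{\calC\subseteq[n]}\det(\widetilde C^\top Q\widetilde C)\cdot\bm u^\top\widetilde C(\widetilde C^\top Q\widetilde C)^{-1}\widetilde C^\top\bm v$, a sum over \emph{all} subsets, and applies the matrix determinant lemma term by term without ever discussing the singular $M_\calC$ or checking that zero-probability subsets contribute zero to the right-hand sum of determinants. So the bookkeeping you flag as the crux --- in particular the nullity-one configurations such as $|\calC|=p-1$ with $V_\calC$ of full row rank, for which $\mathrm{adj}(M_\calC)\neq 0$ and the bordered determinant need not vanish --- is entirely absent from the paper's argument. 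You are being more scrupulous than the source, not less.
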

\begin{proof}
The proof of this result is given in Section~\ref{sec:deferred} in Appendix and mainly relies on the matrix determinant lemma.
\end{proof} Notice that only the upper left block is regularized in the above matrix inverse.
Also, this identity remains valid when $K$ is replaced by $\widetildeK = \mathbb{P}_{V_\perp} K \mathbb{P}_{V_\perp}$.
Similarly to~\eqref{eq:subsampledInterpolation}, the $\gamma$-regularized regressor on the full data set obtained by solving~\eqref{eq:pen_reg} is
\[
\hat{f}_\gamma(\bmx) = 
\begin{pmatrix}
\bm{k}_{\bmx}^\top & \bm{p}^\top_{\bmx}
\end{pmatrix}
\begin{pmatrix}
K+n\gamma\mathbb{I} & V\\
V^\top & 0\\
\end{pmatrix}^{-1}\begin{pmatrix}
 \bm{y}\\
 \bm{0}\\
\end{pmatrix}.
\]
The upshot is that the interpolator obtained with this determinantal design is actually regularized on expectation, as a direct consequence of Theorem~\ref{thm:implicit_reg}. This result is formalized in Corollary~\ref{corol:Extended_reg} which generalizes a similar result for KRR in the ridgeless limit given in~\cite{Schreurs2020diversity}.
\begin{corollary}[ensemble of interpolators]\label{corol:Extended_reg}
Let $\calC\sim DPP(K/(n\gamma),V)$. We have
$
\mathbb{E}_{\calC}[\hat{f}_0(\bmx,\calC)] = \hat{f}_\gamma(\bmx) 
$
 for all $\bmx\in \mathbb{R}^d$.
\end{corollary}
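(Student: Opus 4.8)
The goal is to deduce the statement directly from Theorem~\ref{thm:implicit_reg}, so the plan is essentially to match the ingredients of the two formulas and let the regularization parameter bookkeeping do the rest. First I would write the subsampled interpolator $\hat f_0(\bmx,\calC)$ from~\eqref{eq:subsampledInterpolation} as a scalar of the form appearing on the left-hand side of Theorem~\ref{thm:implicit_reg}, identifying $\bm u_0 = \bm k_{\bmx}$, $\bm u_1 = \bm p_{\bmx}$ on the `test' side and $\bm v_0 = \bm y$, $\bm v_1 = \bm 0$ on the `data' side, so that $\bm u_{0,\calC} = C^\top \bm k_{\bmx}$ (which is $\bm k_\bmx^\top C$ transposed) and $\bm v_{0,\calC} = \bm y_\calC$, exactly the quantities in~\eqref{eq:subsampledInterpolation}. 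Taking the expectation over $\calC$ and invoking Theorem~\ref{thm:implicit_reg} then gives a closed-form expression involving the \emph{non-sampled} block matrix with a $+\mathbb{I}$ in the upper-left block.

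**Matching the regularization scale.** The one subtlety is that Theorem~\ref{thm:implicit_reg} is stated for $\calC\sim DPP(K,V)$ and produces $K+\mathbb{I}$, whereas the corollary wants $\calC\sim DPP(K/(n\gamma),V)$ and the full regressor $\hat f_\gamma$ which has $K + n\gamma\mathbb{I}$. So I would apply Theorem~\ref{thm:implicit_reg} not to $(K,V)$ but to the NNP $(K/(n\gamma),V)$ — which is still an NNP since $\widetilde{K/(n\gamma)} = \widetildeK/(n\gamma)\succeq 0$. This yields
\[
\mathbb{E}_{\calC\sim DPP(K/(n\gamma),V)}\!\begin{pmatrix}\bm k_{\bmx}^\top C & \bm p_{\bmx}^\top\end{pmatrix}\!\begin{pmatrix} K_{\calC\calC}/(n\gamma) & V_\calC\\ V_\calC^\top & 0\end{pmatrix}^{\!-1}\!\begin{pmatrix}\bm y_\calC\\ \bm 0\end{pmatrix} = \begin{pmatrix}\bm k_{\bmx}^\top & \bm p_{\bmx}^\top\end{pmatrix}\!\begin{pmatrix} K/(n\gamma)+\mathbb{I} & V\\ V^\top & 0\end{pmatrix}^{\!-1}\!\begin{pmatrix}\bm y\\ \bm 0\end{pmatrix}.
\]
Then I would clear the $1/(n\gamma)$ factor on each side by the block scaling identity $\begin{pmatrix} A/c & B\\ B^\top & 0\end{pmatrix}^{-1} = \begin{pmatrix} c\,\mathbb{I} & 0\\ 0 & \mathbb{I}\end{pmatrix}\begin{pmatrix} A & cB\\ cB^\top & 0\end{pmatrix}^{-1}\begin{pmatrix} c\,\mathbb{I} & 0\\ 0 & \mathbb{I}\end{pmatrix}$ applied with $c = n\gamma$ (or, equivalently, just note $(K/(n\gamma)+\mathbb{I})$ relates to $(K+n\gamma\mathbb{I})$ after factoring out $n\gamma$ and absorbing it into the $\bm y,\bm 0$ vector on the appropriate block); the upshot is that the left side becomes exactly $\mathbb{E}_\calC[\hat f_0(\bmx,\calC)]$ as written in~\eqref{eq:subsampledInterpolation}, since the $\bm 0$ block is insensitive to the rescaling, and the right side becomes $\hat f_\gamma(\bmx)$.

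**What I expect to be the main obstacle.** There is no deep obstacle — the result is a corollary in the genuine sense. The only thing to be careful about is the almost-sure invertibility of the sampled block matrix $\begin{pmatrix} K_{\calC\calC} & V_\calC\\ V_\calC^\top & 0\end{pmatrix}$ so that the left-hand integrand is well-defined; this is exactly observation (i)--(ii) noted just before the theorem (via Lemma~\ref{lemma:Qperp}, the probability mass of $DPP(K/(n\gamma),V)$ is supported on subsets for which $V_\calC$ has full column rank and $B(\calC)^\top\widetildeL_{\calC\calC}B(\calC)$ is nonsingular). I would state this in one sentence and then carry out the bookkeeping above. Finally I would remark that the same argument with $\bm v_0 = \bm e_i$, $\bm u_0 = \bm e_j$ recovers the in-sample statement, but that is not needed here.
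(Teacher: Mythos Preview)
Your plan is exactly the paper's: the corollary is meant to follow from Theorem~\ref{thm:implicit_reg} applied to the NNP $(K/(n\gamma),V)$ with the vector identifications you describe, and the paper gives no proof beyond calling it a direct consequence.

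One bookkeeping slip to fix. The block-scaling identity you wrote,
\[
\begin{pmatrix} A/c & B\\ B^\top & 0\end{pmatrix}^{-1} = \begin{pmatrix} c\,\mathbb{I} & 0\\ 0 & \mathbb{I}\end{pmatrix}\begin{pmatrix} A & cB\\ cB^\top & 0\end{pmatrix}^{-1}\begin{pmatrix} c\,\mathbb{I} & 0\\ 0 & \mathbb{I}\end{pmatrix},
\]
is false (conjugating the right-hand side back gives $A/c^2$ in the upper-left block). What you actually need is simpler and is what your parenthetical remark hints at: because $\bm v_1=\bm 0$, the linear system $\bigl(\begin{smallmatrix}A/c & W\\ W^\top & 0\end{smallmatrix}\bigr)\bigl(\begin{smallmatrix}\bma'\\ \bmb'\end{smallmatrix}\bigr)=\bigl(\begin{smallmatrix}\bm y\\ \bm 0\end{smallmatrix}\bigr)$ has solution $(\bma',\bmb')=(c\bma,\bmb)$ where $(\bma,\bmb)$ solves the system with $A$ in place of $A/c$. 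Consequently the cleanest way to run your argument is to apply Theorem~\ref{thm:implicit_reg} to $(K/(n\gamma),V)$ with $\bm u_0 = \bm k_{\bmx}/(n\gamma)$ (rather than $\bm k_{\bmx}$), $\bm u_1=\bm p_{\bmx}$, $\bm v_0=\bm y$, $\bm v_1=\bm 0$: then the left-hand integrand is literally $\hat f_0(\bmx,\calC)$ and the right-hand side is literally $\hat f_\gamma(\bmx)$, with no residual rescaling needed. Your remark on almost-sure invertibility via Lemma~\ref{lemma:Qperp} is correct and completes the argument.
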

The interpretation of Corollary~\ref{corol:Extended_reg}  goes as follows: an average of interpolators obtained with this random design gives a regularized regressor on the full data set.
We refer to~\cite{derezinski2020bayesian} for another discussion of connections between Bayesian experimental design and DPPs.
In the next section, we illustrate the use of a discrete determinantal design in the context of optimal interpolation with thin-plate splines.
\subsection{Illustration of a discrete determinantal design for thin-plate spline interpolation}

We illustrate the effect of subsampling for thin-plate spline interpolation on Franke's function, which is frequently used to demonstrate radial basis function interpolation problems. Franke's function has two Gaussian peaks of different heights, and a smaller dip:
\begin{equation*}
\begin{aligned}
f(\mathbf{x}) =& 0.75 \exp \left(-\frac{\left(9 x_{ 1 }-2\right)^{2}}{4}-\frac{\left(9 x_{ 2}-2\right)^{2}}{4}\right)+0.75 \exp \left(-\frac{\left(9 x_{ 1}+1\right)^{2}}{49}-\frac{9 x_{2}+1}{10}\right) \\
&+0.5 \exp \left(-\frac{\left(9 x_{1}-7\right)^{2}}{4}-\frac{\left(9 x_{ 2}-3\right)^{2}}{4}\right)-0.2 \exp \left(-\left(9 x_{ 1 }-4\right)^{2}-\left(9 x_{ 2}-7\right)^{2}\right).
\end{aligned}
\end{equation*}
The full training set consists of 5000 points sampled uniformly at random within $[0 , 1]^2$, the test set consists of 10000 points sampled from the same domain. The full interpolation problem is solved by using~\eqref{eq:LinearSystem} with $\gamma = 0$ and regression function $f(\bmx) = \sum_{m=1}^2 b_m x_{ m } + b_0 + \sum_{i=1}^n \alpha_i\|\bmx - \bmx_i\|^2_2 \mathrm{log}\|\bmx - \bmx_i\|_2$. The subsampled interpolation problem is solved by using~\eqref{eq:subsampledInterpolation}. In this simulation, we compare uniform sampling to the associated partial-projection DPP. The simulation is repeated for an increasing subset size, and the  performance is measured by the mean squared error (MSE) on the test set. In the experiments, we sample each time a fixed size partial-projection DPP (with $|\calC| = k$) to finer control the number of sampled landmarks. Every sampling is repeated 10 times and the averaged results are visualized in Figure~\ref{fig:Franke}. Error bars correspond to the $97.5\%$ confidence interval. For a given subset size, the partial-projection DPP outperforms uniform sampling.

\begin{figure}[ht]
		\centering
	\begin{subfigure}[b]{0.42\textwidth}
			\includegraphics[width=\textwidth, height= 0.9\textwidth]{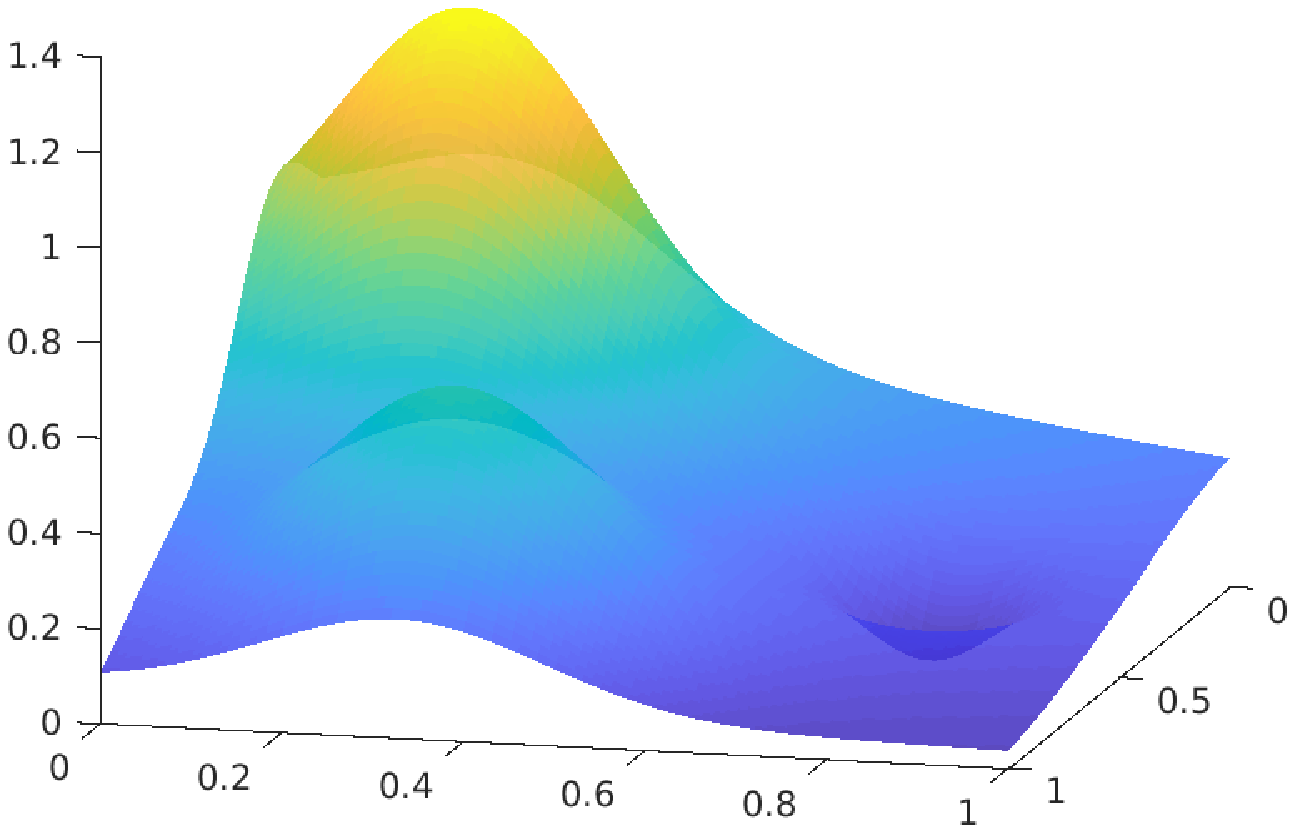}
			\caption{Franke's function}
			\label{fig:frank1}
		\end{subfigure}
		\begin{subfigure}[b]{0.4\textwidth}
			\includegraphics[width=\textwidth, height= 0.8\textwidth]{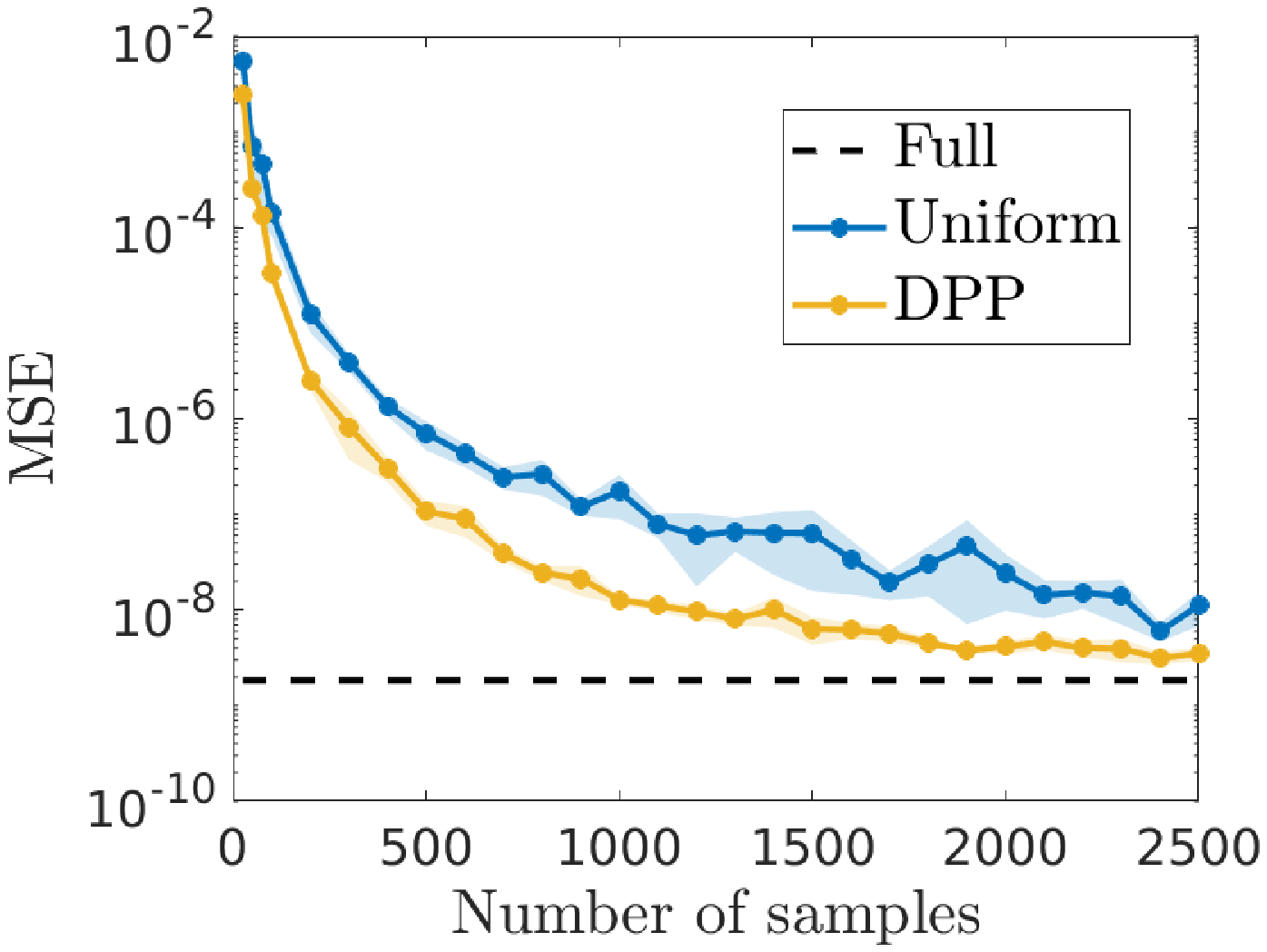}
			\caption{Performance}
			\label{fig:frank2}
		\end{subfigure}
	\caption{Figure \ref{fig:frank1} displays a mesh plot of the Franke's function. The MSE on the test set as a function of the subset size $|\calC|$ is given in Figure~\ref{fig:frank2}. }\label{fig:Franke}
\end{figure}

\subsection{Empirical results for Gaussian kernel interpolation\label{sec:EmpGaussian}}
We illustrate here the effect of extended $L$-ensemble sampling versus uniform sampling for subsampled interpolation on a number of UCI benchmark regression data sets: \texttt{Boston Housing}, \texttt{Abalone} and \texttt{Parkinson}. Both the regressors and response are standardized, afterwards the data set is split into a $50\%$ training and $50\%$ test set, and the performance is measured by the total MSE:  $\sum_{i=1}^n \|y_i - \hat{z}_i\|^2$. To obtain the regressor, we solve the system \eqref{eq:LinearSystem}. A Gaussian kernel $k(\bmx,\bmx')  = \exp(-\|\bmx-\bmx'\|_2^2/\sigma^2)$ is used with a linear regression component: $V = [X \enskip \bm{1}_n]$ where $X = [\bmx_1 \dots \bmx_n]^\top \in \mathbb{R}^{n\times d}$.
The squared bandwidth is determined by using the median heuristic~\cite{gretton2012kernel}, computed as: $
	 \hat{\sigma}^2 = \mathrm{median}\{\left\|x_{i}-x_{j}\right\|_2^{2} : 1 \leqslant i<j \leqslant n\}/2$.  Cross-validation is not possible as the full $\bm{y}$ is hidden from us in the experimental design setup. The simulation is repeated 25 times and the error bars show the $97.5\%$ confidence interval. The results are displayed in Figure~\ref{fig:optInt}. We observe that extended $L$-ensemble sampling improves the performance especially for smaller number of samples, compared to uniform sampling.  \\
	 
%
\begin{figure}[ht]
		\centering
	\begin{subfigure}[b]{0.32\textwidth}
			\includegraphics[width=\textwidth, height= 0.8\textwidth]{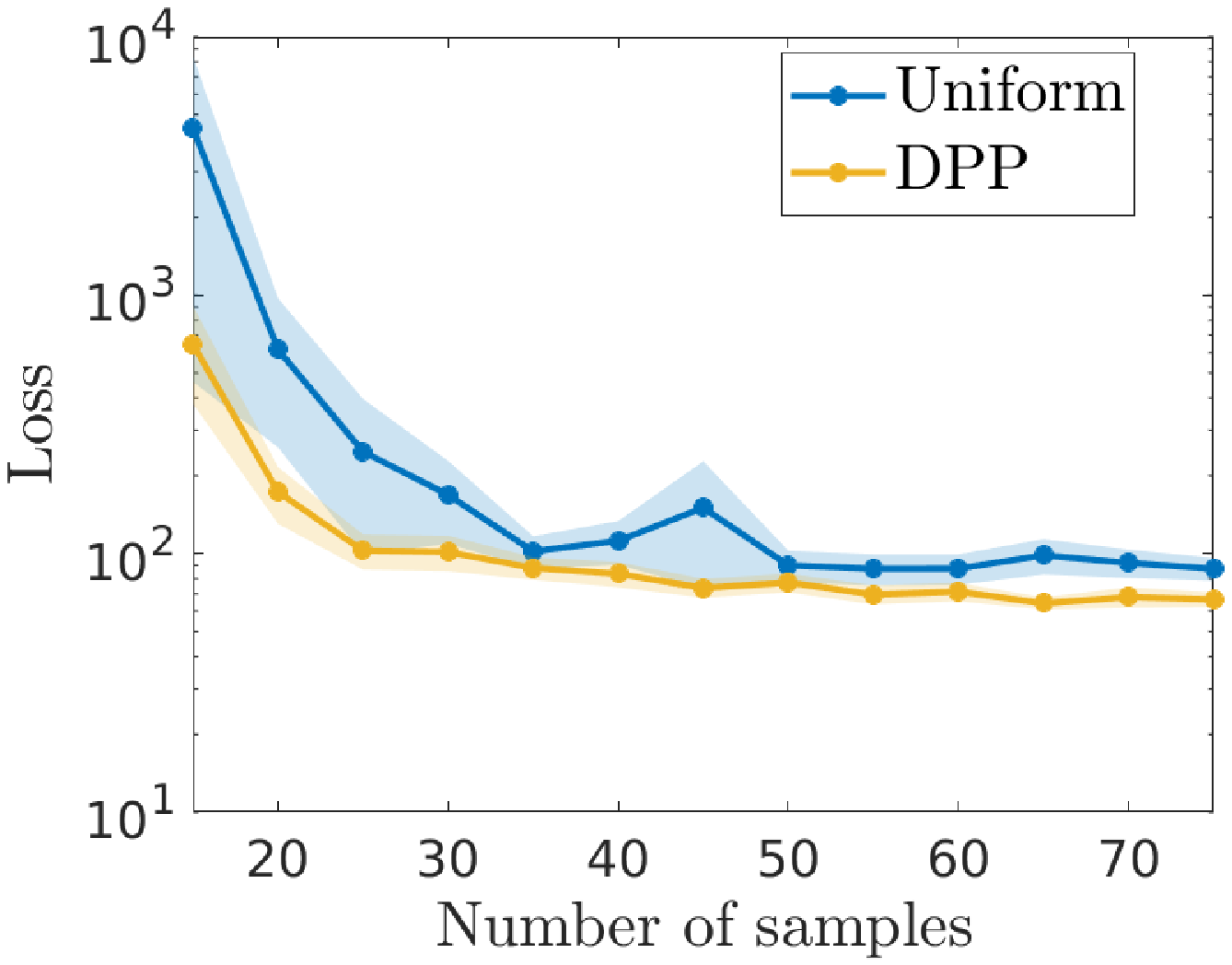}
			\caption{\texttt{Boston Housing}}
			\label{fig:optInt1}
		\end{subfigure}
		\begin{subfigure}[b]{0.32\textwidth}
			\includegraphics[width=\textwidth, height= 0.8\textwidth]{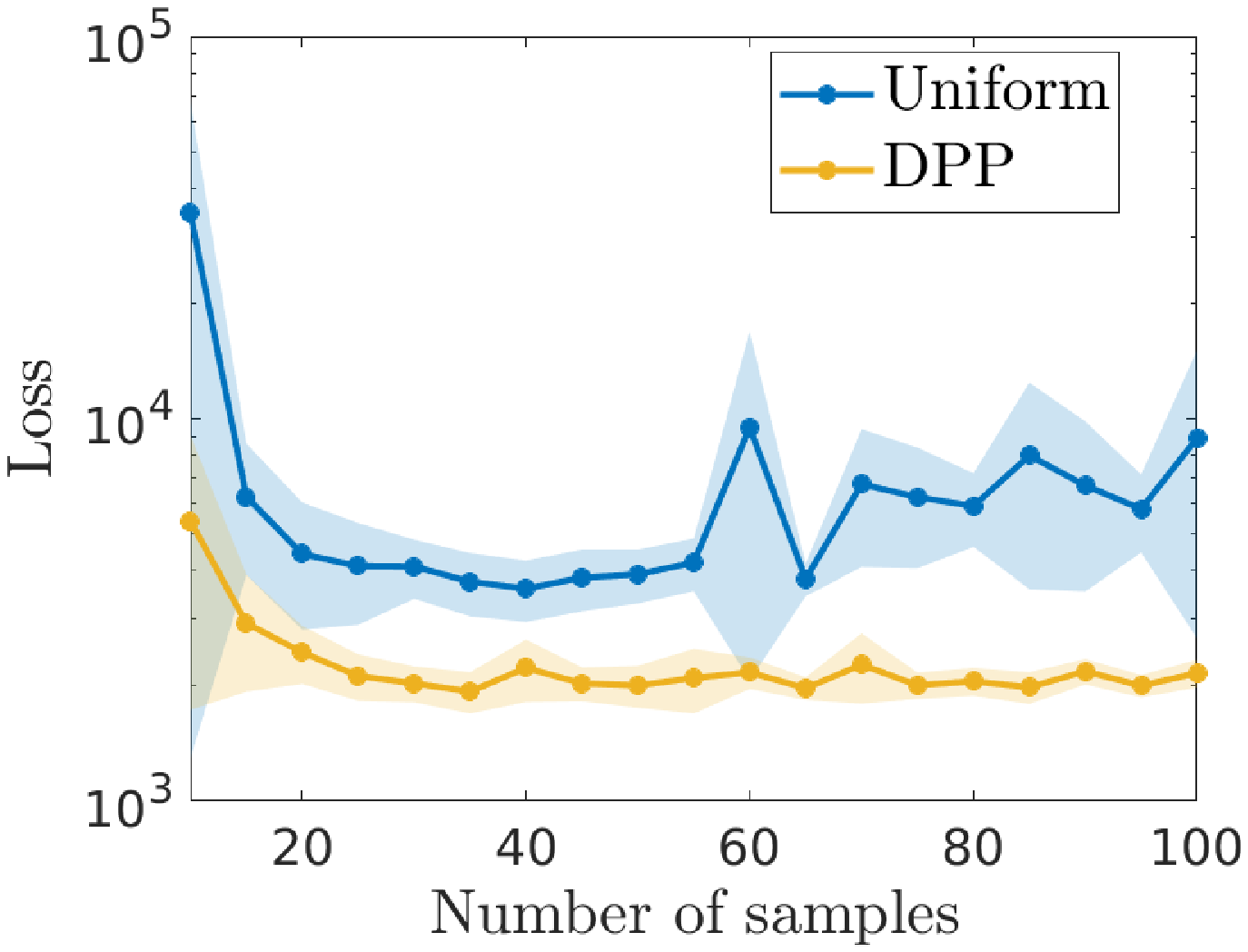}
			\caption{\texttt{Abalone}}
			\label{fig:optInt2}
		\end{subfigure}
		\begin{subfigure}[b]{0.32\textwidth}
			\includegraphics[width=\textwidth, height= 0.8\textwidth]{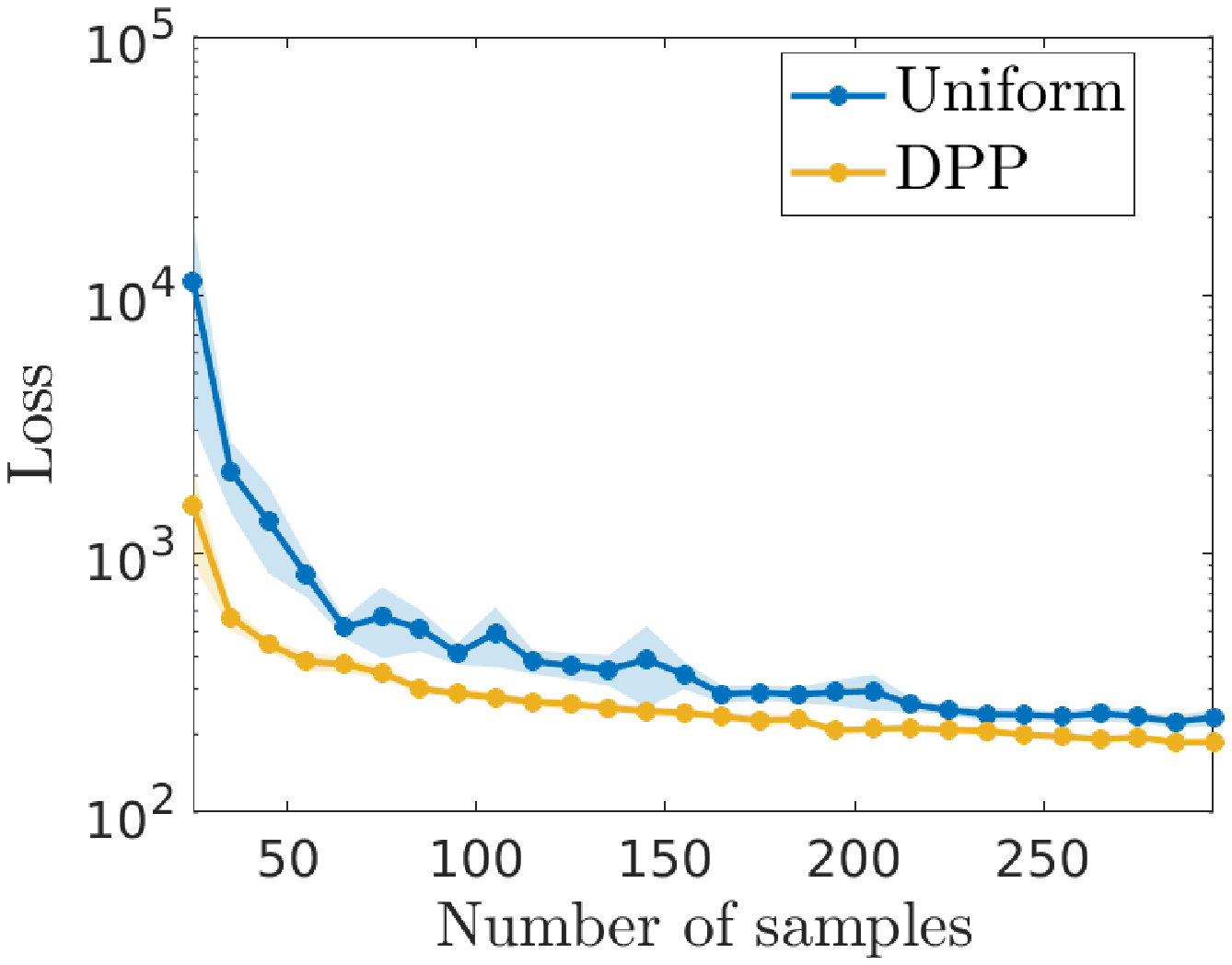}
			\caption{\texttt{Parkinson}}
			\label{fig:optInt3}
		\end{subfigure}
	\caption{The total loss (MSE) in function of the number of landmarks using uniform vs extended DPP sampling with the bandwidth estimated using the median heuristic.}\label{fig:optInt}
\end{figure}
\FloatBarrier
\section{Large scale regularized semi-parametric regression\label{sec:LargeScaleRegression}}
In this section, we consider the setting where the training points $(\bmx_i,y_i)$ with $1\leq i\leq n$ are abundant.
Penalized least-squares problems of the form of~\eqref{eq:pen_reg} have solutions which are determined by $n+p$ parameters where $n$ is the number of training points and $p$ is the number of functions used in the parametric component. As it was anticipated in the introduction, when $n$ is large, it can be interesting to reduce the number of parameters describing the estimated function, for instance, in order to allow for a faster out-of-sample prediction.  

\subsection{Preliminary results about implicit regularization}
For a $n\times n$ positive semi-definite kernel matrix $K$, the Nystr\"om approximation of $K$ associated to $\calC\subseteq [n]$ reads $K_\calC^\top K_{\calC\calC}^{+} K_\calC$. We extend here this definition to the case of conditionally positive semi-definite kernels.
First, we provide in Proposition~\ref{prop:ExtendedExpectedpinv} the expectation of an analogue of $CK_{\calC\calC}^{+}C^\top$ accounting for conditional positivity.
\begin{proposition}[implicit regularization of the projected kernel matrix]\label{prop:ExtendedExpectedpinv}
Let $\calC\sim DPP(K/\lambda,V)$. Then, we have
\begin{equation}
\mathbb{E}_\calC[I(\calC)]= (\widetildeK + \lambda\mathbb{P}_{V^\perp})^+ \text{ with  } I(\calC) = (C \mathbb{P}_{V^\perp_\calC} K_{\calC\calC} \mathbb{P}_{V^\perp_\calC} C^\top)^+.\label{eq:id0}
\end{equation}
Furthermore, it also holds that
\begin{align}
    &\mathbb{E}_\calC\left[(CV_\calC)^+  \left(\mathbb{I} - KI(\calC)\right)\right] = V^+\left(\mathbb{I} - K(\widetildeK +\lambda \mathbb{P}_{V^\perp})^+\right)\label{eq:id1}\\
      &\mathbb{E}_\calC\left[(CV_\calC)^+ \left(K - K I(\calC)K\right) (CV_\calC)^{+\top} \right] =  V^+ \left(K + \lambda\mathbb{I} - K (\widetildeK +\lambda \mathbb{P}_{V^\perp})^+ K \right) V^{+\top},\label{eq:id2}
\end{align}
\end{proposition}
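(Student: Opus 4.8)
The plan is to derive all three identities from Theorem~\ref{thm:implicit_reg}, by recognizing each of $I(\calC)$, $(CV_\calC)^{+}(\mathbb{I}-KI(\calC))$ and $(CV_\calC)^{+}(K-KI(\calC)K)(CV_\calC)^{+\top}$ as one of the three nonzero blocks of the saddle-point inverse $M_\calC^{-1}$, conjugated by the sampling matrix $C$, where $M_\calC\triangleq\begin{pmatrix}K_{\calC\calC}/\lambda & V_\calC\\ V_\calC^\top & 0\end{pmatrix}$ and $M_0\triangleq\begin{pmatrix}K/\lambda+\mathbb{I} & V\\ V^\top & 0\end{pmatrix}$. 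The first ingredient I would record is the block-inversion formula: for a NNP $(A,B)$ such that $\widetilde A\triangleq\mathbb{P}_{B^\perp}A\mathbb{P}_{B^\perp}$ is nonsingular on $\ker B^\top$ (so the saddle-point matrix is invertible),
\[
\begin{pmatrix}A & B\\ B^\top & 0\end{pmatrix}^{-1}=\begin{pmatrix}\widetilde A^{+} & (\mathbb{I}-\widetilde A^{+}A)B^{+\top}\\ B^{+}(\mathbb{I}-A\widetilde A^{+}) & -\,B^{+}(A-A\widetilde A^{+}A)B^{+\top}\end{pmatrix},\qquad B^{+}\triangleq(B^\top B)^{-1}B^\top,
\]
which I would verify from the four equations characterizing the inverse, using $B^\top\mathbb{P}_{B^\perp}=0$ and $\widetilde A^{+}\widetilde A=\mathbb{P}_{B^\perp}$ (or, equivalently, by writing the KKT system in an orthonormal basis of $\ker B^\top$). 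Here one should note that under $\calC\sim DPP(K/\lambda,V)$ the matrix $M_\calC$ is invertible and $V_\calC$ has full column rank, both almost surely, by Lemma~\ref{lemma:Qperp}; for $M_0$ invertibility is automatic since there $\widetilde A=\widetildeK/\lambda+\mathbb{P}_{V^\perp}$, which is nonsingular on $\ker V^\top$.

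Next I would do the elementary linear algebra that relocates $C$. From $C^\top C=\mathbb{I}_k$ and $V_\calC=C^\top V$ one gets $(CV_\calC)^{+}=V_\calC^{+}C^\top$, and for any symmetric $N$ one has $(CNC^\top)^{+}=CN^{+}C^\top$ (check the four Moore--Penrose conditions, cf.\ Lemma~\ref{lem:pinv}). With $N=\mathbb{P}_{V^\perp_\calC}K_{\calC\calC}\mathbb{P}_{V^\perp_\calC}$ this yields $I(\calC)=C(\mathbb{P}_{V^\perp_\calC}K_{\calC\calC}\mathbb{P}_{V^\perp_\calC})^{+}C^\top$, and since the $(1,1)$ block of $M_\calC^{-1}$ is $(\mathbb{P}_{V^\perp_\calC}(K_{\calC\calC}/\lambda)\mathbb{P}_{V^\perp_\calC})^{+}=\lambda(\mathbb{P}_{V^\perp_\calC}K_{\calC\calC}\mathbb{P}_{V^\perp_\calC})^{+}$, this reads $C[M_\calC^{-1}]_{11}C^\top=\lambda\,I(\calC)$. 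A short computation using $C^\top KC=K_{\calC\calC}$ and $(CV_\calC)^{+}=V_\calC^{+}C^\top$ then gives
\[
(CV_\calC)^{+}(\mathbb{I}-KI(\calC))=[M_\calC^{-1}]_{21}\,C^\top,\qquad
(CV_\calC)^{+}(K-KI(\calC)K)(CV_\calC)^{+\top}=-\lambda\,[M_\calC^{-1}]_{22},
\]
where in the $(2,1)$ identity the factors of $\lambda$ produced by $(N/\lambda)^{+}=\lambda N^{+}$ cancel.

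Then I would apply Theorem~\ref{thm:implicit_reg} to the NNP $(K/\lambda,V)$ (which is a NNP since $\widetildeK\succeq0$). Taking $\bm{u}_1=\bm{v}_1=0$ and letting $\bm{u}_0,\bm{v}_0$ range over $\mathbb{R}^n$ gives $\mathbb{E}_\calC[C[M_\calC^{-1}]_{11}C^\top]=[M_0^{-1}]_{11}$; taking $\bm{u}_0=\bm{v}_1=0$ gives $\mathbb{E}_\calC[[M_\calC^{-1}]_{21}C^\top]=[M_0^{-1}]_{21}$; taking $\bm{u}_0=\bm{v}_0=0$ gives $\mathbb{E}_\calC[[M_\calC^{-1}]_{22}]=[M_0^{-1}]_{22}$. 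Combined with the previous step and the block formula for $M_0^{-1}$ --- whose $\widetilde A=\widetildeK/\lambda+\mathbb{P}_{V^\perp}$ has $\widetilde A^{+}=\lambda(\widetildeK+\lambda\mathbb{P}_{V^\perp})^{+}$ --- these give $\mathbb{E}_\calC[I(\calC)]=(\widetildeK+\lambda\mathbb{P}_{V^\perp})^{+}$, i.e.\ \eqref{eq:id0}, and identities \eqref{eq:id1}, \eqref{eq:id2} with $K+\lambda\mathbb{I}$ in place of $K$ inside the outer $V^{+}(\cdot)$ and $V^{+}(\cdot)V^{+\top}$. The last step is to discard those extra $\lambda\mathbb{I}$'s: since $\widetildeK$ and $\mathbb{P}_{V^\perp}$ are supported on $\ker V^\top$, $\range\big((\widetildeK+\lambda\mathbb{P}_{V^\perp})^{+}\big)\subseteq\ker V^\top$, so $V^{+}(\widetildeK+\lambda\mathbb{P}_{V^\perp})^{+}=0$ and, by transposition, $(\widetildeK+\lambda\mathbb{P}_{V^\perp})^{+}V^{+\top}=0$; expanding $(K+\lambda\mathbb{I})(\widetildeK+\lambda\mathbb{P}_{V^\perp})^{+}(K+\lambda\mathbb{I})$ and sandwiching by $V^{+}\cdots V^{+\top}$ annihilates every term in which a $\lambda\mathbb{I}$ sits next to $(\widetildeK+\lambda\mathbb{P}_{V^\perp})^{+}$, leaving $V^{+}K(\widetildeK+\lambda\mathbb{P}_{V^\perp})^{+}KV^{+\top}$, and likewise $V^{+}(K+\lambda\mathbb{I})(\widetildeK+\lambda\mathbb{P}_{V^\perp})^{+}=V^{+}K(\widetildeK+\lambda\mathbb{P}_{V^\perp})^{+}$; this collapses the right-hand sides to exactly \eqref{eq:id1} and \eqref{eq:id2}.

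I expect the main obstacle to be bookkeeping rather than anything conceptual: getting the saddle-point block-inversion formula right in the pseudo-inverse (rank-deficient $A$) regime, making sure via Lemma~\ref{lemma:Qperp} that $M_\calC$ is invertible almost surely so that $\widetilde A^{+}$ really is its $(1,1)$ block, and tracking the conjugations by $C$ together with the various powers of $\lambda$ consistently. Once that scaffolding is in place, each identity follows from a single application of Theorem~\ref{thm:implicit_reg} plus the observation $\range\big((\widetildeK+\lambda\mathbb{P}_{V^\perp})^{+}\big)\subseteq\ker V^\top$.
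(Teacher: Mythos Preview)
Your proposal is correct and follows essentially the same route as the paper: both proofs compute the block inverse of the saddle-point matrix via the formula in Lemma~\ref{lemma:proj_inverse_blocks}, move the sampling matrix $C$ through the pseudo-inverse via Lemma~\ref{lem:pinv}, apply Theorem~\ref{thm:implicit_reg} blockwise, and then identify terms. The only cosmetic differences are that the paper normalizes $\lambda=1$ and rescales at the end (whereas you track $\lambda$ throughout), and that you spell out explicitly the simplification $V^{+}(\widetildeK+\lambda\mathbb{P}_{V^\perp})^{+}=0$ needed to turn $K+\lambda\mathbb{I}$ into $K$ on the right-hand sides, which the paper leaves implicit in ``identifying the different terms.''
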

The identity~\eqref{eq:id0} is equivalent to the expected pseudo-inverse formula~\eqref{eq:ExtendedExpectedpinv} announced in the introduction whereas the identities~\eqref{eq:id1} and \eqref{eq:id2} are incidental and are studied in more detail the the next section.
\begin{proof}
We begin by noticing that, thanks to Lemma~\ref{lem:pinv} in Appendix, it holds that
\[
(C \mathbb{P}_{V^\perp_\calC} K_{\calC\calC} \mathbb{P}_{V^\perp_\calC} C^\top)^+ = C( \mathbb{P}_{V^\perp_\calC} K_{\calC\calC} \mathbb{P}_{V^\perp_\calC})^+ C^\top \text{ and } (\widetildeK + \lambda\mathbb{P}_{V^\perp})^+ = \mathbb{P}_{V^\perp}(\widetildeK + \lambda \mathbb{I})^{-1}\mathbb{P}_{V^\perp}.
\]
Without loss of generality, we now take $\lambda = 1$, since the results can be recovered at the end for any $\lambda>0$ by a simple rescaling of $K$.
Let $B(\calC)$ be a matrix whose columns are an orthonormal basis of the column space of $(V_\calC)^\perp$. In this case, we have $P_{V_\calC^\perp} = B(\calC)B(\calC)^\top$. Then, we have the explicit expression 
\[
(\mathbb{P}_{V^\perp_\calC} K_{\calC\calC} \mathbb{P}_{V^\perp_\calC} )^+ = B(\calC)\left( B(\calC)^\top K_{\calC\calC}B(\calC)\right)^{-1}B(\calC),
\]
where we used once more Lemma~\ref{lem:pinv} (with $S = B(\calC)$ and $M = B(\calC)^\top K_{\calC\calC}B(\calC)$).
The identities~\eqref{eq:id0},~\eqref{eq:id1} and~\eqref{eq:id2} are obtained in what follows by merely calculating matrix inverse in the formula given in Theorem~\ref{thm:implicit_reg}.
For simplicity, define
\[
T(\calC) = 
\begin{pmatrix}
 C  &  0\\
0 & \mathbb{I}\\
\end{pmatrix}\begin{pmatrix}
 K_{\calC\calC} &  V_\calC\\
V^\top_\calC & 0\\
\end{pmatrix}^{-1}
\begin{pmatrix}
 C^\top  &  0\\
0 & \mathbb{I}\\
\end{pmatrix}, \text{ for } \calC\sim DPP(K,V).
\]
The above matrix inverse is now calculated by using Lemma~\ref{lemma:proj_inverse_blocks} in Appendix (with $A = K_{\calC\calC}$ and $W = V_\calC$).  Remark that $B^\top(\calC) K_{\calC\calC}B(\calC)$ is non-singular. Indeed, since $\calC\sim DPP(K,V)$,  we have
\[
0\neq \det \begin{pmatrix}
K_{\calC\calC}  & V_\calC\\
V^\top_\calC & 0\\
\end{pmatrix} = (-1)^p \det\left(V_\calC^\top V_\calC\right) \underbrace{\det\left(B^\top(\calC) K_{\calC\calC}B(\calC)\right)}_{\neq 0},
\]
where the determinant on the LHS is calculated thanks to Lemma~\ref{lemma:Qperp} in Appendix.
Thus, we find the following expression
\[
T(\calC)= \begin{pmatrix}
 C(\mathbb{P}_{V^\perp_\calC} K_{\calC\calC} \mathbb{P}_{V^\perp_\calC} )^+ C^\top   &  C(\mathbb{I}- (\mathbb{P}_{V^\perp_\calC} K_{\calC\calC} \mathbb{P}_{V^\perp_\calC} )^+ K_{\calC\calC}) V_{\calC}^{+\top} \\
V_{\calC}^{+}(\mathbb{I}- K_{\calC\calC}(\mathbb{P}_{V^\perp_\calC} K_{\calC\calC} \mathbb{P}_{V^\perp_\calC} )^+)C^\top   & -V_{\calC}^\top (K_{\calC\calC}-K_{\calC\calC}(\mathbb{P}_{V^\perp_\calC} K_{\calC\calC} \mathbb{P}_{V^\perp_\calC} )^+K_{\calC\calC})V_{\calC}^{+\top}\\
\end{pmatrix}.
\]
Next, by using Theorem~\ref{thm:implicit_reg}, it holds that
$
\mathbb{E}_{\calC}[T(\calC)] = \bigl(\begin{smallmatrix}
 K+\mathbb{I} &  V\\
V^\top & 0\\
\end{smallmatrix}\bigr)^{-1},
$
where the inverse matrix can be calculated thanks to  Lemma~\ref{lemma:proj_inverse_blocks}, as follows:
\[
\begin{pmatrix}
 K+\mathbb{I} &  V\\
V^\top & 0\\
\end{pmatrix}^{-1} = \begin{pmatrix}
 (\mathbb{P}_{V^\perp} (K+\mathbb{I)} \mathbb{P}_{V^\perp} )^+    &  (\mathbb{I}- (\mathbb{P}_{V^\perp} (K+\mathbb{I}) \mathbb{P}_{V^\perp} )^+ K) V^{+\top} \\
V^{+}(\mathbb{I}- K(\mathbb{P}_{V^\perp} (K+\mathbb{I}) \mathbb{P}_{V^\perp} )^+ )  & -V^+ (K+\mathbb{I}-K(\mathbb{P}_{V^\perp} (K+\mathbb{I}) \mathbb{P}_{V^\perp} )^+K)V^{+\top}\\
\end{pmatrix}.
\]
The desired result follows by identifying the different terms in the above block matrix, and by using the simply identity $(\mathbb{P}_{V^\perp} (K+\mathbb{I)} \mathbb{P}_{V^\perp} )^+ = (\widetildeK + \mathbb{P}_{V^\perp})^+.$
This completes the proof.
\end{proof}
The above cumbersome expressions in~\eqref{eq:id1} and~\eqref{eq:id2}  do not seem at first sight to have a straightforward interpretation. Nonetheless, for a special choice of partial projection DPP, the identities in~\eqref{eq:id1} and~\eqref{eq:id2} can be used in the context of random design regression in the same spirit as rescaled volume sampling~\cite{Derezinski2019UnbiasedEF}, as we discuss in the following interlude subsection.

\subsection{
Interlude: an unbiased estimator for random design regression with a DPP\label{sec:randomDesign}} 
In order to illustrate the interest of~\eqref{eq:id1} and~\eqref{eq:id2} in  Proposition~\ref{prop:ExtendedExpectedpinv}, we consider the connection between extended $L$-ensembles, volume sampling~\cite{VolumeRescaled} and projection DPPs (see e.g.,~\cite{Belhadji}). This yields a partial projection DPP extending the well-known volume sampling method, which has often been considered in the literature for example in randomized linear algebra~\cite{AvronBoutsidis}. Another related approach called `proportional volume sampling' has been discussed in~\cite{Poinas} for general Polish spaces, while a related Bayesian approach is given in~\cite{derezinski2020bayesian}.  The  generalization presented here can be used in order to find discrete random designs for parametric regression problems, as we explain below.
\paragraph{A random-size volume sampling} First, we recall the definition of volume sampling.
\begin{definition}[volume sampling]
Let $V\in \mathbb{R}^{n\times p}$ be a matrix with full column rank. The probability to volume sample a subset $\calC_0\sim \Vol_k(V)$ of size $k\geq p$ is
\[
\Pr(\calC_0) =\frac{\det(V_{\calC_0}^\top V_{\calC_0})}{{n-p\choose k-p}\det(V^\top V)}.
\]
For $k=p$, volume sampling is a projection DPP with marginal kernel $V(V^\top V)^{-1}V^\top$.
\end{definition}
Next, consider the partial projection $DPP(t\mathbb{I}, V)$ for $t>0$ corresponding to the marginal kernel 
\[
P = q \mathbb{P}_{V^\perp} + \mathbb{P}_V \text{ where } q=\frac{t}{1+t}.
\]
This process is in fact a rescaling of volume sampling, that is,
\begin{align}
    \Pr(\mathcal{Y} =\calC) =\frac{\det\left( t\mathbb{I}_{|\calC|}\right)}{ \det\left( \mathbb{P}_{V} + (1+t)\mathbb{P}_{V_\perp}\right)}  \frac{\det(V_\calC^\top V_\calC)}{\det(V^\top V)} =  q^{|\calC|-p}\left(1-q\right)^{n-|\calC|}\times\frac{\det(V_\calC^\top V_\calC)}{\det(V^\top V)},\label{eq:BernRescaledVolumeSampling}
\end{align}
where the first equality  uses the formula~\eqref{eq:LensembleBis} for the probability mass function of $\calC$.
There is a clear analogy with the volume sampling method of~\cite[Eq. (1)]{VolumeRescaled}, although the sample size of~\eqref{eq:BernRescaledVolumeSampling} is here a random variable which satisfies
\begin{equation}
\frac{\mathbb{E}_\calC[|\calC|] -p}{n-p} = \frac{t}{1+t}, \text{ for } \calC\sim DPP(t\mathbb{I}, V)\label{eq:RandomSize}
\end{equation}
while volume sampling always returns subsets of a fixed size. In the same spirit as in~\cite{VolumeRescaled}, we propose below a sampling strategy based on a combination of volume sampling and a Bernouilli process, and whose correctness is discussed in Proposition~\ref{prop:correctness}. 
\begin{definition}[Bernouilli process]
Let $0\leq q\leq 1$ and $m$ be a positive integer. A Bernouilli$(q)$ process over $\{0,1\}^m$ is a sequence of i.i.d. Bernouilli random variables with success probability $q$, so that the probability to observe any sequence $(s_1, \dots, s_m)$ is $q^{k} (1-q)^{m-k}$ where $k$ is the number of successes in the sequence.
\end{definition}
Commonly, we define a Bernouilli process over an ordered sequence of $m$ items by selecting an item of its corresponding entry in $(s_1, \dots, s_m)\in \{0,1\}^m$ is equal to unity. A finite Bernouilli$(q)$ process is a DPP with marginal kernel $P=q \mathbb{I}$ for $0\leq q\leq 1$. Naturally, the expected sample size of Bernouilli$(q)$ over a set of $m$ items is $mq$.
\begin{proposition}[two-stage sampling method for~\eqref{eq:BernRescaledVolumeSampling}]\label{prop:correctness}
A sample distributed according to $DPP(t\mathbb{I}, V)$ can be obtained as follows:\begin{enumerate}
    \item draw a volume sample $\calC_0\sim \Vol_p(V)$ of the set $[n]$,
    \item draw a sample  $\mathcal{R}\subseteq [n]\setminus \calC_0$ according to Bernouilli$\left(\frac{t}{t+1}\right)$,
\end{enumerate} and return $\calC = \calC_0\cup \mathcal{R}$.
\end{proposition}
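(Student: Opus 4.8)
The plan is to show that the two-stage sampling procedure produces a subset $\calC$ with exactly the probability mass function given in~\eqref{eq:BernRescaledVolumeSampling}, namely $\Pr(\mathcal{Y}=\calC) = q^{|\calC|-p}(1-q)^{n-|\calC|}\det(V_\calC^\top V_\calC)/\det(V^\top V)$ with $q = t/(t+1)$. First I would fix an arbitrary target subset $\calC\subseteq[n]$ with $|\calC| = \rmk \ge p$ (subsets of size smaller than $p$ have probability zero on both sides, since $V_\calC^\top V_\calC$ is then singular, so these cases are trivial). The key observation is a combinatorial decomposition: the event $\{\calC = \calC_0 \cup \mathcal{R}\}$ with $\calC_0$ and $\mathcal{R}$ disjoint happens precisely when $\calC_0\subseteq\calC$ is a volume sample of size $p$ and $\mathcal{R} = \calC\setminus\calC_0$ is the Bernoulli-selected subset of $[n]\setminus\calC_0$. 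Since the two stages are independent given $\calC_0$, I would write
\[
\Pr(\calC = \calC_0\cup\mathcal{R}) = \sum_{\substack{\calC_0\subseteq\calC \\ |\calC_0| = p}} \Pr(\calC_0\sim\Vol_p(V)) \cdot \Pr\big(\mathcal{R} = \calC\setminus\calC_0 \mid \calC_0\big).
\]

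Next I would substitute the two factors. By the definition of volume sampling with $k = p$, $\Pr(\calC_0) = \det(V_{\calC_0}^\top V_{\calC_0})/\big[{n-p\choose 0}\det(V^\top V)\big] = \det(V_{\calC_0}^\top V_{\calC_0})/\det(V^\top V)$. For the Bernoulli stage over the $n-p$ items of $[n]\setminus\calC_0$, the probability of selecting exactly the set $\calC\setminus\calC_0$, which has $\rmk - p$ elements, is $q^{\rmk-p}(1-q)^{(n-p)-(\rmk-p)} = q^{\rmk-p}(1-q)^{n-\rmk}$; crucially this factor does not depend on which $\calC_0\subseteq\calC$ was chosen, only on $\rmk$ and $p$. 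Pulling it out of the sum gives
\[
\Pr(\calC = \calC_0\cup\mathcal{R}) = \frac{q^{\rmk-p}(1-q)^{n-\rmk}}{\det(V^\top V)} \sum_{\substack{\calC_0\subseteq\calC \\ |\calC_0| = p}} \det(V_{\calC_0}^\top V_{\calC_0}).
\]

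The remaining step is to identify the inner sum. I would invoke the Cauchy–Binet formula applied to the $\rmk\times p$ matrix $V_\calC = C^\top V$ (which has full column rank $p$ when the right-hand side is nonzero): $\det(V_\calC^\top V_\calC) = \sum_{\mathcal{S}} \det\big((V_\calC)_{\mathcal{S},:}\big)^2$, where $\mathcal{S}$ ranges over all size-$p$ subsets of the $\rmk$ rows of $V_\calC$, and each such $(V_\calC)_{\mathcal{S},:}$ equals $V_{\calC_0}$ for the corresponding $\calC_0\subseteq\calC$. Hence $\sum_{|\calC_0|=p,\ \calC_0\subseteq\calC}\det(V_{\calC_0}^\top V_{\calC_0}) = \det(V_\calC^\top V_\calC)$, and substituting this back yields exactly~\eqref{eq:BernRescaledVolumeSampling}. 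Since~\eqref{eq:BernRescaledVolumeSampling} was already shown to be the probability mass function of $DPP(t\mathbb{I},V)$, this proves $\calC\sim DPP(t\mathbb{I},V)$.

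The main obstacle is conceptual rather than computational: recognizing that the volume-sampling normalization ${n-p\choose k-p}$ collapses to $1$ when $k = p$, and that the leftover Bernoulli factor is constant over the choice of $\calC_0$, so that the whole sum reduces cleanly via Cauchy–Binet. One should also take care with the degenerate cases ($|\calC| < p$, or $V_\calC$ rank-deficient) to confirm both sides vanish, and verify that $\calC_0$ and $\mathcal{R}$ are automatically disjoint by construction so no over-counting occurs.
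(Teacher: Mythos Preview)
Your proposal is correct and follows essentially the same approach as the paper: decompose the event $\{\calC_0\cup\mathcal{R}=\calC\}$ as a sum over size-$p$ subsets $\calC_0\subseteq\calC$, observe that the Bernoulli factor $q^{|\calC|-p}(1-q)^{n-|\calC|}$ is independent of $\calC_0$, and then apply Cauchy--Binet to collapse $\sum_{\calC_0\subseteq\calC,\,|\calC_0|=p}\det(V_{\calC_0}^\top V_{\calC_0})$ into $\det(V_\calC^\top V_\calC)$. Your treatment is in fact slightly more careful than the paper's, since you explicitly address the degenerate cases $|\calC|<p$ and rank-deficient $V_\calC$.
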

\begin{proof}
Let $\calC\subseteq [n]$ such that $|\calC|\geq p$. Consider all the decompositions  $\calC = \calR \cup \calC_0$ where $\calR = \calC\setminus \calC_0$, for all the subsets $\calC_0$ such that $|\calC_0| = p$. Notice that $|\mathcal{R}| = |\calC|-p$.
Then, the probability that $\calC$ is obtained by combining a volume sample $\calC_0$ with a sample $\mathcal{R}$ drawn by a Bernouilli process on $[n]\setminus \calC_0$ is
\[
\Pr(\calC) = \sum_{\calC_0\subseteq\calC :|\calC_0| = p} \underbrace{q^{|\calC|-p}\left(1-q\right)^{n-|\calC|}}_{\Pr(\calR = \calC\setminus \calC_0|\calC_0)} \underbrace{\det(V_{\calC_0}^\top V_{\calC_0})/\det(V^\top V)}_{\Pr(\calC_0) \text{(volume sampling)}} =  q^{|\calC|-p}\left(1-q\right)^{n-|\calC|}\det(V_{\calC}^\top V_{\calC}),
\]
where the second equality follows from the Cauchy-Binet identity $\sum_{\calC_0\subseteq\calC :|\calC_0| = p}\det(V_{\calC_0}^\top V_{\calC_0}) = \det(V_{\calC}^\top V_{\calC})$ (see, Lemma~\ref{lem:Cauchy-Binet} in Appendix).
\end{proof}
The following remarks are in order. For $\calC\sim DPP(t\mathbb{I}, V)$, Proposition~\ref{prop:ExtendedExpectedpinv} reduces to the simple identity
$
    \mathbb{E}_\calC[C \mathbb{P}_{V_\calC^\perp} C^\top]=\frac{t}{1+t}  \mathbb{P}_{V^\perp}.
$
 By using the expression of the marginal kernel of $DPP(t\mathbb{I}, V)$, an elementary manipulation yields the following expression
\begin{equation}
    \mathbb{E}_\calC[C V_\calC (CV_\calC)^+]= V V^+,\label{eq:ExpPproj}
\end{equation}
where $(CV_\calC)^+ = V_\calC^+ C^\top$.
This can be interpreted as follows: the expectation of the projector onto the column space of $C V_\calC$ is the projector onto the column space of $V$.
An identity analogous to~\eqref{eq:ExpPproj} has been obtained in the framework of matroids in~\cite{Lyons} which revisited a related identity in~\cite[Thm 1]{Maurer}.
In the same spirit as in~\eqref{eq:ExpPproj}, the identities in Proposition~\ref{prop:ExtendedExpectedpinv} simplify in the special case $\calC\sim DPP(t\mathbb{I}, V)$ to \begin{align}
      \mathbb{E}_\calC\left[(CV_\calC)^+\right] = V^+,\text{ and }\mathbb{E}_\calC[\underbrace{ (CV_\calC)^+ (CV_\calC)^{+\top}}_{(V_\calC^\top V_\calC)^{-1}}] = \frac{n-p}{\mathbb{E}_\calC[|\calC|] -p}\times \underbrace{V^+ V^{+\top}}_{(V^\top V)^{-1}},\label{eq:alphaExp}
\end{align}
where we used the expression relating $t>0$ with the expected subset size~\eqref{eq:RandomSize}. The above expressions in~\eqref{eq:alphaExp} can be used define an unbiased estimator for subsampled regression problems over a smaller set of inputs and outputs, as we explain below.
\paragraph{Discrete random design regression}  Let $\bm{y}\in \mathbb{R}^n$ and consider the (full-fledged) parametric regression problem 
\[
\bm{\beta}^\star = \arg\min_{\bm{\beta}}\|V \bm{\beta} - \bm{y}\|_2^2,
\]
with $V\in \mathbb{R}^{n\times p}$, and whose solution writes $\bm{w}^\star = V^{+}\bm{y}$. It is known that solving a smaller regression problem with sampled input-output pairs $(V_\calC, \bm{y}_\calC)$ regardless of the outputs for $\calC\subseteq [n]$ can lead to a biased estimator of the full regressor~\cite{JMLR:Warmuth}, for instance if $\calC$ is sampled uniformly at random. However, for $\calC\sim DPP(t\mathbb{I}, V)$  the estimator $\hat{\bm{\beta}}(\calC) = V_\calC^{+}\bm{y}_\calC$ of the quantity $V^{+}\bm{y}$ is unbiased, while the  matrix variance writes \begin{equation}
\mathbb{E}[(CV_\calC)^+ (CV_\calC)^{+\top}] -\mathbb{E}[(CV_\calC)^+]\mathbb{E}[(CV_\calC)^{+}]^\top= \frac{n-\mathbb{E}[|\calC|]}{\mathbb{E}[|\calC|]-p} \times V^+ V^{+\top},\label{eq:variance}
\end{equation} thanks to~\eqref{eq:alphaExp}.  A main difference with the designs of~\cite{JMLR:Warmuth,Derezinski2019UnbiasedEF} obtained with a fixed-size rescaled volume sampling is that here: (i) the subset size is random, and (ii) the above variance grows unbounded as $t\to 0$. On the contrary, a large value of $t$ promotes a large expected subset size and a small variance.

Let us recall some related results for volume sampling which yield formulae analogous to~\eqref{eq:alphaExp}.
On the one hand, the identity
$\mathbb{E}[(C V_\calC)^{+}] = V^{+}$ for $\calC\sim \Vol_k(V)$ has been obtained in the context of random design regression  in~\cite{JMLR:Warmuth} and in~\cite[Thm 2.10]{Derezinski2019UnbiasedEF}.
On the other hand,  the result of~\cite[Eq (1)]{NEURIPS2018_2ba8698b} yields the `second moment' $\mathbb{E}[(CV_\calC)^+ (CV_\calC)^{+\top}] = \frac{n-p+1}{k-p +1}V^+ V^{+\top}$ for $\calC\sim \Vol_k(V)$ which has a slightly different form in comparison with our result in~\eqref{eq:alphaExp}. Indeed, volume sampling for $k=p$ yields a finite variance estimator while the variance  of the estimator built with our partial projection DPP grows unbounded as the expected subset size goes to $p$.

As it is discussed in~\cite{Poinas}, it seems that random design methods only outperform existing optimal design methods~\cite{Pronzato} in very specific settings. We leave a more detailed empirical exploration of our random-size volume sampling for further work.
\subsection{Projected Nystr\"om approximation\label{sec:projNys}}
In the light of Proposition~\ref{prop:ExtendedExpectedpinv}, we now define the projected Nystr\"om approximation as a generalization of the common Nystr\"om approximation which was given in the introduction.
\begin{definition}[projected Nystr\"om approximation]\label{def:Nyst}
Let $(K,V)$ be a NNP and $\calC\subseteq [n]$. Let $\widetilde{K} = \mathbb{P}_{V^\perp}K\mathbb{P}_{V^\perp}$. The projected Nystr\"om approximation\footnote{The projected Nystr\"om approximation is denoted by a $\widetilde{L}$ (for Low rank matrix), although it is not used in the construction of an extended $L$-ensemble. The difference should be clear from the context.} of $\widetilde{K}$ is
\[
\widetilde{L(\mathcal{C})} = \mathbb{P}_{V^\perp}K_\calC^\top B(\calC)\left(B^\top(\calC) K_{\calC\calC}B(\calC)\right)^{+} B^\top(\calC) K_\calC \mathbb{P}_{V^\perp},
\]
where $B(\calC)\in \mathbb{R}^{k\times (k-p)}$ be a matrix whose columns are an orthonormal basis of $(V_\calC)^\perp$.
\end{definition}
Several remarks are in order. First, the pseudo-inverse in the above definition can be replaced by a matrix inverse almost surely if $\calC\sim DPP(K,V)$.
Second, it is worth emphasizing that, in Definition~\ref{def:Nyst}, the submatrices $K_\calC$ and $K_{\calC\calC}$ are sufficient to construct the projected Nystr\"om approximation while $\widetildeK$ should not be explicitly constructed. Therefore, the projected Nystr\"om approximation is also promissing in order to solve problems where the kernel matrix is too large compared to the computer memory.
Third, the projected Nystr\"om approximation given in Definition~\ref{def:Nyst} satisfies the following desirable property: the submatrices
$\widetilde{L(\mathcal{C})}_{\calC\calC}$ and $\widetildeK_{\calC \calC}$ match in the appropriate subspace, that is,
$
B^\top (\calC) \widetilde{L(\mathcal{C})}_{\calC\calC} B(\calC) = B^\top (\calC)  \widetildeK_{\calC \calC}  B(\calC),
$
for $\calC\subseteq [n]$. This is a consequence of Corollary~\ref{corol:Nyst_error} given below.
\begin{corollary}[Nystr\"om approximation error]\label{corol:Nyst_error}
Let $\calC\sim DPP(K/\lambda,V)$ with $\lambda>0$. Let $B(\calC)\in \mathbb{R}^{|\calC|\times (|\calC|-p)}$ be a matrix whose columns are an orthonormal basis of $(V_\calC)^\perp$. Then, we have the following identities
\begin{itemize}
    \item[(i)] $0\preceq \widetilde{L(\mathcal{C})} =\widetildeK_\calC^\top B(\calC)\left(B^\top(\calC) \widetilde{K}_{\calC\calC}B(\calC)\right)^{-1} B^\top(\calC) \widetildeK_\calC \preceq \widetildeK$,\\
    \item[(ii)]
    $\mathbb{E}[\widetildeK - \widetilde{L(\mathcal{C})}] = \lambda \widetildeK (\widetildeK + \lambda\mathbb{I})^{-1}\preceq \lambda \mathbb{I}.$
\end{itemize}
\end{corollary}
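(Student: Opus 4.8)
The plan is to derive both parts of Corollary~\ref{corol:Nyst_error} from Proposition~\ref{prop:ExtendedExpectedpinv} together with two reductions: a pseudo-inverse identity that rewrites the projected Nystr\"om approximation in terms of the quantity $I(\calC)$, and a simple ``sandwich'' argument for the operator inequalities. First I would observe that since $\mathbb{P}_{V^\perp}$ acts as the identity on the column space of $B(\calC)$ (because $B(\calC)$ has columns orthogonal to $V_\calC = C^\top V$, hence $C B(\calC)$ has columns orthogonal to $V$), we have $\mathbb{P}_{V^\perp} K_\calC^\top B(\calC) = \widetilde K_\calC^\top B(\calC)$ and likewise $B^\top(\calC) K_{\calC\calC} B(\calC) = B^\top(\calC) \widetilde K_{\calC\calC} B(\calC)$, which gives the first equality in~(i). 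Using Lemma~\ref{lem:pinv} to commute $C$ past the pseudo-inverse, one identifies
\[
\widetilde{L(\mathcal{C})} = \widetilde K \, I(\calC) \, \widetilde K, \qquad \text{where } I(\calC) = \left(C\mathbb{P}_{V^\perp_\calC} K_{\calC\calC}\mathbb{P}_{V^\perp_\calC} C^\top\right)^+,
\]
which is the crucial bridge to Proposition~\ref{prop:ExtendedExpectedpinv}.

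For part~(i), the nonnegativity $\widetilde{L(\mathcal{C})}\succeq 0$ is immediate since $B^\top(\calC)\widetilde K_{\calC\calC} B(\calC)$ is positive definite almost surely (its determinant is nonzero by the computation in the proof of Proposition~\ref{prop:ExtendedExpectedpinv}) and the approximation has the form $M^\top N^{-1} M$ with $N \succ 0$. For the upper bound $\widetilde{L(\mathcal{C})}\preceq \widetilde K$, I would work in the range of $\mathbb{P}_{V^\perp}$ and write $\widetilde K = \widetilde K^{1/2}\widetilde K^{1/2}$; then $\widetilde K - \widetilde{L(\mathcal{C})} = \widetilde K^{1/2}\big(\mathbb{P}_{V^\perp} - \widetilde K^{1/2} I(\calC)\widetilde K^{1/2}\big)\widetilde K^{1/2}$, and it suffices to check that $\widetilde K^{1/2} I(\calC)\widetilde K^{1/2}$ is an orthogonal projector (hence $\preceq \mathbb{P}_{V^\perp}$, the identity on the relevant subspace). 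That follows because $I(\calC) = C B(\calC)(B^\top(\calC)\widetilde K_{\calC\calC}B(\calC))^{-1}B^\top(\calC)C^\top$ acts as a one-sided inverse of $\widetilde K$ restricted to the column space of $C B(\calC)$: indeed $\widetilde K^{1/2} I(\calC)\widetilde K^{1/2}$ is symmetric and idempotent, which is a short direct verification using $B^\top(\calC) C^\top \widetilde K C B(\calC) = B^\top(\calC)\widetilde K_{\calC\calC} B(\calC)$.

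For part~(ii), I would take expectations in the identity $\widetilde{L(\mathcal{C})} = \widetilde K \, I(\calC)\, \widetilde K$ and invoke~\eqref{eq:id0} of Proposition~\ref{prop:ExtendedExpectedpinv}, giving $\mathbb{E}[\widetilde{L(\mathcal{C})}] = \widetilde K\,(\widetilde K + \lambda\mathbb{P}_{V^\perp})^+\,\widetilde K = \widetilde K\,\mathbb{P}_{V^\perp}(\widetilde K + \lambda\mathbb{I})^{-1}\mathbb{P}_{V^\perp}\,\widetilde K$. Since $\widetilde K = \mathbb{P}_{V^\perp}\widetilde K = \widetilde K \mathbb{P}_{V^\perp}$ and $\widetilde K$ commutes with $(\widetilde K+\lambda\mathbb{I})^{-1}$, this collapses to $\widetilde K^2(\widetilde K+\lambda\mathbb{I})^{-1}$, and therefore $\mathbb{E}[\widetilde K - \widetilde{L(\mathcal{C})}] = \widetilde K - \widetilde K^2(\widetilde K+\lambda\mathbb{I})^{-1} = \lambda\widetilde K(\widetilde K+\lambda\mathbb{I})^{-1}$ by a one-line algebraic manipulation. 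The final bound $\lambda\widetilde K(\widetilde K+\lambda\mathbb{I})^{-1}\preceq\lambda\mathbb{I}$ holds eigenvalue-wise since $x/(x+\lambda)\le 1$ for $x\ge 0$.

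The main obstacle I anticipate is bookkeeping with the pseudo-inverses and the projectors $\mathbb{P}_{V^\perp}$ versus $\mathbb{P}_{V^\perp_\calC}$: one must be careful that the appropriate Lemma~\ref{lem:pinv} hypotheses hold (full column rank of $C B(\calC)$, invertibility of $B^\top(\calC)\widetilde K_{\calC\calC} B(\calC)$ almost surely under $\calC\sim DPP(K/\lambda,V)$) so that ``$+$'' can legitimately be replaced by ``$-1$'' and pushed through $C$. Once these identifications are in place, everything reduces to the already-established formula~\eqref{eq:id0} plus elementary spectral calculus for the scalar function $x\mapsto x/(x+\lambda)$.
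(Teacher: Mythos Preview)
Your proposal is correct. Part~(ii) is essentially identical to the paper's argument: both express $\widetilde{L(\mathcal{C})}=\widetilde K\,I(\calC)\,\widetilde K$, invoke~\eqref{eq:id0} from Proposition~\ref{prop:ExtendedExpectedpinv}, and finish with the scalar identity $x - x^2/(x+\lambda)=\lambda x/(x+\lambda)\leq \lambda$.

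For part~(i) you take a genuinely different route. The paper factors $\widetilde K = AA^\top$, introduces an $\epsilon$-regularization $\big(B^\top(\calC)\widetilde K_{\calC\calC}B(\calC)+\epsilon\mathbb{I}\big)^{-1}$, applies the push-through identity (Lemma~\ref{lemma:push}) to move the inverse to the other side, and then lets $\epsilon\to 0$. You instead observe directly that with $W=\widetilde K^{1/2}CB(\calC)$ one has $W^\top W = B^\top(\calC)\widetilde K_{\calC\calC}B(\calC)$, so $\widetilde K^{1/2}I(\calC)\widetilde K^{1/2}=W(W^\top W)^{-1}W^\top$ is an orthogonal projector, hence $\preceq\mathbb{I}$, and sandwiching by $\widetilde K^{1/2}$ gives $\widetilde{L(\mathcal{C})}\preceq\widetilde K$. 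Your argument is more direct and avoids the limiting step; the paper's version has the minor advantage of not requiring the symmetric square root (any factorization $AA^\top$ works), but this is cosmetic since $\widetilde K\succeq 0$ is assumed throughout. Both derivations of the first equality in~(i) are the same, relying on $\mathbb{P}_{V^\perp}CB(\calC)=CB(\calC)$, which is Lemma~\ref{lem:projC} in the paper.
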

Notice that Corollary~\ref{corol:Nyst_error} shows that the expected error of the approximation naturally decreases if the expected number of sampled landmarks increases, that is, as $\lambda>0$ goes to zero. The proof of this result merely follows from Proposition~\ref{prop:ExtendedExpectedpinv}.
\begin{proof}[Proof of Corollary~\ref{corol:Nyst_error}]
(i) It is easy to check that $ \widetilde{L(\mathcal{C})}\succeq 0$. The first identity is simply obtained by using Lemma~\ref{lem:projC} in Appendix. To show that $ \widetilde{L(\mathcal{C})}\preceq \widetildeK$, it is sufficient to show the following fact: for all $\epsilon>0 $,
\begin{equation}
    \widetildeK_\calC^\top B(\calC)\left(B^\top(\calC) \widetilde{K}_{\calC\calC}B(\calC) + \epsilon \mathbb{I}\right)^{-1} B^\top(\calC) \widetildeK_\calC\preceq  \widetildeK,\label{eq:esp_Nyst}
\end{equation}
since by taking the limit $\epsilon\to 0$, we obtain $ \widetilde{L(\mathcal{C})}\preceq \widetildeK$.
To prove the inequality~\eqref{eq:esp_Nyst}, we define $\widetildeK = A A^\top$ and, thanks to the push-through identity (see Lemma~\ref{lemma:push} in Appendix), we show that
\[
A_\calC^\top B(\calC)\left(B^\top(\calC)A_\calC A_\calC^\top B(\calC) + \epsilon \mathbb{I}\right)^{-1} B^\top(\calC) A_\calC = \left(A_\calC^\top B(\calC)B^\top(\calC)A_\calC  + \epsilon \mathbb{I}\right)^{-1} A_\calC^\top B(\calC)B^\top(\calC) A_\calC \preceq \mathbb{I},
\]
where $A_\calC = C^\top A$.
(ii) The second identity follows from Proposition~\ref{prop:ExtendedExpectedpinv}. Consider first the case $\lambda = 1$ without loss of generality. 
The expectation of the projected Nystr\"om approximation reads 
\[
\mathbb{E}[\widetilde{L(\mathcal{C})}]  =\widetildeK \mathbb{E}_\calC\left[ C B(\calC)\left(B^\top(\calC) \widetilde{K}_{\calC\calC}B(\calC)\right)^{-1} B^\top(\calC)C^\top\right] \widetildeK = \widetildeK (\widetildeK + \mathbb{I})^{-1} \widetildeK,
\]
where  Proposition~\ref{prop:ExtendedExpectedpinv} was used for the last equality. This gives $\widetildeK - \mathbb{E}[\widetilde{L(\mathcal{C})}] = \widetildeK (\widetildeK + \mathbb{I})^{-1}$. The final result is obtained by replacing $\widetildeK$ by $\widetildeK/\lambda$.
\end{proof}
Corollary~\ref{corol:Nyst_error} only considers matrices which are only non trivial in the subspace $V_\perp$. 
Interestingly, the projected Nystr\"om approximation can be written as $\widetilde{L(\mathcal{C})} = \mathbb{P}_{V_\perp}L(\mathcal{C})\mathbb{P}_{V_\perp},
$
where, in the notations of Proposition~\ref{prop:ExtendedExpectedpinv}, the `unprojected' Nystr\"om approximation writes
\[
L(\mathcal{C}) = K I(\calC)K.
\]
The difference between $K$ and the latter matrix also satisfies simple identities given below.
\begin{corollary}
Let $\calC\sim DPP(K/\lambda,V)$ with $\lambda>0$. Then,  we have
\begin{align*}
    \mathbb{E}_\calC[ K-L(\calC)] &=  K - K(\widetildeK + \lambda\mathbb{P}_{V^\perp})^+K \\
    \mathbb{E}_\calC[(CV_\calC)^+ (K-L(\calC))] &= V^+ \mathbb{E}_\calC[ K-L(\calC)]\\
    \mathbb{E}_\calC[(CV_\calC)^+ \left(K-L(\calC)\right)(CV_\calC)^{+\top}] &= \lambda V^+ V^{+\top} + V^+\mathbb{E}_\calC[ K-L(\calC)] V^{+\top}.
\end{align*}
\end{corollary}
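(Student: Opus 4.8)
The plan is to deduce all three identities directly from Proposition~\ref{prop:ExtendedExpectedpinv}, by re-expressing $K - L(\calC)$ in terms of the random matrix $I(\calC)$. Since $L(\calC) = K I(\calC) K$ by definition, we have $K - L(\calC) = K - K I(\calC) K$. Because $\calC\sim DPP(K/\lambda,V)$, the matrix $V_\calC$ has full column rank almost surely (Lemma~\ref{lemma:Qperp}), so $(CV_\calC)^+ = V_\calC^+ C^\top$, every pseudo-inverse below is well defined, and all the algebraic manipulations are valid on a probability-one event; moreover the expectations are finite sums over subsets of $[n]$, so linearity of expectation and pulling deterministic matrices out of the expectation are immediate.

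For the first identity I would factor the deterministic $K$'s out and apply~\eqref{eq:id0}, giving $\mathbb{E}_\calC[K - L(\calC)] = K - K\,\mathbb{E}_\calC[I(\calC)]\,K = K - K(\widetildeK + \lambda\mathbb{P}_{V^\perp})^+ K$. For the second, I would rewrite $(CV_\calC)^+(K - L(\calC)) = (CV_\calC)^+(\mathbb{I} - K I(\calC))\,K$, take the expectation using~\eqref{eq:id1} to get $V^+(\mathbb{I} - K(\widetildeK + \lambda\mathbb{P}_{V^\perp})^+)\,K$, and observe that this is precisely $V^+$ times the right-hand side of the first identity, i.e., $V^+\,\mathbb{E}_\calC[K-L(\calC)]$. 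For the third, I would apply~\eqref{eq:id2} directly to obtain $V^+(K + \lambda\mathbb{I} - K(\widetildeK + \lambda\mathbb{P}_{V^\perp})^+K)V^{+\top}$ and then separate off the $\lambda\mathbb{I}$ term, so that the remainder is $V^+\,\mathbb{E}_\calC[K-L(\calC)]\,V^{+\top}$ by the first identity, leaving $\lambda V^+ V^{+\top} + V^+\,\mathbb{E}_\calC[K-L(\calC)]\,V^{+\top}$.

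I do not anticipate a genuine obstacle: the analytic work is already contained in Proposition~\ref{prop:ExtendedExpectedpinv}, and what remains is bookkeeping — matching each stated identity to the relevant part among~\eqref{eq:id0}--\eqref{eq:id2}, and re-using the first identity to put the right-hand sides in the compact form displayed in the statement. The only mild points of care are the almost-sure validity of the factorisation $(CV_\calC)^+ = V_\calC^+ C^\top$ and of $L(\calC) = K I(\calC) K$ inside the expectation, together with the elementary identities $V^+ \mathbb{I} V^{+\top} = V^+ V^{+\top}$ and $V^+(\mathbb{I} - M)K = V^+(K - MK)$ used to rewrite the right-hand sides; all of these are routine and already available in the surrounding text.
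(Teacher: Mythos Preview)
Your proposal is correct and follows essentially the same approach as the paper: the paper's proof simply says that the three identities are obtained by multiplying~\eqref{eq:id0} by $K$ on both sides, multiplying~\eqref{eq:id1} by $K$ on the right, and reformulating~\eqref{eq:id2} using~\eqref{eq:id0}, which is precisely what you do, with the added (and appropriate) care about almost-sure validity and linearity of expectation.
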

Additional properties of the projected Nystr\"om approximation error $\widetildeK - \widetilde{L(\mathcal{C})}$ can be obtained by merely replacing $K$ by $\widetildeK$ in the above expressions.
\begin{proof}
The results simply follow from the identities obtained respectively by: (i) multiplying~\eqref{eq:id0} by $K$ on the left and on the right, (ii) multiplying~\eqref{eq:id1} by $K$ on the right, (iii) reformulating~\eqref{eq:id2} by using~\eqref{eq:id0}.
\end{proof}

\paragraph{Empirical results for matrix Nystr\"om approximation}
We illustrate here the effect of extended $L$-ensemble sampling versus uniform sampling for Nystr\"om matrix approximation on the UCI benchmark data sets\footnote{\url{https://archive.ics.uci.edu/ml/index.php}}: \texttt{Breast Cancer}, \texttt{Mushroom} and \texttt{Wine Quality}.
\begin{figure}[t]
		\centering
	\begin{subfigure}[b]{0.32\textwidth}
			\includegraphics[width=\textwidth, height= 0.8\textwidth]{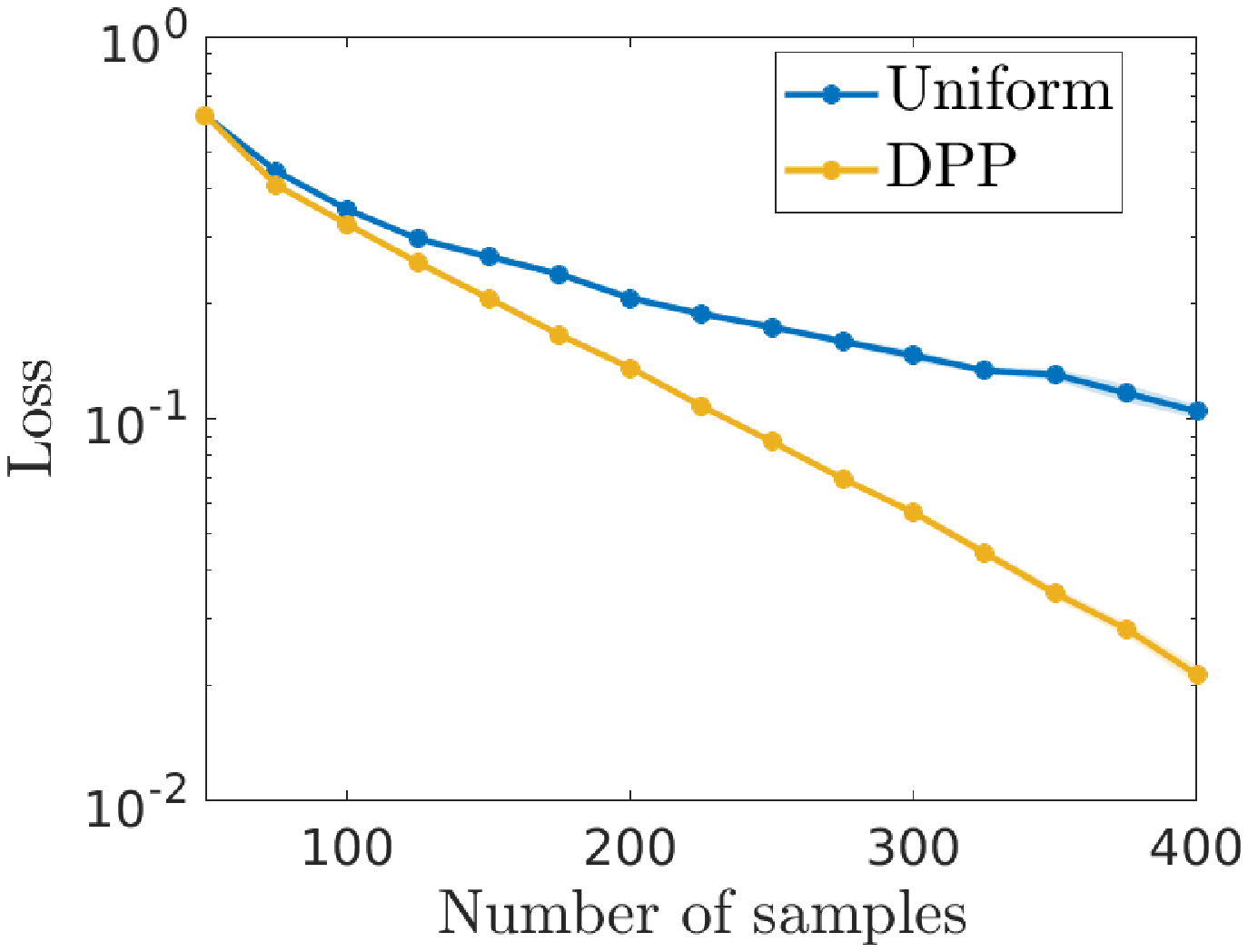}
			\caption{\texttt{Breast Cancer}}
			\label{fig:Nystrom_approx1}
		\end{subfigure}
		\begin{subfigure}[b]{0.32\textwidth}
			\includegraphics[width=\textwidth, height= 0.8\textwidth]{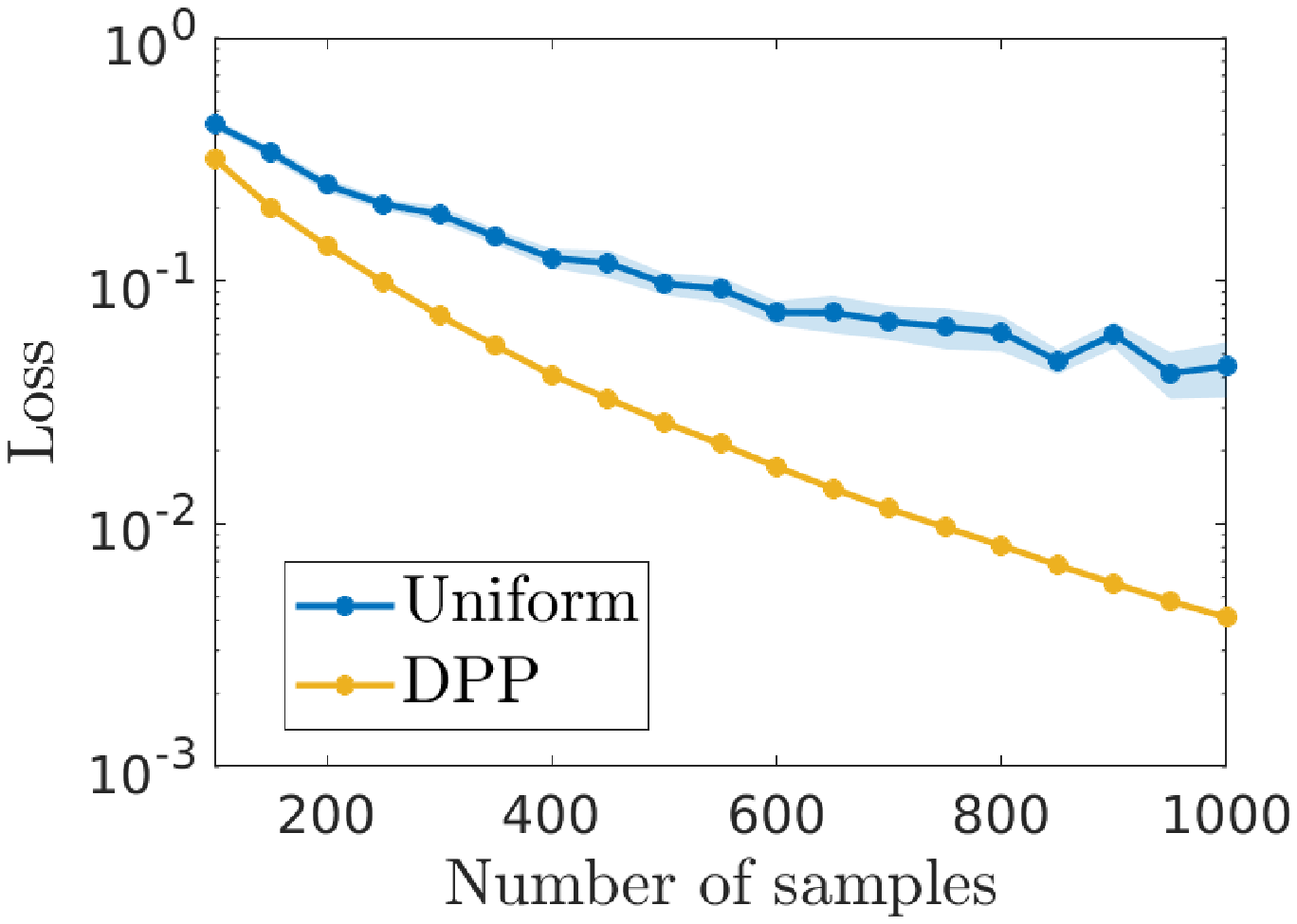}
			\caption{\texttt{Mushroom}}
			\label{fig:Nystrom_approx2}
		\end{subfigure}
		\begin{subfigure}[b]{0.32\textwidth}
			\includegraphics[width=\textwidth, height= 0.8\textwidth]{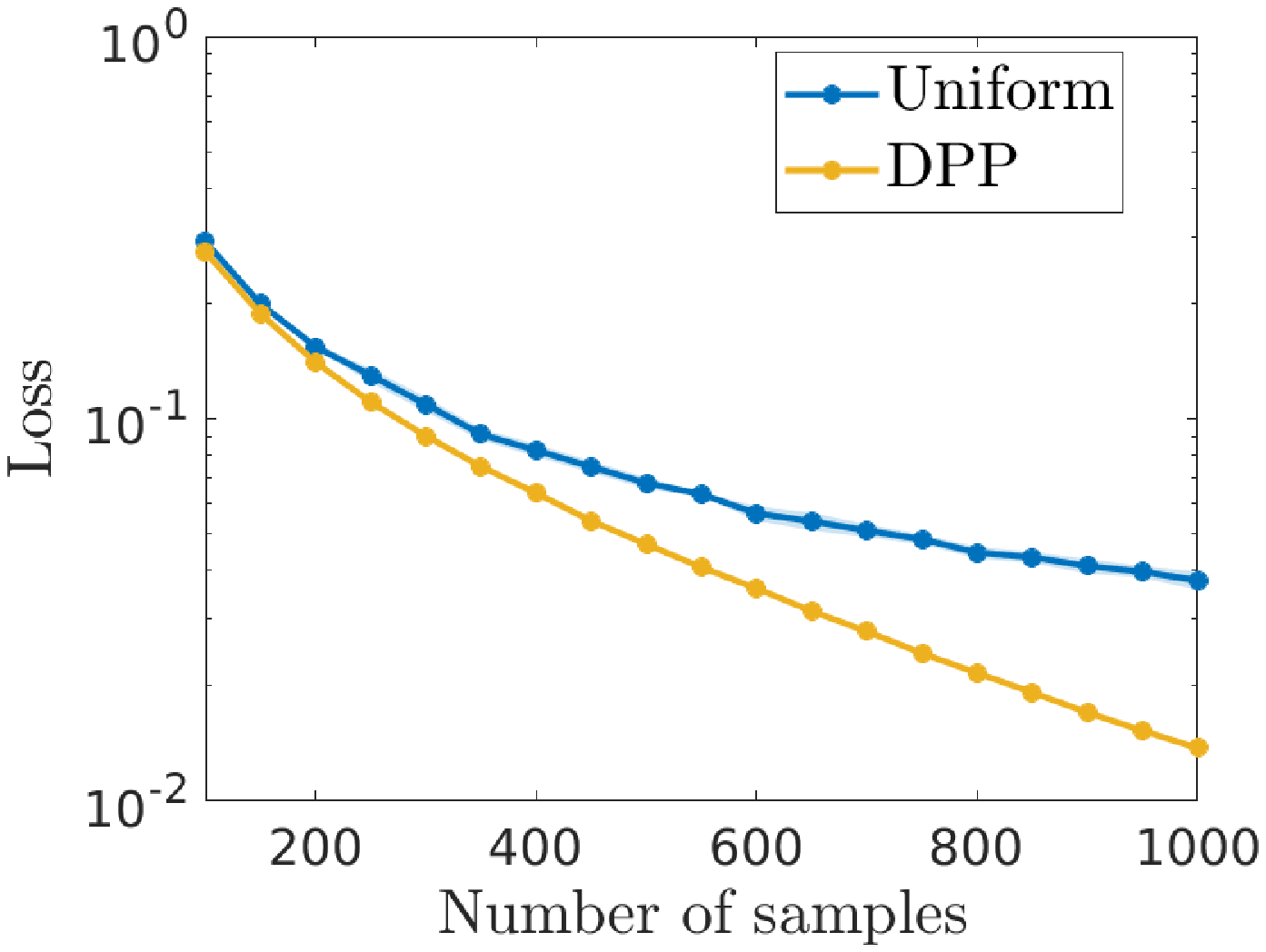}
			\caption{\texttt{Wine Quality}}
			\label{fig:Nystrom_approx3}
		\end{subfigure}
	\caption{The relative Nystr\"om approximation error~\eqref{eq:NystError} as a function of the number landmarks using uniform vs extended $L$-ensemble DPP sampling.}\label{fig:Nystrom_approx}
\end{figure} 
The data sets are standardized, and the performance is measured by the relative Frobenius norm of the error:  
\begin{equation}
    \|\widetilde{K} - \widetilde{L(\mathcal{C})}\|_F/\|\tilde{K}\|_F.\label{eq:NystError}
\end{equation}
We again use a Gaussian kernel $k(\bmx,\bmxp)  = \exp(-\|\bmx-\bmxp\|_2^2/\sigma^2)$ and linear regression component: $V = [X \enskip \bm{1}_n]$ where $X = [\bmx_1 \dots \bmx_n]^\top\in \mathbb{R}^{n\times d}$. The bandwidth is determined using the median heuristic~\cite{gretton2012kernel} defined in Section~\ref{sec:EmpGaussian}. The simulation is repeated 10 times and the error bars show the $97.5\%$ confidence interval. The results are displayed in Figure~\ref{fig:Nystrom_approx}. Extended DPP $L$-ensemble sampling gives a more accurate Nystr\"om approximation. We emphasize that these results are illustrative and that we do not claim that the aforementioned semi-parametric setting is the most suitable for these three data sets.
The Nystr\"om approximation of the penalized regression problem can now be discussed by using the matrix Nystr\"om approximation that we just introduced. 
\subsection{Nystr\"om approximation of regularized regression\label{sec:Nyst_penreg}}
Given a subset $\calC = \{{i_1}, \dots, {i_{k}}\}\subset [n]$, the Nystr\"om approximation allows to reduce the number of parameters from $n+p$ to $k + p$ without overlooking data points. To do so in the setting of this paper, we propose to solve a simplified problem which differs from~\eqref{eq:pen_reg} by the domain of the minimization, i.e., we introduce the following problem
\begin{equation}
    \min_{f\in\mathcal{H}_N}\frac{1}{n} \sum_{i=1}^n \left(y_i -f(\bmx_i)\right)^2 +\gamma J(f),\tag{$\text{NysPLS}$}  \label{eq:pen_reg_Nys}
\end{equation}
where the domain is defined by
\[
\mathcal{H}_N \triangleq \left\{f(\bmx) = \sum_{\ell =1}^{k} \alpha_{i_\ell} k(\bmx,\bmx_{i_\ell}) + \sum_{m=1}^p \beta_m p_m(\bmx) \text{ s.t. } \sum_{\ell=1}^{k}\alpha_{i_\ell} p_m(\bmx_{i_\ell}) = 0 \text{ for all } 1\leq m\leq p \right\}
\]
with $k= |\calC|$. The domain of the optimization problem~\eqref{eq:pen_reg_Nys} now includes only finite linear combinations of $k(\cdot, \bmx_{i_\ell})$ for $i_\ell\in \calC$, with a specific condition on the coefficients, whereas the domain of the `full' optimization problem~\eqref{eq:pen_reg} includes possibly infinite linear combinations. In analogy with~\eqref{eq:LinearSystem}, this condition yields afterwards the constraint $V^\top_{\calC} \bma' = 0$ where $\bma'= [\alpha_{i_1} \dots \alpha_{i_{k}}]^\top \in \mathbb{R}^{k}$. Here, $V =[ p_m(\bmx_i)]_{im}$ is a $n\times p$ matrix.

The solution of~\eqref{eq:pen_reg_Nys}  involves a $(k-p)\times (k-p)$ linear system that we write below, after introducing useful notations.
Let $B(\calC)\in \mathbb{R}^{k\times (k-p)}$ be a matrix whose columns are an orthonormal basis of $(V_\calC)^\perp$, and that is such that $\mathbb{P}_{V_\calC^\perp} =B(\calC) B^\top(\calC)$. Then, after elementary manipulations, the system yields
\begin{align}
\bma'^\star &= B(\calC)\left( B^\top(\calC)K_\calC \mathbb{P}_{V^\perp} K_\calC^\top B(\calC)  + n\gamma B^\top(\calC) K_{\calC\calC} B(\calC) \right)^{-1}B^\top(\calC) K_\calC\mathbb{P}_{V^\perp} \bm{y},\label{eq:LinSyst}\\
\bmb^\star &= (V^\top V)^{-1} V^\top \left( \bm{y} - K_\calC^\top B(\calC) B^\top(\calC)\bma'^\star\right).\nonumber
\end{align}
The details of this derivation are given in Appendix. The above linear system involves positive definite matrices and can be solved conveniently by the conjugate gradient algorithm, possibly after a preconditioning which is described in Appendix.

\paragraph{In-sample estimator}
Also, it is straightforward to determine the in-sample estimator $\hat{\bm{z}}_{N} = K^\top_\calC \bma'^\star + V^\top \bmb^\star$, which is then given, in terms of the projected Nystr\"om approximation, as follows
\[
\hat{\bm{z}}_{N} = \widetilde{L(\calC)} \left(\widetilde{L(\calC)}+n\gamma \mathbb{I}_n\right)^{-1}\mathbb{P}_{V^\perp}\bm{y} + \mathbb{P}_{V}\bm{y}.
\]
Importantly, this estimator can be formally obtained from the estimator of the `full' problem by replacing $\widetildeK$ by $\widetilde{L(\calC)}$. Notice that the computation of $\bm{\beta}^\star$ only requires solving a $p\times p$ linear system.




\subsection{Bound on the expected risk\label{sec:BoundRisk}}
Recall our data assumption $y_i = f(\bmx_i) + \epsilon_i$ where $\epsilon_i$ denotes i.i.d. $\mathcal{N}(0,\sigma^2)$ noise with $1\leq i\leq n$. For convenience, define $z_i = f(\bmx_i)$ for all $1\leq i\leq n$. 
The expected risk of the full regression problem is 
$
\mathcal{R}(\hat{\bm{z}}) = \mathbb{E}_\epsilon\|\hat{\bm{z}}-\bm{z}\|_2^2.
$
We find an upper bound for the expected risk obtained thanks to the projected Nystr\"om approximation, i.e., $\mathcal{R}(\hat{\bm{z}}_{N}) = \mathbb{E}_\epsilon\|\hat{\bm{z}}_{N}-\bm{z}\|_2^2$.
In the spirit of the kernel ridge regression and Theorem~2.5 in~\cite{fanuel2020diversity}, we can prove the following stability bound on expectation.
\begin{theorem}[stability of the expected risk]\label{thm:Bound}
Let $\calC\sim DPP(K/\lambda,V)$ with $\lambda >0$. Then, it holds that
\[
\mathbb{E}_{\calC}\left[\sqrt{\frac{\mathcal{R}(\hat{\bm{z}}_{N})}{\mathcal{R}(\hat{\bm{z}})}}\right]\leq 1+ \frac{\lambda}{n\gamma} d_{\rm eff}(\widetilde{K}/\lambda),
\text{ with } d_{\rm eff}(\widetilde{K}/\lambda) = \Tr\left(\widetilde{K} (\widetilde{K}+ \lambda\mathbb{I}_n)^{-1} \right).
\]
\end{theorem}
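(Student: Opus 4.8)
The plan is to bound, for \emph{each} fixed $\calC$, the ratio $\mathcal{R}(\hat{\bm{z}}_{N})/\mathcal{R}(\hat{\bm{z}})$ by $\bigl(1+c(\calC)\bigr)^2$ with $c(\calC)=\Tr\bigl(\widetildeK-\widetilde{L(\calC)}\bigr)/(n\gamma)$, and then take $\mathbb{E}_\calC$ via Corollary~\ref{corol:Nyst_error}(ii). First I would rewrite the two in-sample estimators of Section~\ref{sec:Nyst_penreg} as $\hat{\bm{z}}=(\mathbb{P}_V+\widetildeK R_K)\bm{y}$ and $\hat{\bm{z}}_{N}=(\mathbb{P}_V+\widetilde{L(\calC)}R_L)\bm{y}$, with $R_K=(\widetildeK+n\gamma\mathbb{I})^{-1}$, $R_L=(\widetilde{L(\calC)}+n\gamma\mathbb{I})^{-1}$ (the $\mathbb{P}_{V^\perp}$'s in those formulas are redundant on these terms, since $\widetildeK$ and $\widetilde{L(\calC)}$ annihilate $\range V$). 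The identity $A(A+n\gamma\mathbb{I})^{-1}=\mathbb{I}-n\gamma(A+n\gamma\mathbb{I})^{-1}$ cancels the identity parts and the resolvent identity then gives
\[
\hat{\bm{z}}_{N}-\hat{\bm{z}}=n\gamma(R_K-R_L)\bm{y}=-n\gamma\,R_L\,M\,R_K\,\bm{y},\qquad M\triangleq\widetildeK-\widetilde{L(\calC)},
\]
where $M\succeq0$ by Corollary~\ref{corol:Nyst_error}(i), $M=\mathbb{P}_{V^\perp}M\mathbb{P}_{V^\perp}$, and $R_L\preceq(n\gamma)^{-1}\mathbb{I}$, $R_K\preceq(n\gamma)^{-1}\mathbb{I}$.

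Next I would substitute $\bm{y}=\bm{z}+\bm{\epsilon}$; since $\mathbb{E}_\epsilon\bm{\epsilon}=0$ the cross terms vanish and both quantities split into a bias and a variance part,
\[
\mathbb{E}_\epsilon\|\hat{\bm{z}}_{N}-\hat{\bm{z}}\|_2^2=(n\gamma)^2\|R_L M R_K\mathbb{P}_{V^\perp}\bm{z}\|_2^2+\sigma^2(n\gamma)^2\Tr\bigl(R_K M R_L^2 M R_K\bigr),
\]
\[
\mathcal{R}(\hat{\bm{z}})=(n\gamma)^2\|R_K\mathbb{P}_{V^\perp}\bm{z}\|_2^2+\sigma^2\Tr(H^2),\qquad H\triangleq\mathbb{P}_V+\widetildeK R_K,
\]
and the core step is to bound the bias ratio and the variance ratio separately by $c(\calC)^2$. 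For the bias part this is immediate from $M R_K=M R_K\mathbb{P}_{V^\perp}$ and $\|R_L M\|_{\mathrm{op}}\le\|R_L\|_{\mathrm{op}}\|M\|_{\mathrm{op}}\le\Tr(M)/(n\gamma)$, giving $\|R_L M R_K\mathbb{P}_{V^\perp}\bm{z}\|_2\le\tfrac{\Tr(M)}{n\gamma}\|R_K\mathbb{P}_{V^\perp}\bm{z}\|_2$. For the variance part I would use $R_L^2\preceq(n\gamma)^{-2}\mathbb{I}$, the trace inequality $\Tr(AB)\le\|B\|_{\mathrm{op}}\Tr(A)$ for $A,B\succeq0$, $\|R_K^2\|_{\mathrm{op}}\le(n\gamma)^{-2}$ and $\Tr(M^2)\le\Tr(M)^2$ to get $\Tr(R_K M R_L^2 M R_K)\le(n\gamma)^{-4}\Tr(M)^2$, so the variance numerator is at most $\sigma^2\Tr(M)^2/(n\gamma)^2=\sigma^2 c(\calC)^2$; since $H$ restricts to the identity on the $p$-dimensional space $\range V$, one has $\Tr(H^2)\ge p\ge1$, so the variance denominator is at least $\sigma^2$ and the variance ratio is $\le c(\calC)^2$. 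As $a/b\le c^2$ and $a'/b'\le c^2$ (with $b,b'\ge0$ not both zero, the only degenerate case being $\bm{z}\in\range V$, where both bias terms vanish) imply $(a+a')/(b+b')\le c^2$, we conclude $\mathbb{E}_\epsilon\|\hat{\bm{z}}_{N}-\hat{\bm{z}}\|_2^2\le c(\calC)^2\,\mathcal{R}(\hat{\bm{z}})$.

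Finally, I would view $\sqrt{\mathcal{R}(\cdot)}$ as the $L^2(\bm{\epsilon})$-norm of the deviation from $\bm{z}$, so the triangle inequality yields $\sqrt{\mathcal{R}(\hat{\bm{z}}_{N})}\le\sqrt{\mathbb{E}_\epsilon\|\hat{\bm{z}}_{N}-\hat{\bm{z}}\|_2^2}+\sqrt{\mathcal{R}(\hat{\bm{z}})}\le(1+c(\calC))\sqrt{\mathcal{R}(\hat{\bm{z}})}$, i.e. $\sqrt{\mathcal{R}(\hat{\bm{z}}_{N})/\mathcal{R}(\hat{\bm{z}})}\le1+\Tr(\widetildeK-\widetilde{L(\calC)})/(n\gamma)$. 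Taking $\mathbb{E}_\calC$ and using $\mathbb{E}_\calC[\widetildeK-\widetilde{L(\calC)}]=\lambda\widetildeK(\widetildeK+\lambda\mathbb{I})^{-1}$ from Corollary~\ref{corol:Nyst_error}(ii) turns the right-hand side into $1+\tfrac{1}{n\gamma}\Tr\bigl(\lambda\widetildeK(\widetildeK+\lambda\mathbb{I})^{-1}\bigr)=1+\tfrac{\lambda}{n\gamma}d_{\rm eff}(\widetildeK/\lambda)$, which is the claim. I expect the variance-ratio estimate to be the main obstacle: the numerator and denominator operators cannot be compared eigenvalue by eigenvalue (they behave oppositely on the small eigenvalues of $\widetildeK$), so the bound must pass through traces and crucially exploit the contribution $p$ of the $V$-directions to lower-bound $\Tr(H^2)$; keeping track of which $\mathbb{P}_{V^\perp}$ projectors are redundant is the other place where care is needed.
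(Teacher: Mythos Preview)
The proposal is correct and aligns with the approach the paper describes: it is precisely the KRR risk-stability argument of~\cite[Theorem~3]{fanuel2020diversity} carried out with $\widetildeK$ and $\widetilde{L(\calC)}$ in place of the kernel matrix and its Nystr\"om approximation, the key steps being the resolvent identity for $\hat{\bm{z}}_{N}-\hat{\bm{z}}$, the bound $\|R_L M\|_{\mathrm{op}}\le\Tr(M)/(n\gamma)$, the triangle inequality in $L^2(\bm{\epsilon})$, and the expectation via Corollary~\ref{corol:Nyst_error}(ii). Your use of $\Tr(H^2)\ge p\ge1$ to lower-bound the variance denominator is a legitimate shortcut available in the semi-parametric setting (the $\mathbb{P}_V$ block contributes $p$ to that trace); the remaining estimates are all sound as written.
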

\begin{proof}
The proof follows exactly the same lines as in~\cite[Theorem 3]{fanuel2020diversity}, where $\widetildeK$ and $\widetilde{L(\calC)}$ replace the \emph{psd} kernel matrix and its common Nystr\"om approximation. This can be done since $\widetilde{L(\calC)}$ formally satisfies the same identity as the common Nystr\"om approximation in kernel ridge regression case when the sampling is done with a $L$-ensemble.
\end{proof}
This result indicates that using the Nystr\"om approximation cannot dramatically deteriorate the risk on expectation. An analogous result for leverage scores sampling holding with high probability can found in~\cite{MuscoMusco} for kernel ridge regression.
We remark that the effective dimension $d_{\rm eff}(\widetilde{K}/\lambda)$ is crucial in many sampling methods (see also, e.g.~\cite{ElAlaouiMahoney}). Typically, a small $\lambda>0$ yields a large expected sample $\mathbb{E}[|\calC|] = p + d_{\rm eff}(\widetilde{K}/\lambda)$ and therefore reduces the magnitude of the upper bound in Theorem~\ref{thm:Bound}.

\subsection{Application: non-linear time series using semi-parametric models}

A typical (embedded) application that requires a small number of parameters is non-linear time series estimation which is a problem of interest in engineering, for instance in the context of system identification, electromechanical systems or for the control of chemical processes. Within the framework of non-linear time series, a common approach consists in estimating a non-linear black-box model to produce accurate and fast forecasts starting from a set of observations. The user usually has some expert knowledge to incorporate into the estimation. This makes the use of semi-parametric regression models especially appealing for systems and control~\cite{EspinozaConf,EspinozaJournal}, while we refer to~\cite{Gao} for an overview of the various applications in finance, climate and environment sciences.
Empirically, it is common to transform time series estimation or system identification problems into regression problems, such as~\eqref{eq:pen_reg}, as we explain below. Therefore, we use this engineering application as a case study for the function estimation framework used in this paper. A recent theoretical analysis of this type of regression frameworks for time series can be found in~\cite{pmlr-v89-mariet19a}.

The time cruciality of industrial applications necessitates models with a small number of parameters, as these have a large impact on the memory requirements and prediction speed in real-time forecasting. We demonstrate the use of DPP sampling for Nystr\"om-based regression. We show that Nystr\"om-based regression~\eqref{eq:pen_reg_Nys} does not have a much lower performance compared to the full system, with a lower memory cost and prediction time. 

In this simulation, we compare the performance of solving~\eqref{eq:pen_reg} and~\eqref{eq:pen_reg_Nys} by using either uniform or DPP sampling. Each model contains a linear parametric part corresponding to the system to be estimated and non-parametric part based on the Gaussian kernel. The non-parametric part can be viewed as a misspecification error.
A simple observation given in Proposition~\ref{prop:Gauss} justifies a decomposition with a separation of variables between the linear and non-linear components.
\begin{proposition}\label{prop:Gauss}
Let $\bm{v}\in \mathbb{R}^d$ and let $\mathbb{P}_{\bm{v}^\perp}$ be the projector onto the orthogonal of $\bm{v}$. Then, the kernel $k(\mathbb{P}_{\bm{v}^\perp} \bmx, \mathbb{P}_{\bm{v}^\perp} \bmx) = \exp\left(-\| \mathbb{P}_{\bm{v}^\perp} (\bmx - \bmx')\|^2\right)$  is  positive semi-definite on $\mathbb{R}^d$.
\end{proposition}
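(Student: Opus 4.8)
The plan is to reduce the claim to the classical fact that the standard Gaussian kernel is positive semi-definite, together with the elementary principle that pre-composing a positive semi-definite kernel with an arbitrary map preserves positive semi-definiteness. The first step is to observe that $\mathbb{P}_{\bm{v}^\perp}$ is linear, so $\| \mathbb{P}_{\bm{v}^\perp}(\bmx - \bmxp)\|^2 = \| \mathbb{P}_{\bm{v}^\perp}\bmx - \mathbb{P}_{\bm{v}^\perp}\bmxp\|^2$, and hence the kernel under consideration equals $g\!\left(\mathbb{P}_{\bm{v}^\perp}\bmx,\, \mathbb{P}_{\bm{v}^\perp}\bmxp\right)$, where $g(\bmu,\bmu') = \exp(-\|\bmu - \bmu'\|^2)$ is the Gaussian kernel of unit bandwidth on $\mathbb{R}^d$.

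The second step is to recall why $g$ itself is positive semi-definite. A self-contained route uses the factorization $g(\bmu,\bmu') = e^{-\|\bmu\|^2}\, e^{2\langle \bmu,\bmu'\rangle}\, e^{-\|\bmu'\|^2}$: the linear kernel $\langle \bmu,\bmu'\rangle$ is trivially positive semi-definite, so each power $\langle \bmu,\bmu'\rangle^m$ is positive semi-definite by the Schur product theorem, hence so is $e^{2\langle\bmu,\bmu'\rangle} = \sum_{m\geq 0}\frac{2^m}{m!}\langle\bmu,\bmu'\rangle^m$, and multiplying by the factor $e^{-\|\bmu\|^2}e^{-\|\bmu'\|^2}$, which is of the form $\phi(\bmu)\phi(\bmu')$, again preserves positive semi-definiteness; alternatively one may cite Bochner's theorem. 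Consequently, for every finite family $\{\bmu_i\}_{i=1}^n \subset \mathbb{R}^d$ the Gram matrix $[g(\bmu_i,\bmu_j)]_{ij}$ is positive semi-definite.

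The final step combines the two: for any finite $\{\bmx_i\}_{i=1}^n \subset \mathbb{R}^d$ and any $\bm{c}\in\mathbb{R}^n$, put $\bmu_i = \mathbb{P}_{\bm{v}^\perp}\bmx_i$; then $\sum_{i,j} c_i c_j\, g\!\left(\mathbb{P}_{\bm{v}^\perp}\bmx_i,\, \mathbb{P}_{\bm{v}^\perp}\bmx_j\right) = \sum_{i,j} c_i c_j\, g(\bmu_i,\bmu_j) \geq 0$, which is exactly the asserted positive semi-definiteness of $k(\mathbb{P}_{\bm{v}^\perp}\bmx,\mathbb{P}_{\bm{v}^\perp}\bmxp)$ on $\mathbb{R}^d$. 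There is no genuine obstacle here: the only ingredient that requires any argument is the positive semi-definiteness of the ordinary Gaussian kernel, which is standard, and the passage through the idempotent symmetric projector $\mathbb{P}_{\bm{v}^\perp}$ is immediate, since composition with any map cannot destroy positive semi-definiteness.
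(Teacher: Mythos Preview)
Your proof is correct and takes a genuinely different route from the paper. The paper invokes Schoenberg's theorem (stated there via \cite{Berg1984}): $\exp(-g)$ is positive semi-definite if and only if $g$ is conditionally negative semi-definite with respect to the constant function, and then checks directly that $g(\bmx,\bmxp)=\|\mathbb{P}_{\bm{v}^\perp}(\bmx-\bmxp)\|^2$ satisfies $\sum_{i,j}\alpha_i\alpha_j g(\bmx_i,\bmx_j)\leq 0$ whenever $\sum_i\alpha_i=0$. Your argument is more elementary: you simply recognise the kernel as the pullback of the standard Gaussian kernel through the linear map $\mathbb{P}_{\bm{v}^\perp}$, and use that positive semi-definiteness is preserved under composition with any map. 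This avoids Schoenberg's characterisation entirely and makes the proof essentially a one-liner once the positive semi-definiteness of the ordinary Gaussian kernel is granted. The paper's approach, in exchange, highlights the conditionally-negative-definite structure that is thematically relevant elsewhere in the paper, but for this particular proposition your route is shorter and self-contained.
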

\begin{proof}
This can be shown thanks to the following result of~\cite[Thm 2.2]{Berg1984}: $\exp(-g(\bmx, \bmx'))$ is positive semi-definite if and only if $g(\bmx, \bmx')$ is negative semi-definite with respect to $1$. Clearly, $g(\bmx, \bmx') = \| \mathbb{P}_{\bm{v}^\perp} (\bmx - \bmx')\|^2$ satisfies $\sum_{i,j=1}^m \alpha_i\alpha_j g(\bmx_i, \bmx_j) \leq 0$ for all finite set of $\bmx_i$ for $1\leq i\leq m$ and $\bm{\alpha}\in \mathbb{R}^m$ such that $\sum_{i=1}^m \alpha_i = 0$.
\end{proof}
\paragraph{Non-linear time series}
Then, in what follows, three systems are defined.\\
\noindent \texttt{System 1:} 
    The first model is a static toy example that is given at time step $t$ with $1\leq t\leq n$ by
    \[\sfy^t = a_2 \sfz_1^t + a_1 + \operatorname{sinc}(\sfx_1^t + \sfx_2^t) + \sfe^t,\]
    with $a_2 = 0.2$, $a_1 = 0.4$ and where superscript $t$ indicates a value obtained at time $t$. The real $\sfy^t$ is the output of the system at time step $t$, which is given by the combination of a linear combination of the real input $\sfz^t$ and a non-linear function of two other real inputs $\sfx_1^t$ and $\sfx_2^t$ at time $t$. The training set is obtained by considering a set of input-output pairs obtained for a sequence of integer time steps $1\leq t\leq n$. The inputs are sampled independently as follows: $\sfx_1$ and $\sfx_2$ are $\mathcal{N}(0,2)$ random variables, whereas $\sfz\sim \mathcal{N}(0,2.5)$, and $\sfe\sim \mathcal{N}(0,0.05)$ is the noise. The training data for the penalized regression problem are $(\bmx_i, y_i)$ with\footnote{Notice that a different font is used for the inputs and output, compared to the $(\bmx_i, y_i)$ pairs.}
    \[
    \bm{x}_i = [\sfz^i\ \sfx_1^i\ \sfx_2^i]^\top \in \mathbb{R}^3  \text{ and } y_i = \sfy^i \text{ for } 1\leq i\leq n.
    \]
     Remark that the time information is not considered here in order to transform the system identification problem into a regression problem, while non-static systems are given below. Let $\bm{x} = [\sfz\ \sfx_1\ \sfx_2]^\top$. The estimated function is of the form 
     \[
     f(\bmx) = \beta_1 p_1(\bmx) + \beta_2 p_2(\bmx) + \sum_{i=1}^n \alpha_i k(\bmx,\bmx_i),
     \]
     where $p_1(\bmx) = 1$ and $p_2(\bmx) = x_{1}$, while the kernel is $ k(\bmx,\bmx') = \exp\left(-\frac{1}{\sigma^2}\left((x_{2}-x'_{2})^2+(x_{3}-x'_{3})^2\right)\right)$, where $x_k$ denotes the $k$-th component of $\bmx$ with $1\leq k \leq 3$. The estimation problem is then cast into the form of~\eqref{eq:pen_reg} since Proposition~\ref{prop:Gauss} indicates that $ k(\bmx,\bmx') $ is \emph{psd}.\\
\noindent\texttt{System 2}: 
     The second model is not static: for integer time steps $1\leq t\leq n$, it reads 
     \[
     \sfy^{t}= a_{1} +  a_{2} \sfy^{t-1} + a_{3} \sfy^{t-2} + 2\operatorname{sinc}\left( \sfx^{t}_{1}+ \sfx^{t}_{2}\right)+\sfe^{t},
     \]
    where $a_1 = 0.3$, $a_2 = 0.2$, $a_3 = 0.1$. It is common to define $\sfy^{0} = \sfy^{-1} = 0$. Also, we consider independent random variables $\sfx_1\sim \mathcal{N}(0,2)$, $\sfx_2\sim \mathcal{N}(0,2)$, and a noise $\sfe\sim \mathcal{N}(0,0.05)$.  The training data for the penalized regression problem are $(\bmx_i, y_i)$ for $1\leq i\leq n$ with
    \[
    \bm{x}_i = [\sfx_1^i\ \sfx_2^i\  \sfy^{i-1}\  \sfy^{i-2}]^\top \in \mathbb{R}^{4} \text{ and } y_i = \sfy^i. 
    \]
    Here, the time series is encoded into the regression problem in such a way that $\bm{x}_i$ contains the previous two time steps.  Let $\bm{x}\in \mathbb{R}^4$. The estimated function is of the form 
    \[
    f(\bmx) = \beta_1 p_1(\bmx) + \beta_2 p_2(\bmx) + \beta_3 p_3(\bmx) + \sum_{i=1}^n \alpha_i k(\bmx,\bmx_i),
    \]
    where $p_1(\bmx) = 1$, $p_2(\bmx) = x_{3}$, $p_3(\bmx) = x_{4}$ and $ k(\bmx,\bmx') = \exp\left(-\frac{1}{\sigma^2}\left((x_{1}-x'_{1})^2+(x_{2}-x'_{2})^2\right)\right).$\\
\noindent\texttt{System 3}: 
    The third model is of the form: 
    \[
    \sfy^{t}=  a_{1} + a_{2} \sfy^{t-1} +  a_{3} \sfy^{t-2} + b_1\operatorname{sinc}\left(\sfu^{t-1}\right) + + b_2\operatorname{sinc}\left(\sfu^{t-2}\right)+\sfe^{t},
    \]
   with $a_1 =0.6$, $a_2 = 0.4$, $a_3 = 0.2$,  $b_1 = 0.7$, $b_2 = 0.6$, $\sfu \sim \mathcal{N}(0,4)$, and the noise $\sfe\sim \mathcal{N}(0, 0.05)$. We also have $\sfy^{0} = \sfy^{-1} = 0$ and $\sfu^{0} = \sfu^{-1} = 0$ by definition. The training data for the penalized regression problem are $(\bmx_i, y_i)$ for $1\leq i\leq n$ with
    \[
    \bm{x}_i = [\sfu^{i-1}\ \sfu^{i-2}\  \sfy^{i-1}\  \sfy^{i-2}]^\top \in \mathbb{R}^{4}
    \text{ and } y_i = \sfy^i.
    \]
    Let $\bm{x}\in \mathbb{R}^4$. The estimated function is of the form 
    \[
    f(\bmx) = \beta_1 p_1(\bmx) + \beta_2 p_2(\bmx) + \beta_3 p_3(\bmx) + \sum_{i=1}^n \alpha_i k(\bmx,\bmx_i),
    \]
    where $p_1(\bmx) = 1$, $p_2(\bmx) = x_{3}$, $p_3(\bmx) = x_{4}$ and $ k(\bmx,\bmx') = \exp\left(-\frac{1}{\sigma^2}\left((x_{1}-x'_{1})^2+(x_{2}-x'_{2})^2\right)\right).$ 

\paragraph{Simulation setting} We take $n=1000$ time steps. The data set is split into a 50/25/25 train, validation and test set. The validation set is used to determine the regularization parameter $\gamma$ and bandwidth $\sigma$. For each model, we measure the parameter identification error, i.e., the mean squared error between the true coefficients $a_1, a_2, a_3$ of the parametric component and their estimates $\beta_1, \beta_2, \beta_3$. 
More importantly, we calculate the prediction error on the test set: $ (1/n_{\mathrm{test}}) \sum_{i=1}^{n_{\mathrm{test}}} (y_i - \hat{f}(\bm{x}_i))^2$. The simulation is repeated 10 times. The results are visualized in Figures \ref{fig:sys_id} and \ref{fig:sys_pred}  where the error bars show the $97.5\%$ confidence interval. Both sampling algorithms are capable of correctly identifying the linear part of the model. Given a number of landmark points, DPP sampling shows better performance than uniform sampling for the prediction error which is the task of practical interest. 

\begin{figure}[ht]
		\centering
	\begin{subfigure}[b]{0.32\textwidth}
			\includegraphics[width=\textwidth, height= 0.8\textwidth]{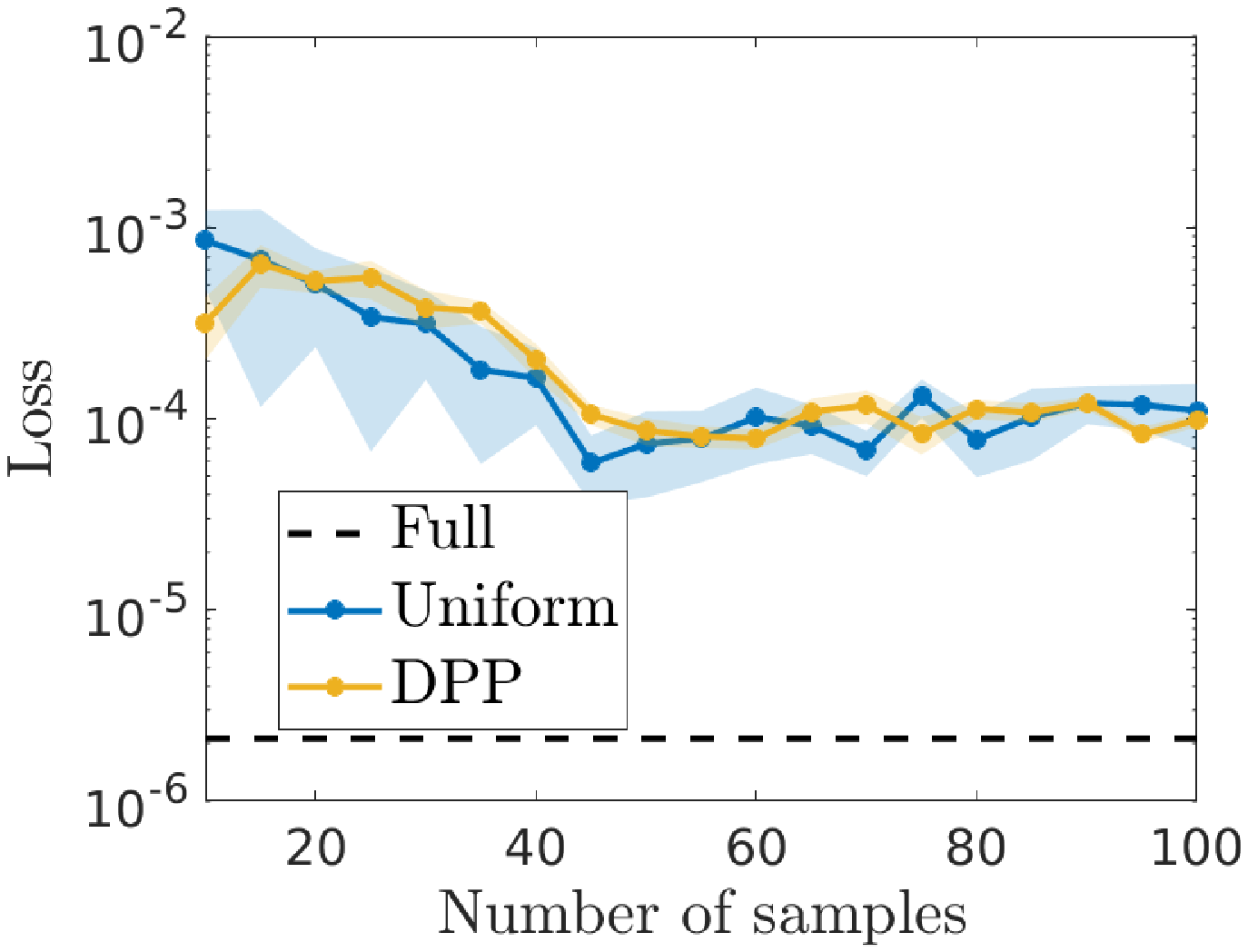}
			\caption{\texttt{System 1}}
			\label{fig:sys_id_1}
		\end{subfigure}
		\begin{subfigure}[b]{0.32\textwidth}
			\includegraphics[width=\textwidth, height= 0.8\textwidth]{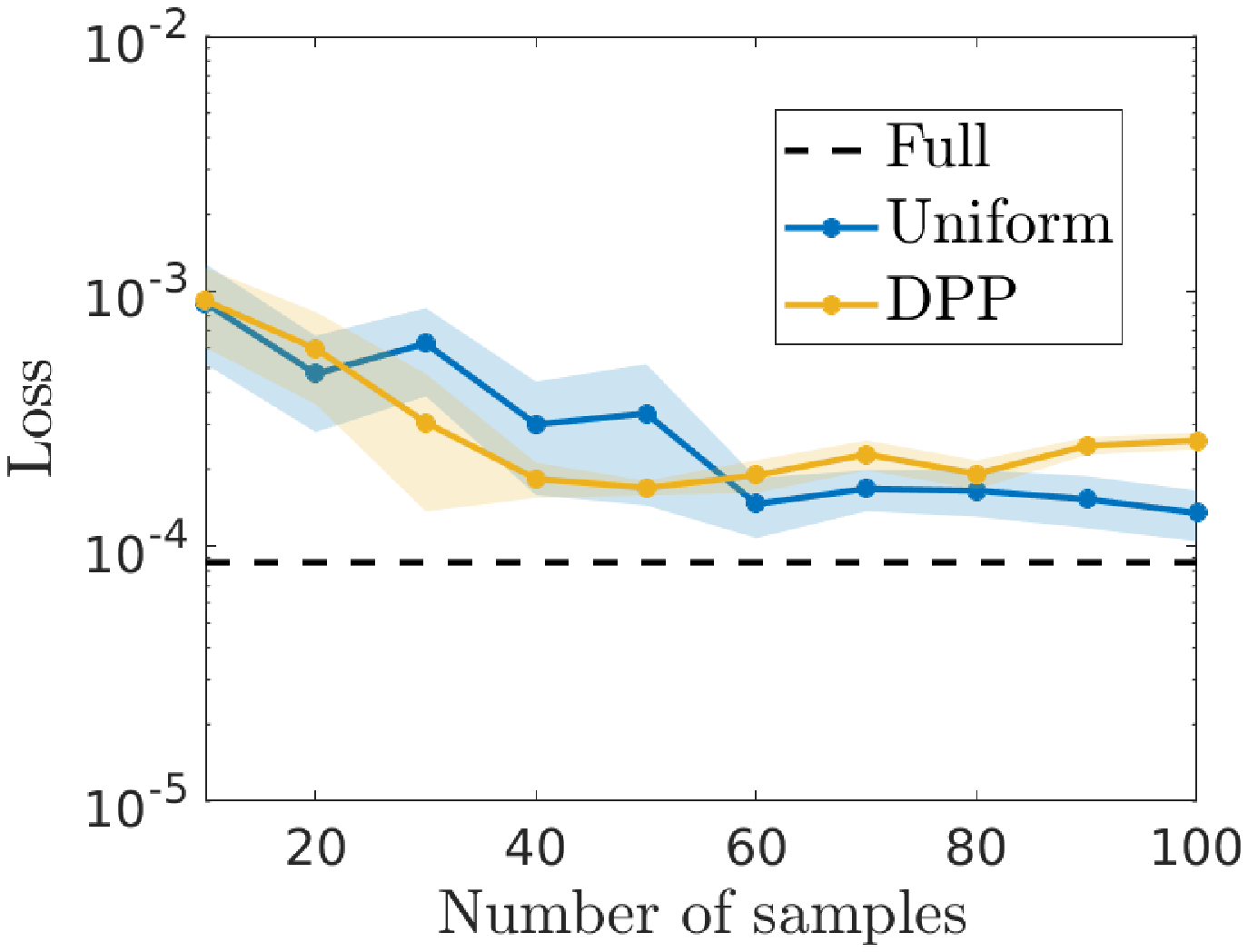}
			\caption{\texttt{System 2}}
			\label{fig:sys_id_2}
		\end{subfigure}
		\begin{subfigure}[b]{0.32\textwidth}
			\includegraphics[width=\textwidth, height= 0.8\textwidth]{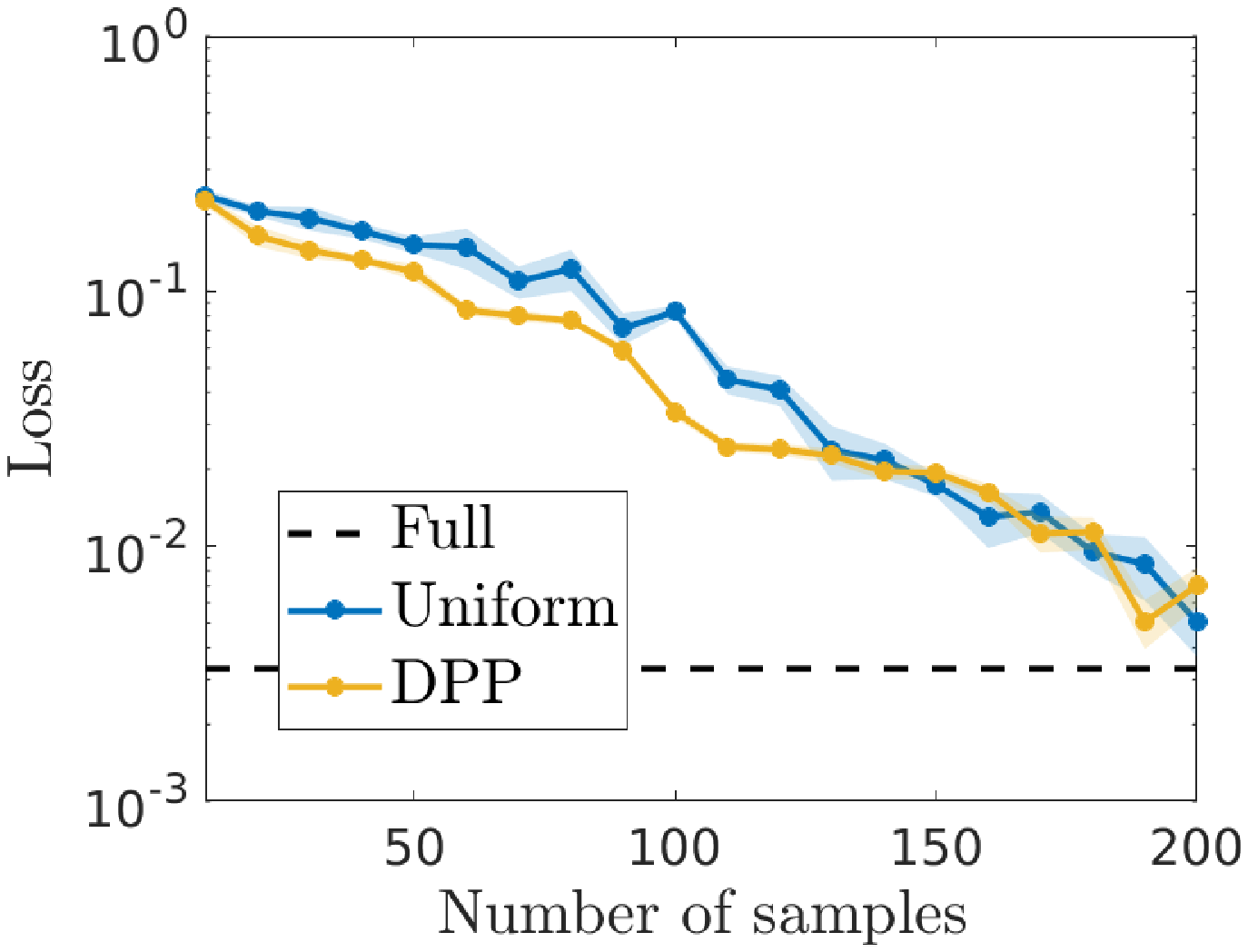}
			\caption{\texttt{System 3}}
			\label{fig:sys_id_3}
		\end{subfigure}
	\caption{The parameter identification error as a function of the number landmarks $|\calC|$ using uniform vs extended $L$-ensemble sampling. Here, the total number of training points is $n=500$.}\label{fig:sys_id}
\end{figure}
\begin{figure}[ht]
		\centering
	\begin{subfigure}[b]{0.32\textwidth}
			\includegraphics[width=\textwidth, height= 0.8\textwidth]{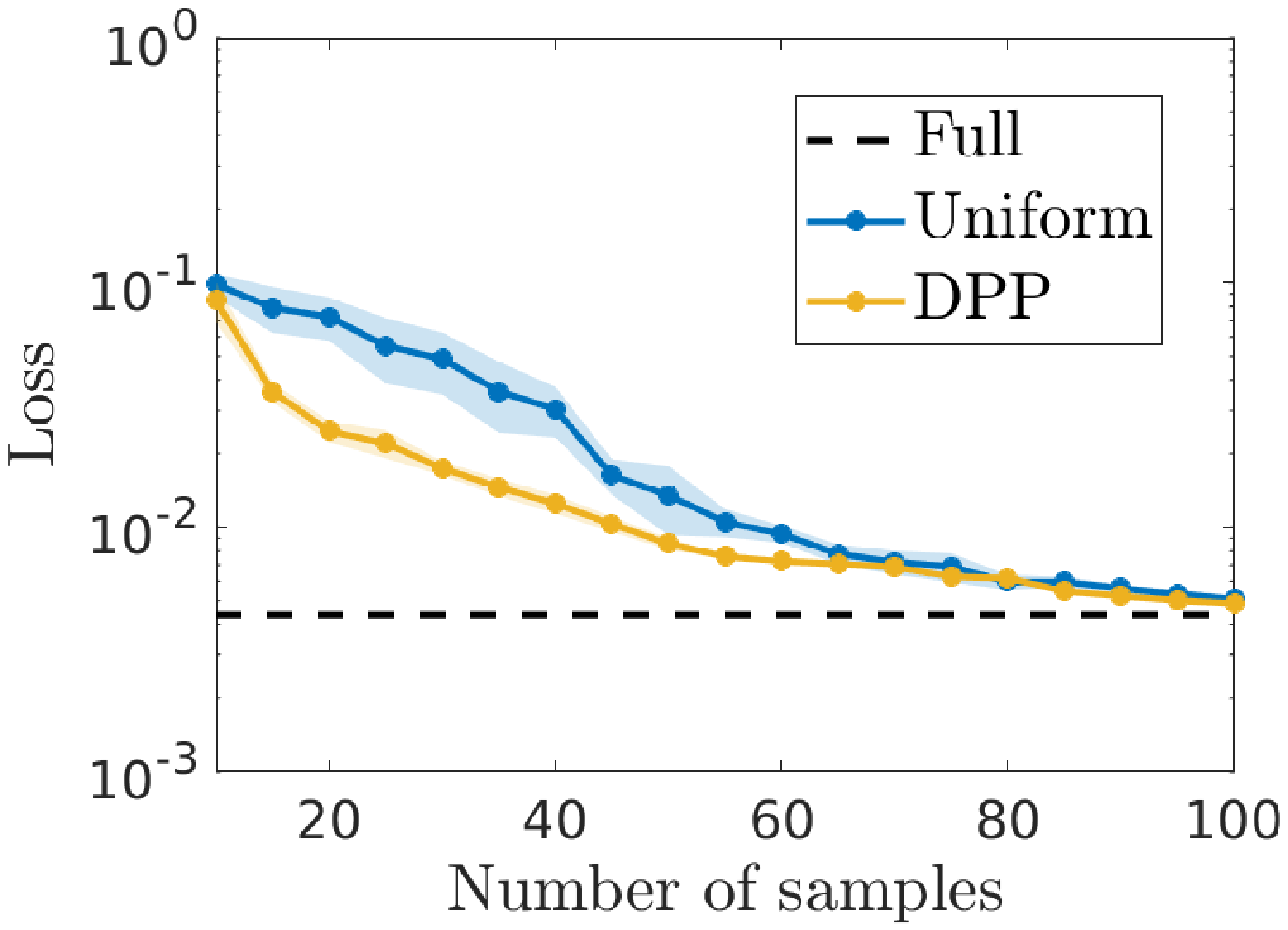}
			\caption{\texttt{System 1}}
			\label{fig:sys_pred_1}
		\end{subfigure}
		\begin{subfigure}[b]{0.32\textwidth}
			\includegraphics[width=\textwidth, height= 0.8\textwidth]{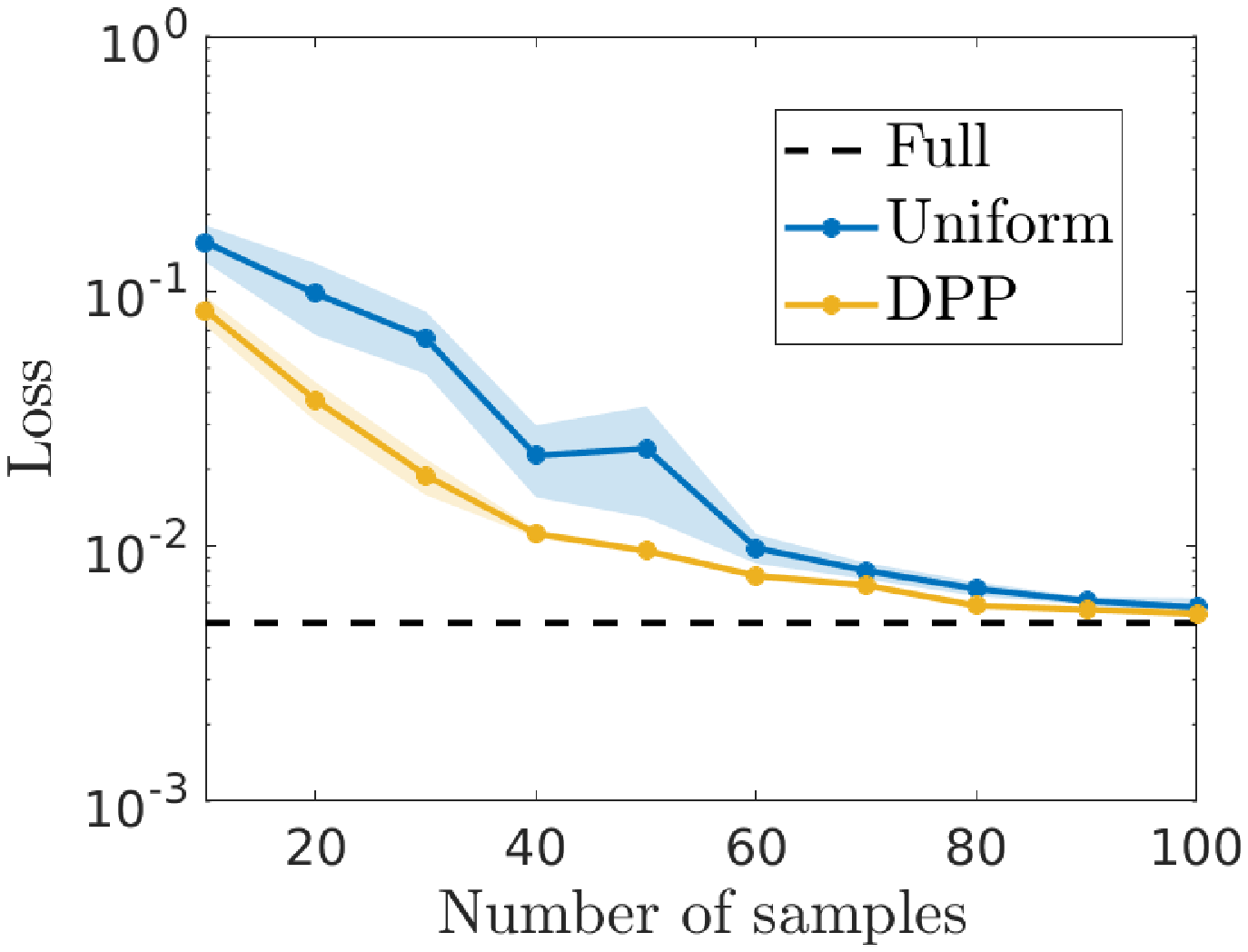}
			\caption{\texttt{System 2}}
			\label{fig:sys_pred_2}
		\end{subfigure}
		\begin{subfigure}[b]{0.32\textwidth}
			\includegraphics[width=\textwidth, height= 0.8\textwidth]{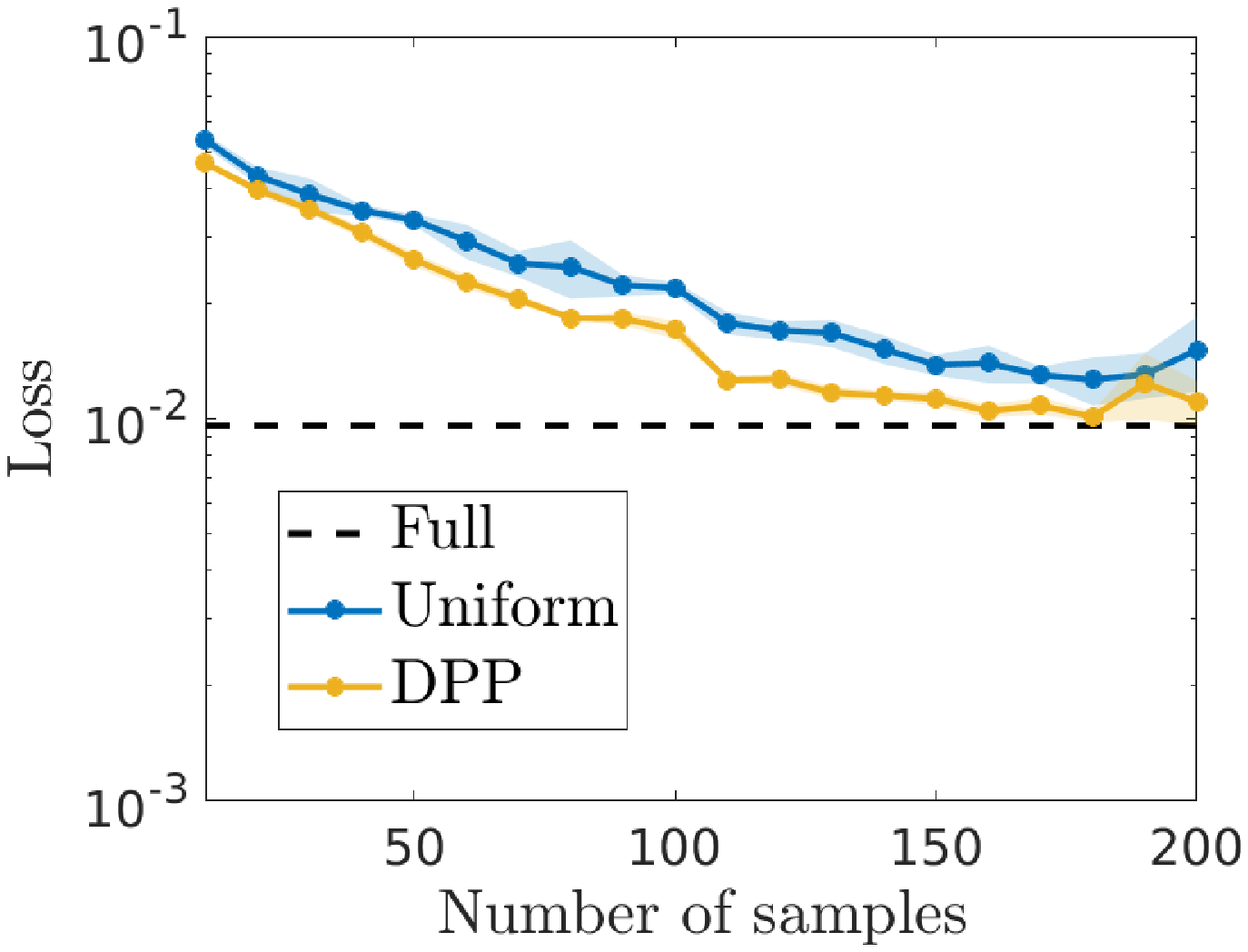}
			\caption{\texttt{System 3}}
			\label{fig:sys_pred_3}
		\end{subfigure}
	\caption{The prediction error as a function of the number landmarks $|\calC|$ using uniform vs extended $L$-ensemble sampling. The total number of training points is $n=500$. The loss is the MSE on the test set.}\label{fig:sys_pred}
\end{figure}
\FloatBarrier
\section{Discussion and conclusion}

Although the extended $L$-ensembles are particularly suited for sampling in the context of semi-parametric regression, the sampling cost in practice might be high if an exact DPP sampling algorithm is used. We acknowledge this difficulty and point the reader towards the recent advances in DPP sampling algorithm which have achieved an improved scalability, especially, if the number of sampled landmarks is not large~\cite{DPP-FX}. Sampling a fixed-size DPP without looking at all items was also studied in~\cite{DPPwithoutLooking}, which provides theoretical guarantees. We expect the methods of~\cite{DPP-FX,DPPwithoutLooking} to be also applicable for partial-projection DPPs, while further approximate DPP sampling algorithms can be developed in the future.
\section*{Acknowledgments}
We are grateful to {Micha\l}  Derezi\'nski for his insightful correspondence about DPPs and implicit regularization. We thank Simon Barthelm\'e for pointing out the references~\cite{Maurer,Lyons}.

\section*{Appendix}
\subsection{Solution of the penalized least-squares regression}
\begin{theorem}[existence, Thm 2.9 in \cite{GuBook}]\label{thm:existence} Suppose $L(f)$ is a continuous and convex functional in a Hilbert space $H$ and $J(f)$ is a square (semi) norm in $H$ with a null space $\mathcal{N}_J$, of finite dimension. If $L(f)$ has a unique minimizer in $\mathcal{N}_J$, then $L(f) + \gamma J(f)$ has a minimizer in $H$.
\end{theorem}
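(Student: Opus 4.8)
The plan is to run the direct method of the calculus of variations; the one point needing care is that the coercivity of $g:=L+\gamma J$ comes from two different places along the orthogonal splitting $H=\mathcal{N}_J\oplus\mathcal{N}_J^{\perp}$, namely from the quadratic penalty $\gamma J$ along $\mathcal{N}_J^{\perp}$ and from the \emph{uniqueness} (not mere existence) of the minimizer of $L$ along the finite-dimensional $\mathcal{N}_J$. Throughout, write $f=f_0+f_1$ for the orthogonal decomposition of $f\in H$. Since $f\mapsto\sqrt{J(f)}$ is a seminorm with kernel $\mathcal{N}_J$, the triangle inequality gives $\sqrt{J(f)}=\sqrt{J(f_1)}$; and, as is the case in our applications, $H$ carries the inner product $\langle f,g\rangle=\langle f_0,g_0\rangle_0+\langle f_1,g_1\rangle_1$, so that $J(f)=\langle f_1,f_1\rangle_1=\|f_1\|_H^{2}$, which is what I will use. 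Note also $g\ge L$ because $J\ge 0$.

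First I would fix $f_0^{\star}$, the unique minimizer of $L$ on $\mathcal{N}_J$, and extract an affine minorant of $L$ adapted to the splitting. As $L$ is continuous and convex it is subdifferentiable, and optimality of $f_0^{\star}$ over the subspace $\mathcal{N}_J$ yields a subgradient $\xi_0\in\partial L(f_0^{\star})$ with $\xi_0\in\mathcal{N}_J^{\perp}$; the subgradient inequality then reads
\[
L(f)\ \ge\ L(f_0^{\star})+\langle\xi_0,f-f_0^{\star}\rangle\ =\ L(f_0^{\star})+\langle\xi_0,f_1\rangle ,\qquad f\in H,
\]
since $f-f_0^{\star}-f_1=f_0-f_0^{\star}\in\mathcal{N}_J$ is orthogonal to $\xi_0$. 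Now let $\{f^{(n)}\}$ be a minimizing sequence for $g$; we may assume $g(f^{(n)})\le g(f_0^{\star})=L(f_0^{\star})=:M$, for otherwise $f_0^{\star}$ is already a minimizer. Then $L(f^{(n)})\le M$, so $\gamma\|f_1^{(n)}\|_H^{2}\le M-L(f^{(n)})\le-\langle\xi_0,f_1^{(n)}\rangle\le\|\xi_0\|_H\|f_1^{(n)}\|_H$, whence $\|f_1^{(n)}\|_H\le\|\xi_0\|_H/\gamma$: the components in $\mathcal{N}_J^{\perp}$ are bounded.

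The crux is to bound the null-space components $\{f_0^{(n)}\}$, and this is exactly where uniqueness is used. Suppose $\|f_0^{(n)}\|_H\to\infty$ along a subsequence. Since $\{f_1^{(n)}\}$ is bounded and $f_0^{\star}$ is fixed, $r_n:=\|f^{(n)}-f_0^{\star}\|_H\to\infty$, and for $d^{(n)}:=(f^{(n)}-f_0^{\star})/r_n$ the component of $d^{(n)}$ in $\mathcal{N}_J^{\perp}$ is $o(1)$; so, after passing to a further subsequence (the unit sphere of the finite-dimensional $\mathcal{N}_J$ is compact), $d^{(n)}\to d$ with $d\in\mathcal{N}_J$ and $\|d\|_H=1$. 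Applying convexity to $f_0^{\star}+d^{(n)}=(1-r_n^{-1})f_0^{\star}+r_n^{-1}f^{(n)}$ yields
\[
L(f^{(n)})\ \ge\ r_n\bigl(L(f_0^{\star}+d^{(n)})-L(f_0^{\star})\bigr)+L(f_0^{\star}) .
\]
By continuity $L(f_0^{\star}+d^{(n)})\to L(f_0^{\star}+d)$, and $L(f_0^{\star}+d)>L(f_0^{\star})$ \emph{strictly}, because $f_0^{\star}+d\in\mathcal{N}_J$ is distinct from $f_0^{\star}$ and $f_0^{\star}$ is the \emph{unique} minimizer of $L$ on $\mathcal{N}_J$. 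Hence $L(f^{(n)})\to+\infty$, contradicting $L(f^{(n)})\le M$; therefore $\{f_0^{(n)}\}$ is bounded.

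It then remains to conclude in the standard way: $\{f^{(n)}\}$ is bounded in $H$, hence has a weakly convergent subsequence $f^{(n)}\rightharpoonup f^{\star}$, and $g=L+\gamma J$, being convex and strongly continuous, is weakly lower semicontinuous, so $g(f^{\star})\le\liminf_n g(f^{(n)})=\inf_H g$ and $f^{\star}$ is a minimizer. I expect the boundedness of $\{f_0^{(n)}\}$ to be the only real obstacle: the penalty $\gamma J$ vanishes identically on $\mathcal{N}_J$, so without the uniqueness hypothesis a minimizing sequence could drift to infinity along a flat direction of $J$ on which $L$ is constant, and it is precisely uniqueness that upgrades to strict growth of $L$ along every nonzero direction of the finite-dimensional $\mathcal{N}_J$.
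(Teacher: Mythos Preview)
The paper does not prove this statement; it is quoted verbatim from Gu's book and used as a black box, so there is no ``paper's own proof'' to compare against. Your argument via the direct method is sound and is the standard route: the subgradient step giving $\xi_0\in\partial L(f_0^\star)\cap\mathcal{N}_J^{\perp}$ follows from the Moreau--Rockafellar sum rule (continuity of $L$ at a point of $\mathcal{N}_J$ suffices), the bound on $\|f_1^{(n)}\|$ is correct, and the contradiction argument for $\|f_0^{(n)}\|$ correctly isolates where \emph{uniqueness} (rather than mere existence) of the minimizer on $\mathcal{N}_J$ is used together with finite-dimensionality.

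One small caveat worth flagging: you explicitly specialize to $J(f)=\|f_1\|_H^2$, i.e., to the case where the Hilbert inner product on $H$ is the orthogonal sum $\langle\cdot,\cdot\rangle_0+\langle\cdot,\cdot\rangle_1$. This is exactly the setting of the paper (and of Gu's book), so nothing is lost for the application. But the abstract statement as written only says $\sqrt{J}$ is a seminorm ``in $H$'', which a priori gives $\sqrt{J(f_1)}\le C\|f_1\|_H$ and not the reverse inequality you need to pass from $J(f_1^{(n)})$ bounded to $\|f_1^{(n)}\|_H$ bounded. If you want the fully general version, either add the hypothesis that $\sqrt{J}$ is equivalent to the $H$-norm on $\mathcal{N}_J^{\perp}$, or run the weak-compactness argument in the (possibly weaker) topology induced by $\sqrt{J}$ on $\mathcal{N}_J^{\perp}$ and check that $L$ remains lower semicontinuous there.
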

%

\subsection{Useful  results}
\subsubsection{Classical identities}
First, we mention an instrumental result given in~\cite{KuleszaTaskar}. Next, we list a few well-known lemmata.
\begin{lemma}[theorem 2.1 in \cite{KuleszaTaskar}]\label{lemma:KT}
Let $\mathcal{A}\subseteq [n]$ and $M\in\mathbb{R}^{n\times n}$. Let $\mathds{1}_{\bar{\mathcal{A}}}$ be the diagonal matrix with ones in the diagonal positions corresponding to elements of $\bar{\mathcal{A}} = [n]\setminus \mathcal{A}$, and zeros otherwise. Then, it holds that
$
\sum_{\mathcal{C}: \mathcal{A}\subseteq \mathcal{C}}\det M_{\mathcal{C}\mathcal{C}} = \det(M+ \mathds{1}_{\bar{\mathcal{A}}}).
$
\end{lemma}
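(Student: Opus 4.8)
The plan is to prove the identity by expanding the determinant on the right-hand side through multilinearity in the columns, and then matching each resulting term with a principal minor of $M$.

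First I would write $\mathds{1}_{\bar{\mathcal{A}}} = \sum_{j\in\bar{\mathcal{A}}} \bm{e}_j \bm{e}_j^\top$, so that the $j$-th column of $M + \mathds{1}_{\bar{\mathcal{A}}}$ equals $M\bm{e}_j + \mathbbm{1}(j\in\bar{\mathcal{A}})\,\bm{e}_j$. Viewing $\det$ as a multilinear function of the columns and expanding each column indexed by $j\in\bar{\mathcal{A}}$ into the sum of the two contributions $M\bm{e}_j$ and $\bm{e}_j$ (while the columns indexed by $j\in\mathcal{A}$ contribute only $M\bm{e}_j$), one obtains
\[
\det(M+\mathds{1}_{\bar{\mathcal{A}}}) = \sum_{\mathcal{S}\subseteq\bar{\mathcal{A}}} \det N^{\mathcal{S}},
\]
where $N^{\mathcal{S}}$ denotes the matrix whose $j$-th column is $\bm{e}_j$ for $j\in\mathcal{S}$ and $M\bm{e}_j$ otherwise.

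Next I would evaluate $\det N^{\mathcal{S}}$. Applying the same permutation to rows and to columns so that the indices in $\mathcal{S}$ are placed last (which leaves the determinant unchanged), the matrix $N^{\mathcal{S}}$ takes the block-triangular form $\bigl(\begin{smallmatrix} M_{\mathcal{T}\mathcal{T}} & 0 \\ M_{\mathcal{S}\mathcal{T}} & \mathbb{I}_{|\mathcal{S}|} \end{smallmatrix}\bigr)$ with $\mathcal{T} = [n]\setminus\mathcal{S}$, since the columns $\bm{e}_j$ with $j\in\mathcal{S}$ vanish on the rows indexed by $\mathcal{T}$ and restrict to the identity on the rows indexed by $\mathcal{S}$. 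Hence $\det N^{\mathcal{S}} = \det M_{\mathcal{T}\mathcal{T}}$. Substituting back and re-indexing the sum by $\mathcal{C} = [n]\setminus\mathcal{S}$ — noting that $\mathcal{S}$ ranges over subsets of $\bar{\mathcal{A}}$ exactly when $\mathcal{C}$ ranges over subsets of $[n]$ containing $\mathcal{A}$ — yields the claimed identity $\sum_{\mathcal{C}:\,\mathcal{A}\subseteq\mathcal{C}}\det M_{\mathcal{C}\mathcal{C}} = \det(M+\mathds{1}_{\bar{\mathcal{A}}})$.

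The only delicate point I anticipate is tracking the signs when extracting the principal minor from $N^{\mathcal{S}}$; the block-triangular form obtained after the \emph{symmetric} row/column permutation makes this transparent, because the relevant off-diagonal block is zero and the permutation acts identically on rows and columns, so no stray sign is introduced.
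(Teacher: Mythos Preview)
Your proof is correct. The paper itself does not prove this lemma; it is listed in the Appendix among the ``classical identities'' and attributed to Theorem~2.1 of Kulesza--Taskar without further argument. Your column-multilinearity expansion followed by the block-triangular reduction is the standard proof of this identity, and the sign issue you flag is handled correctly by using the same permutation on rows and columns.
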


\begin{lemma}[push-through]\label{lemma:push}
Let $X\in \mathbb{R}^{m\times k}$ and $Y\in \mathbb{R}^{k\times m}$. Then, it holds that
\[
(XY+\mathbb{I}_m)^{-1}X = X(YX + \mathbb{I}_k)^{-1}.
\]
\end{lemma}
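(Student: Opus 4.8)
The plan is to reduce the identity to the elementary observation that $X$ intertwines the two matrices $XY+\mathbb{I}_m$ and $YX+\mathbb{I}_k$. Distributing the product gives the unconditional identity
\[
X(YX+\mathbb{I}_k) = XYX + X = (XY+\mathbb{I}_m)X .
\]
The push-through formula will then follow by inverting the two square matrices on either side; the only point requiring a word of justification is that these two inverses exist simultaneously.

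First I would record the intertwining relation above. Next, I would note that $XY+\mathbb{I}_m$ is invertible precisely when $YX+\mathbb{I}_k$ is: the nonzero eigenvalues of $XY$ and $YX$ coincide with multiplicities, so $-1$ is an eigenvalue of one iff it is an eigenvalue of the other; equivalently this follows from the Sylvester determinant identity $\det(XY+\mathbb{I}_m)=\det(YX+\mathbb{I}_k)$, itself a consequence of the matrix determinant lemma used elsewhere in the paper. Since the statement of the lemma implicitly presupposes that the inverses are well defined, I would simply take that as given in context.

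Finally, left-multiplying the intertwining relation by $(XY+\mathbb{I}_m)^{-1}$ and right-multiplying by $(YX+\mathbb{I}_k)^{-1}$ yields
\[
(XY+\mathbb{I}_m)^{-1}X = X(YX+\mathbb{I}_k)^{-1},
\]
which is the claim. There is essentially no obstacle here: the argument is two lines once the intertwining identity is written down, and the only subtlety — the equivalence of the two invertibility conditions — is routine; moreover, in the places where this lemma is invoked in the paper (notably in the proof of Corollary~\ref{corol:Nyst_error}, where the relevant matrices carry an $\epsilon\mathbb{I}$ regularization and are manifestly positive definite), both inverses exist automatically, so even that point can be dispensed with in those applications.
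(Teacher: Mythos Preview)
Your proof is correct and is in fact the standard two-line argument for this classical identity. The paper itself does not supply a proof of this lemma---it is listed among well-known identities and stated without proof---so there is nothing to compare against; your argument, including the remark on simultaneous invertibility via Sylvester's determinant identity, is entirely adequate and more than the paper provides.
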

\begin{lemma}[Cauchy-Binet]\label{lem:Cauchy-Binet}
Let $X\in \mathbb{R}^{n\times p}$ and $Y\in \mathbb{R}^{n\times p}$ with $n> p$. Then, it holds that 
\[
\det(X^\top Y) = \sum_{\calC\subset [n] : |\calC| = p}  \det(X_\calC^\top Y_\calC),
\]
where $X_\calC$ is the $p\times p$ matrix obtained from $X$ by selecting the rows indexed by $\calC$.
\end{lemma}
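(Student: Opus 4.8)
The plan is to prove the Cauchy--Binet identity directly from the Leibniz formula for the determinant together with multilinearity; the one real observation is that index tuples with a repeated entry contribute nothing. Set $A = X^\top Y$, so that $A_{ij} = \sum_{k=1}^n X_{ki} Y_{kj}$, and start from
\[
\det(A) = \sum_{\sigma\in S_p}\sign(\sigma)\prod_{i=1}^p A_{i\sigma(i)}.
\]
Substituting the entries of $A$ and distributing each product $\prod_i(\,\cdot\,)$ over the sums $\sum_{k_i=1}^n$, this becomes a sum over all tuples $(k_1,\dots,k_p)\in[n]^p$:
\[
\det(A) = \sum_{(k_1,\dots,k_p)\in[n]^p}\Big(\prod_{i=1}^p X_{k_i i}\Big)\sum_{\sigma\in S_p}\sign(\sigma)\prod_{i=1}^p Y_{k_i\sigma(i)}.
\]

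Next I would observe that, for a fixed tuple, the inner sum $\sum_{\sigma\in S_p}\sign(\sigma)\prod_i Y_{k_i\sigma(i)}$ is precisely the determinant of the $p\times p$ matrix whose $i$-th row is the $k_i$-th row of $Y$. Hence if two of the $k_i$ coincide this matrix has two equal rows and its determinant vanishes, so only tuples with pairwise distinct entries survive. Each surviving tuple determines, and is determined by, a $p$-subset $\calC = \{c_1 < \dots < c_p\}\subseteq[n]$ together with a permutation $\tau\in S_p$, via $k_i = c_{\tau(i)}$; I would therefore regroup the surviving terms as a double sum $\sum_{\calC : |\calC| = p}\sum_{\tau\in S_p}$.

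The step needing a little care is the sign bookkeeping. In the regrouped inner sum over $\sigma$, the change of variables $\sigma\mapsto\sigma\circ\tau$ extracts a factor $\sign(\tau)$ and leaves $\det(Y_\calC)$, where $Y_\calC$ is the submatrix of $Y$ on the rows of $\calC$. The remaining sum over $\tau$ is then $\sum_{\tau\in S_p}\sign(\tau)\prod_{i=1}^p X_{c_{\tau(i)} i} = \det(X_\calC)$, once more by the Leibniz formula. Collecting everything gives $\det(A) = \sum_{\calC : |\calC| = p}\det(X_\calC)\det(Y_\calC)$, and since $X_\calC$ and $Y_\calC$ are square we may rewrite $\det(X_\calC)\det(Y_\calC) = \det(X_\calC^\top)\det(Y_\calC) = \det(X_\calC^\top Y_\calC)$, which is the asserted formula.

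The main obstacle here is purely notational: keeping the Leibniz summation permutation $\sigma$ and the reordering permutation $\tau$ separate throughout the change of variables so that the signs $\sign(\sigma)$ and $\sign(\tau)$ combine correctly --- there is no analytic difficulty. A shorter but less elementary route would be to appeal to the functoriality of the $p$-th exterior power, $\Lambda^p(X^\top Y) = \Lambda^p(X^\top)\,\Lambda^p(Y)$, and read off the diagonal entry indexed by $\calC$ in the basis $\{e_\calC\}$ of $\Lambda^p\mathbb{R}^n$; the elementary computation above, however, is self-contained.
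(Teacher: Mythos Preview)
Your proof is correct. The paper states this lemma as a classical identity without proof (it appears in the appendix under ``Classical identities'' alongside the push-through identity and the matrix determinant lemma, all unproven), so there is nothing to compare against. Your argument via the Leibniz formula, elimination of tuples with repeated indices, and the $(\calC,\tau)$ regrouping with the sign bookkeeping $\sign(\sigma)=\sign(\rho)\sign(\tau)$ is the standard elementary derivation and is carried out cleanly; the alternative exterior-power remark is also accurate.
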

\begin{lemma}\label{lem:appli_inverse_block}
Let $A\in \mathbb{R}^n$ and $W\in \mathbb{R}^{n\times p}$ such that $A$ is invertible as well as  $W^\top A^{-1} W$. Then, it holds that
\[
\begin{pmatrix}
A & W\\
W^\top & 0
\end{pmatrix}^{-1} = \begin{pmatrix}
A^{-1} - A^{-1}W(W^\top A^{-1} W)^{-1} W^\top A^{-1} & A^{-1}W(W^\top A^{-1} W)^{-1}\\
(W^\top A^{-1}W)^{-1} W^\top A^{-1} & -(W^\top A^{-1}W)^{-1}
\end{pmatrix}.
\]
\end{lemma}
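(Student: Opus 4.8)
The plan is to reduce the claim to the classical Schur-complement formula for the inverse of a $2\times 2$ block matrix. Since $A$ is invertible, the block matrix factorizes as
\[
\begin{pmatrix} A & W\\ W^\top & 0\end{pmatrix}
=
\begin{pmatrix} \mathbb{I}_n & 0\\ W^\top A^{-1} & \mathbb{I}_p\end{pmatrix}
\begin{pmatrix} A & 0\\ 0 & S\end{pmatrix}
\begin{pmatrix} \mathbb{I}_n & A^{-1}W\\ 0 & \mathbb{I}_p\end{pmatrix},
\]
with Schur complement $S = -W^\top A^{-1}W$, which is invertible precisely by the second hypothesis. Inverting the three factors — the two unitriangular ones by negating their off-diagonal block, the block-diagonal one blockwise — and multiplying the inverses in reverse order produces a closed form from which the four blocks of the stated inverse are read off after substituting $S^{-1} = -(W^\top A^{-1}W)^{-1}$.

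Alternatively, and more self-containedly, I would take the matrix $M$ displayed on the right-hand side of the statement and verify directly that $\begin{pmatrix} A & W\\ W^\top & 0\end{pmatrix}M = \mathbb{I}_{n+p}$; since the block matrix is square, this alone identifies $M$ as the inverse. This reduces to four block identities: the $(1,1)$ block is $A\bigl(A^{-1} - A^{-1}W(W^\top A^{-1}W)^{-1}W^\top A^{-1}\bigr) + W(W^\top A^{-1}W)^{-1}W^\top A^{-1}$, which cancels to $\mathbb{I}_n$; the $(1,2)$ block is $A\cdot A^{-1}W(W^\top A^{-1}W)^{-1} - W(W^\top A^{-1}W)^{-1} = 0$; the $(2,1)$ block is $W^\top\bigl(A^{-1} - A^{-1}W(W^\top A^{-1}W)^{-1}W^\top A^{-1}\bigr) = 0$; and the $(2,2)$ block is $W^\top A^{-1}W(W^\top A^{-1}W)^{-1} = \mathbb{I}_p$.

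There is no genuine obstacle here — the argument is pure bookkeeping — so the only point worth recording is the role of the two hypotheses: $A^{-1}$ is needed to even write down every block, and invertibility of $W^\top A^{-1}W$ is needed for the Schur complement to be defined. In particular, no positive-definiteness of $A$ is assumed, which is exactly what allows this lemma to be invoked (via Lemma~\ref{lemma:proj_inverse_blocks}) in the proof of Proposition~\ref{prop:ExtendedExpectedpinv}, where the relevant $(1,1)$ block is only conditionally positive semi-definite.
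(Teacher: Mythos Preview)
Your proof is correct. The paper itself does not supply a proof of this lemma: it is listed under ``Classical identities'' and stated without argument, so there is no approach to compare against. Both of your routes --- the Schur-complement factorization and the direct block-by-block verification --- are standard and complete; the direct verification in particular leaves nothing to the reader.

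One tangential remark on your closing paragraph: in the paper as it stands, the proof of Proposition~\ref{prop:ExtendedExpectedpinv} invokes Lemma~\ref{lemma:proj_inverse_blocks} directly rather than going through Lemma~\ref{lem:appli_inverse_block}, so your comment about the chain of invocations is slightly off. This does not affect the correctness of your argument.
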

We also use a slightly different result.
\begin{lemma} \label{lemma:proj_inverse_blocks}
Let $A\in \mathbb{R}^{n\times n}$. Let $W\in \mathbb{R}^{n\times p}$ be a matrix with full column rank. Let $B$ be a matrix with orthonormal columns so that $\mathbb{P}_{W^\perp} = BB^\top$. If $B^\top A B$ is invertible, we have
\[
\begin{pmatrix}
A & W\\
W^\top & 0
\end{pmatrix}^{-1} = \begin{pmatrix}
(\mathbb{P}_{W^\perp}A \mathbb{P}_{W^\perp})^+ & \left(\mathbb{I} -(\mathbb{P}_{W^\perp}A \mathbb{P}_{W^\perp})^+ A\right) W^{+\top} \\
W^{+} \left(\mathbb{I}- A (\mathbb{P}_{W^\perp}A \mathbb{P}_{W^\perp})^+\right)  & - W^{+}\left( A-A (\mathbb{P}_{W^\perp}A \mathbb{P}_{W^\perp})^+ A\right) W^{+\top}
\end{pmatrix},
\]
where $W^+ = (W^\top W)^{-1} W^\top$ and $(\mathbb{P}_{W^\perp}A \mathbb{P}_{W^\perp})^+ = B(B^\top A B)^{-1} B^\top$ .
\end{lemma}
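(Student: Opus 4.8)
The plan is to verify directly that the matrix $N$ appearing on the right-hand side is a two-sided inverse of $M := \bigl(\begin{smallmatrix} A & W \\ W^\top & 0 \end{smallmatrix}\bigr)$. Since $M$ and $N$ are both square $(n+p)\times(n+p)$ matrices, it is enough to check $MN = \mathbb{I}_{n+p}$: this simultaneously shows that $M$ is invertible and identifies the inverse. Writing $G := B(B^\top A B)^{-1}B^\top$, I would first record that $G$ is indeed $(\mathbb{P}_{W^\perp}A\mathbb{P}_{W^\perp})^+$, which follows from $\mathbb{P}_{W^\perp}A\mathbb{P}_{W^\perp} = B(B^\top A B)B^\top$ together with the four Penrose conditions, using only that $B$ has orthonormal columns (no symmetry of $A$ is needed).

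Next I would list the elementary identities that drive the block computation. Since the columns of $B$ span $W^\perp$ and $W$ has full column rank, $B^\top W = 0$, hence $W^\top G = 0$ and $GW = 0$; moreover $W^+ W = \mathbb{I}_p$, $W^\top W^{+\top} = \mathbb{I}_p$, $W W^+ = \mathbb{P}_W = \mathbb{I}_n - BB^\top$, and $\mathbb{P}_W + \mathbb{P}_{W^\perp} = \mathbb{I}_n$ with $\mathbb{P}_{W^\perp} = BB^\top$. The crucial collapse is
\[
\mathbb{P}_{W^\perp} A\, G \;=\; BB^\top A\, B(B^\top A B)^{-1}B^\top \;=\; B\,(B^\top A B)(B^\top A B)^{-1}\,B^\top \;=\; BB^\top \;=\; \mathbb{P}_{W^\perp},
\]
and symmetrically $G A\,\mathbb{P}_{W^\perp} = \mathbb{P}_{W^\perp}$; this is precisely where the invertibility of $B^\top A B$ enters.

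Then I would expand $MN$ blockwise. The $(2,1)$ and $(2,2)$ blocks are immediate: $W^\top G = 0$ and $W^\top W^{+\top} = \mathbb{I}_p$ give $W^\top G = 0$ and $W^\top(\mathbb{I} - GA)W^{+\top} = \mathbb{I}_p$. For the $(1,1)$ block one gets $AG + WW^+(\mathbb{I} - AG) = \mathbb{P}_W + (\mathbb{I} - \mathbb{P}_W)AG = \mathbb{P}_W + \mathbb{P}_{W^\perp}AG = \mathbb{P}_W + \mathbb{P}_{W^\perp} = \mathbb{I}_n$ by the collapse. For the $(1,2)$ block one gets $(\mathbb{I} - WW^+)(A - AGA)W^{+\top} = \mathbb{P}_{W^\perp}(A - AGA)W^{+\top}$, and since $\mathbb{P}_{W^\perp}AGA = (\mathbb{P}_{W^\perp}AG)A = \mathbb{P}_{W^\perp}A$, this vanishes. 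Hence $MN = \mathbb{I}_{n+p}$ and the lemma follows.

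I do not expect a genuine obstacle here: the argument is a bookkeeping verification, and the only non-obvious point is recognizing that the identities $\mathbb{P}_{W^\perp}AG = \mathbb{P}_{W^\perp}$ and $GA\,\mathbb{P}_{W^\perp} = \mathbb{P}_{W^\perp}$ are what make all the cross-terms telescope. (An alternative route would be to apply Lemma~\ref{lem:appli_inverse_block} to a regularized matrix and let the regularization tend to zero, but the direct check is shorter and avoids invoking continuity of the pseudo-inverse.)
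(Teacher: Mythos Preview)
Your direct block-by-block verification is correct; all four blocks of $MN$ indeed collapse to the identity using the identities $W^\top G = 0$, $GW = 0$, $W^\top W^{+\top} = \mathbb{I}_p$, and the key cancellation $\mathbb{P}_{W^\perp}AG = \mathbb{P}_{W^\perp}$. The paper itself states this lemma in its appendix of classical identities without supplying a proof, so there is nothing to compare against---your argument fills that gap cleanly.
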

\begin{lemma}\label{lem:pinv}
Let $M$ be a $k\times k$ matrix and let $S$ be a $n\times k$ matrix such that $S^\top S = \mathbb{I}_{k}.$ Then, we have $(S M S^\top)^+ = S M^+ S^\top$.
\end{lemma}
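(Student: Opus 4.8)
The plan is to set $Y \triangleq S M^{+} S^{\top}$ and to verify directly that $Y$ satisfies the four defining Moore--Penrose conditions relative to $X \triangleq S M S^{\top}$; since the pseudo-inverse is characterized uniquely by those conditions, this gives $X^{+} = Y$. The single algebraic fact driving the whole argument is the orthonormality hypothesis $S^{\top} S = \mathbb{I}_{k}$, which collapses every interior product $S^{\top}S$ to the identity. Note that it is important to use $S^{\top}S = \mathbb{I}_{k}$ and \emph{not} $S S^{\top}$, which need not be the identity when $n > k$.

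First I would record the two composite products. Using $S^{\top} S = \mathbb{I}_{k}$ one obtains $XY = S M S^{\top} S M^{+} S^{\top} = S (M M^{+}) S^{\top}$ and, symmetrically, $Y X = S (M^{+} M) S^{\top}$. From here the first two Moore--Penrose axioms follow by the corresponding properties of $M^{+}$: $X Y X = S (M M^{+}) S^{\top} S M S^{\top} = S (M M^{+} M) S^{\top} = S M S^{\top} = X$, using $M M^{+} M = M$; and likewise $Y X Y = S (M^{+} M M^{+}) S^{\top} = S M^{+} S^{\top} = Y$, using $M^{+} M M^{+} = M^{+}$.

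For the remaining two axioms I would invoke the symmetry of $M M^{+}$ and of $M^{+} M$ (these are exactly the other two defining properties of $M^{+}$): then $(XY)^{\top} = S (M M^{+})^{\top} S^{\top} = S (M M^{+}) S^{\top} = XY$, so $XY$ is symmetric, and in the same way $(YX)^{\top} = YX$. All four conditions hold, hence $(S M S^{\top})^{+} = S M^{+} S^{\top}$.

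There is essentially no obstacle in this argument; it is a routine verification. The only point requiring a little care is bookkeeping: one must use the orthonormality of the columns of $S$ at each step and invoke the correct one of the four Moore--Penrose properties of $M^{+}$, without accidentally assuming $S$ is square or that $S S^{\top}$ is a projector equal to the identity.
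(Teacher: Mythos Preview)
Your proposal is correct and follows exactly the same approach as the paper: both verify the four Moore--Penrose conditions for $Y = S M^{+} S^{\top}$ by repeatedly collapsing $S^{\top}S$ to $\mathbb{I}_k$ and invoking the corresponding properties of $M^{+}$. Your write-up is in fact cleaner than the paper's, which contains a few bookkeeping typos in the displayed products.
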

\begin{proof}
The criteria satisfied by the Moore-Penrose pseudo-inverse are readily checked. It holds that
\begin{align*}
    (S M S^\top)(S M S^\top)^+(S M S^\top) &= S M^+ M M^+ S^\top = S M S^\top\\
    (S M S^\top)^+(S M S^\top)(S M S^\top)^+ &= S M M^+ M S^\top = S M^+ S^\top,
\end{align*}
while  the following matrices are Hermitian
\begin{align*}
 (S M S^\top)(S M S^\top)^+ & = S M^+ M S^\top\\
 (S M S^\top)^+(S M S^\top) & = S M M^+ S^\top,
\end{align*}
since $M^+ M$ and $M M^+$ are Hermitian.
\end{proof}
\begin{lemma}[matrix determinant lemma]\label{lem:matrixDetLemma}
Let $A\in \mathbb{R}^{n\times n}$ be an invertible matrix and $\bmu$, $\bm{v}\in \mathbb{R}^n$. Then, it holds that $\det(A+ \bm{u}\bm{v}^\top) = (1+\bm{v}^\top A^{-1} \bmu) \det(A)$.
\end{lemma}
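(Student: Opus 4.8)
The plan is to obtain the identity from a single two-way Schur-complement computation applied to one bordered $(n+1)\times(n+1)$ matrix. The guiding observation is that the scalar factor $1+\bm{v}^\top A^{-1}\bmu$ and the matrix $A+\bmu\bm{v}^\top$ both arise as Schur complements of the same augmented matrix, taken with respect to its two different diagonal blocks; equating the resulting two expressions for the single determinant delivers the lemma immediately.

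Concretely, I would introduce
\[
M = \begin{pmatrix} A & \bmu\\ -\bm{v}^\top & 1\end{pmatrix} \in \mathbb{R}^{(n+1)\times(n+1)}.
\]
First, since $A$ is invertible by hypothesis, evaluating $\det(M)$ through the Schur complement of the top-left block gives $\det(M) = \det(A)\bigl(1 - (-\bm{v}^\top)A^{-1}\bmu\bigr) = \det(A)(1+\bm{v}^\top A^{-1}\bmu)$. Second, evaluating $\det(M)$ through the Schur complement of the (trivially invertible) bottom-right scalar block gives $\det(M) = \det\bigl(A - \bmu\cdot 1^{-1}\cdot(-\bm{v}^\top)\bigr) = \det(A+\bmu\bm{v}^\top)$. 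Setting the two values of $\det(M)$ equal yields the claimed identity.

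The only ingredient requiring justification is the block-determinant formula $\det\bigl(\begin{smallmatrix} P & Q\\ R & S\end{smallmatrix}\bigr) = \det(P)\det(S-RP^{-1}Q)$ for invertible $P$, together with its mirror image for invertible $S$; both follow at once from the block factorization $\bigl(\begin{smallmatrix} P & Q\\ R & S\end{smallmatrix}\bigr) = \bigl(\begin{smallmatrix}\mathbb{I} & 0\\ RP^{-1} & \mathbb{I}\end{smallmatrix}\bigr)\bigl(\begin{smallmatrix} P & Q\\ 0 & S-RP^{-1}Q\end{smallmatrix}\bigr)$ upon taking determinants. There is essentially no obstacle here, so the difficulty is purely one of picking a clean bordered matrix; an equivalent route that avoids block matrices altogether is to factor $\det(A+\bmu\bm{v}^\top) = \det(A)\det(\mathbb{I}+A^{-1}\bmu\,\bm{v}^\top)$ and then use that the rank-one matrix $A^{-1}\bmu\,\bm{v}^\top$ has eigenvalues $\bm{v}^\top A^{-1}\bmu$ and $0$ (with multiplicity $n-1$), so that $\det(\mathbb{I}+A^{-1}\bmu\,\bm{v}^\top) = 1+\bm{v}^\top A^{-1}\bmu$.
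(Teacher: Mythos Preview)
Your proof is correct. Note that the paper itself does not supply a proof of this lemma---it is stated without justification in the appendix among the ``classical identities''---so your Schur-complement argument (and the alternative eigenvalue argument you sketch) is a perfectly valid way to fill in what the paper simply cites as well known.
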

\subsubsection{Lemmata related to the extended $L$-ensemble formalism}

We below provide  helpful formulae to calculate the determinant of matrices with the special structure of extended $L$-ensembles.
\begin{lemma}[Lemma 3.12 in \cite{barthelm2019spectral}]\label{lemma:Bart}
Let  $(M,V)$ is a NNP such as in Definition~\ref{def:NNP}. Let $\widetilde{M} = \mathbb{P}_{V^\perp}M\mathbb{P}_{V^\perp}\in \mathbb{R}^{n\times n}$ and $V = QR\in \mathbb{R}^{n\times p}$ where $Q\in \mathbb{R}^{n\times p}$ has orthonormal columns and $R$ is upper triangular. Then, we have
\[
\det \begin{pmatrix}
\widetilde{M} &  Q\\
Q^\top & 0\\
\end{pmatrix} = (-1)^p  [t^p] \det\left( \widetilde{M}+t Q Q^\top\right),
\]
where $ [t^p]$ denotes the coefficient of the term $t^p$. 
\end{lemma}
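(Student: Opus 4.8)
\emph{Proof idea.} The identity is purely algebraic: neither the NNP structure of $(M,V)$, nor the positivity of $\widetilde{M}$, nor the orthonormality of $Q$ plays any role, so the plan is to prove it for an arbitrary $n\times n$ matrix $\widetilde{M}$ and an arbitrary $n\times p$ matrix $Q$. The device is to embed both sides of the asserted identity into a single one-parameter family of $(n+p)\times(n+p)$ matrices,
\[
N_s \triangleq \begin{pmatrix} \widetilde{M} & Q \\ Q^\top & s\,\mathbb{I}_p \end{pmatrix},
\]
which coincides with the matrix on the left-hand side of the lemma at $s = 0$.

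First I would apply the Schur-complement determinant formula with respect to the bottom-right block, which is legitimate for every $s \neq 0$ since $s\mathbb{I}_p$ is then invertible:
\[
\det N_s = \det(s\mathbb{I}_p)\,\det\!\bigl(\widetilde{M} - Q(s\mathbb{I}_p)^{-1}Q^\top\bigr) = s^p\,\det\!\bigl(\widetilde{M} - s^{-1}QQ^\top\bigr).
\]
Substituting $s = -1/t$ (still with $t \neq 0$) and rearranging then gives
\[
\det\!\bigl(\widetilde{M} + tQQ^\top\bigr) = (-1)^p\, t^p\, \det N_{-1/t}.
\]

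Next I would expand $s \mapsto \det N_s$ as a polynomial in $s$: since $s$ appears only in the $p$ diagonal positions of the bottom-right block, and linearly, we may write $\det N_s = \sum_{j=0}^{p} c_j s^j$, with constant term $c_0 = \det N_0 = \det\bigl(\begin{smallmatrix}\widetilde{M} & Q\\ Q^\top & 0\end{smallmatrix}\bigr)$ --- exactly the quantity we wish to identify. Substituting $\det N_{-1/t} = \sum_{j=0}^{p} (-1)^j c_j\, t^{-j}$ into the previous display yields
\[
\det\!\bigl(\widetilde{M} + tQQ^\top\bigr) = (-1)^p \sum_{j=0}^{p} (-1)^j c_j\, t^{\,p-j},
\]
which, having been established so far only for $t \neq 0$ but with both sides polynomials in $t$, holds as a polynomial identity. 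Reading off the coefficient of $t^p$ --- only the $j = 0$ term contributes --- gives $[t^p]\det(\widetilde{M} + tQQ^\top) = (-1)^p c_0$, which is the assertion after multiplying both sides by $(-1)^p$.

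The only step demanding any care --- and the closest thing here to an obstacle --- is this passage from ``$s \neq 0$'' to the full polynomial identity: the Schur-complement formula is invalid at $s = 0$, so one cannot simply substitute, and the cleanest fix is to note that both sides of the $t$-relation are genuine polynomials and invoke agreement on a cofinite set. (Alternatively one could perturb $\widetilde{M} \mapsto \widetilde{M} + \varepsilon\mathbb{I}$ to make it invertible, apply a generalized matrix determinant lemma in the spirit of Lemma~\ref{lem:matrixDetLemma}, and let $\varepsilon \to 0$; introducing the family $N_s$ avoids that limiting argument.) Everything else is routine bookkeeping of signs and powers of $t$.
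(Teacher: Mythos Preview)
Your argument is correct. The Schur-complement trick via the auxiliary family $N_s$ is clean, and your handling of the $s=0$ issue by passing to a polynomial identity is the right move; the sign and degree bookkeeping checks out.

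As for comparison: the paper does not supply its own proof of this lemma --- it is quoted verbatim as Lemma~3.12 of~\cite{barthelm2019spectral} and invoked as a black box in the proof of Proposition~\ref{prop:norm}. So there is nothing in the present paper to compare your proposal against. Your proof is self-contained and in fact proves a slightly stronger statement than stated, since, as you observe, the NNP hypothesis and the orthonormality of $Q$ are never used: the identity holds for arbitrary $\widetilde{M}\in\mathbb{R}^{n\times n}$ and $Q\in\mathbb{R}^{n\times p}$.
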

\begin{lemma}[lemma 3.11 \cite{barthelm2019spectral}]\label{lemma:Qperp}
Let $M\in\mathbb{R}^{n\times n}$ and $V\in \mathbb{R}^{n\times p}$ such that $(M,V)$ is a NNP. Then, we have
\[
\det \begin{pmatrix}
M &  V\\
V^\top & 0\\
\end{pmatrix} = (-1)^p \det(V^\top V) \det(Q^\top_\perp M Q_\perp),
\]
where $Q_\perp\in \mathbb{R}^{n\times (n-p)}$ has orthonormal columns and is such that $V^\top Q_\perp = 0$.
\end{lemma}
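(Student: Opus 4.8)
The plan is to reduce the identity to a single block permutation after an orthogonal change of basis; conditional positivity plays no role here, the only thing used from the NNP hypothesis being that $V$ has full column rank. First I would take a thin QR factorization $V=QR$, with $Q\in\mathbb{R}^{n\times p}$ having orthonormal columns and $R\in\mathbb{R}^{p\times p}$ upper triangular and invertible, and use the factorization
\[
\begin{pmatrix} M & V\\ V^\top & 0\end{pmatrix}
=\begin{pmatrix} \mathbb{I}_n & 0\\ 0 & R^\top\end{pmatrix}
\begin{pmatrix} M & Q\\ Q^\top & 0\end{pmatrix}
\begin{pmatrix} \mathbb{I}_n & 0\\ 0 & R\end{pmatrix},
\]
so that, using $\det(V^\top V)=\det(R^\top R)=\det(R)^2$, we get $\det\bigl(\begin{smallmatrix} M & V\\ V^\top & 0\end{smallmatrix}\bigr)=\det(V^\top V)\,\det\bigl(\begin{smallmatrix} M & Q\\ Q^\top & 0\end{smallmatrix}\bigr)$. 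It then remains to show that $\det\bigl(\begin{smallmatrix} M & Q\\ Q^\top & 0\end{smallmatrix}\bigr)=(-1)^p\det(Q_\perp^\top M Q_\perp)$.

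For this, complete $Q$ to an orthogonal matrix $O=[\,Q\ \ Q_\perp\,]\in\mathbb{R}^{n\times n}$ with $Q_\perp$ as in the statement, and conjugate by $\diag(O^\top,\mathbb{I}_p)$ and $\diag(O,\mathbb{I}_p)$; this leaves the determinant unchanged because $\det(O)^2=\det(O^\top O)=1$. Since $O^\top Q=\bigl(\begin{smallmatrix}\mathbb{I}_p\\ 0\end{smallmatrix}\bigr)$ and $O^\top M O=\bigl(\begin{smallmatrix} Q^\top M Q & Q^\top M Q_\perp\\ Q_\perp^\top M Q & Q_\perp^\top M Q_\perp\end{smallmatrix}\bigr)$, the conjugated matrix is
\[
\begin{pmatrix}
Q^\top M Q & Q^\top M Q_\perp & \mathbb{I}_p\\
Q_\perp^\top M Q & Q_\perp^\top M Q_\perp & 0\\
\mathbb{I}_p & 0 & 0
\end{pmatrix},
\]
whose block rows and block columns have sizes $p,\,n-p,\,p$. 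Permuting the block rows from the order $(1,2,3)$ to $(3,2,1)$ exchanges the first and third blocks, hence is the product of the $p$ disjoint transpositions $(j,\,j+n)$ for $1\le j\le p$, with sign $(-1)^p$; the result is block lower triangular with diagonal blocks $\mathbb{I}_p,\,Q_\perp^\top M Q_\perp,\,\mathbb{I}_p$, so its determinant equals $\det(Q_\perp^\top M Q_\perp)$. Undoing the permutation gives $\det\bigl(\begin{smallmatrix} M & Q\\ Q^\top & 0\end{smallmatrix}\bigr)=(-1)^p\det(Q_\perp^\top M Q_\perp)$, and combining with the first step yields $\det\bigl(\begin{smallmatrix} M & V\\ V^\top & 0\end{smallmatrix}\bigr)=(-1)^p\det(V^\top V)\,\det(Q_\perp^\top M Q_\perp)$, as claimed.

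The only place requiring care is the sign $(-1)^p$ of that block-row permutation; everything else is routine block algebra. As an alternative I could instead invoke Lemma~\ref{lemma:Bart}: column and row operations using the $Q$-block replace $M$ by $\widetilde{M}=\mathbb{P}_{V^\perp}M\mathbb{P}_{V^\perp}$ without changing $\det\bigl(\begin{smallmatrix} M & Q\\ Q^\top & 0\end{smallmatrix}\bigr)$, Lemma~\ref{lemma:Bart} then rewrites this as $(-1)^p[t^p]\det(\widetilde{M}+tQQ^\top)$, and reading $\widetilde{M}+tQQ^\top$ in the basis $O$ gives $\det(\widetilde{M}+tQQ^\top)=t^p\det(Q_\perp^\top M Q_\perp)$, so the coefficient of $t^p$ is exactly $\det(Q_\perp^\top M Q_\perp)$.
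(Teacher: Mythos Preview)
Your proof is correct. The paper does not actually prove this lemma; it merely imports it from \cite{barthelm2019spectral} (Lemma~3.11 there) and states it without argument, so there is no in-paper proof to compare against. Your direct argument via QR, orthogonal conjugation, and a single block-row swap is clean and self-contained; your observation that only the full-column-rank part of the NNP hypothesis is used is also accurate. The alternative route through Lemma~\ref{lemma:Bart} that you sketch is essentially how the companion result, Proposition~\ref{prop:norm}, is handled in the appendix, so that variant aligns with the paper's general style, but your primary block-permutation argument is more elementary and avoids the auxiliary $[t^p]$ machinery altogether.
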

For the sake of completeness, we derive an equivalent expression for the normalization appearing in Definition~\ref{def:ExtendedL}, by extensively using proof techniques of~\cite{barthelm2019spectral,Barthelme2020}.
\begin{proposition}[normalization factors]\label{prop:norm}
Let  $(L,V)$ is a NNP such as in Definition~\ref{def:NNP}. We have the following identity
\[
(-1)^p\det(\mathbb{I}+ \widetilde{L}) \det(V^\top V) = \det \begin{pmatrix}
L + \mathbb{I} &  V\\
V^\top & 0\\
\end{pmatrix}. 
\]
\end{proposition}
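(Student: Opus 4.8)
The plan is to apply Lemma~\ref{lemma:Qperp} to the matrix $M = L+\mathbb{I}$ and then identify the resulting $(n-p)\times(n-p)$ reduced determinant with $\det(\mathbb{I}+\widetildeL)$. Before invoking Lemma~\ref{lemma:Qperp}, I would first check that $(L+\mathbb{I},V)$ is again a NNP in the sense of Definition~\ref{def:NNP}: the matrix $V$ has full column rank by hypothesis, and
\[
\mathbb{P}_{V^\perp}(L+\mathbb{I})\mathbb{P}_{V^\perp} = \mathbb{P}_{V^\perp}L\mathbb{P}_{V^\perp} + \mathbb{P}_{V^\perp} = \widetildeL + \mathbb{P}_{V^\perp}\succeq 0,
\]
since $\widetildeL\succeq 0$ by the NNP assumption on $(L,V)$ and $\mathbb{P}_{V^\perp}\succeq 0$. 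Hence Lemma~\ref{lemma:Qperp} with $M=L+\mathbb{I}$ gives
\[
\det\begin{pmatrix} L+\mathbb{I} & V\\ V^\top & 0\end{pmatrix} = (-1)^p \det(V^\top V)\,\det\!\bigl(Q_\perp^\top (L+\mathbb{I}) Q_\perp\bigr),
\]
where $Q_\perp\in\mathbb{R}^{n\times(n-p)}$ has orthonormal columns and $V^\top Q_\perp = 0$. Because $Q_\perp^\top Q_\perp = \mathbb{I}_{n-p}$, the last factor equals $\det\bigl(Q_\perp^\top L Q_\perp + \mathbb{I}_{n-p}\bigr)$.

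It then remains to show $\det\bigl(Q_\perp^\top L Q_\perp + \mathbb{I}_{n-p}\bigr) = \det(\widetildeL + \mathbb{I}_n)$. I would complete $Q_\perp$ to an orthogonal matrix $[\,Q\ Q_\perp\,]$, where the columns of $Q\in\mathbb{R}^{n\times p}$ form an orthonormal basis of the column space of $V$, so that $\mathbb{P}_{V^\perp} = Q_\perp Q_\perp^\top$ and therefore $\widetildeL = Q_\perp\bigl(Q_\perp^\top L Q_\perp\bigr)Q_\perp^\top$. Conjugating $\widetildeL + \mathbb{I}_n$ by $[\,Q\ Q_\perp\,]$ block-diagonalizes it: one obtains $\mathbb{I}_p$ in the $Q$-block (since $Q^\top\widetildeL = 0$) and $Q_\perp^\top L Q_\perp + \mathbb{I}_{n-p}$ in the $Q_\perp$-block. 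Taking determinants gives $\det(\widetildeL + \mathbb{I}_n) = \det\bigl(Q_\perp^\top L Q_\perp + \mathbb{I}_{n-p}\bigr)$, and substituting back into the displayed equation yields the claimed identity, the scalar factors $(-1)^p$ and $\det(V^\top V)$ commuting freely.

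I do not expect a genuine obstacle here; the two points that need a little care are the verification of the NNP hypothesis for $L+\mathbb{I}$ so that Lemma~\ref{lemma:Qperp} is legitimately applicable, and the bookkeeping of the orthogonal change of basis. One could equally avoid the change of basis by invoking the Weinstein--Aronszajn identity $\det(\mathbb{I}_n + Q_\perp X Q_\perp^\top) = \det(\mathbb{I}_{n-p} + X)$ with $X = Q_\perp^\top L Q_\perp$. Finally, the independence of both sides from the particular choice of $Q_\perp$ is automatic, so no additional argument on that point is needed.
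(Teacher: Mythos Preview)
Your argument is correct, and it is a genuinely shorter route than the paper's. The paper first invokes an external identity (Lemma~2.6 of \cite{Barthelme2020}) to replace $L+\mathbb{I}$ by $\widetildeL+\mathbb{P}_{V^\perp}$ inside the block determinant, then uses Lemma~\ref{lemma:Qperp} only to trade $V$ for $Q$ (extracting $\det(R^\top R)=\det(V^\top V)$), and finally appeals to the polynomial-coefficient identity of Lemma~\ref{lemma:Bart} together with an eigendecomposition of $Q_\perp^\top L Q_\perp+\mathbb{I}_{n-p}$ to identify the remaining factor with $\det(\mathbb{I}+\widetildeL)$. By contrast, you apply Lemma~\ref{lemma:Qperp} \emph{directly} with $M=L+\mathbb{I}$ (after the necessary NNP check), which reduces everything in one step to the scalar identity $\det(Q_\perp^\top L Q_\perp+\mathbb{I}_{n-p})=\det(\widetildeL+\mathbb{I}_n)$, and that is immediate from the orthogonal block-diagonalisation (or Weinstein--Aronszajn). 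Your approach thus avoids both Lemma~\ref{lemma:Bart} and the outside reference, making the proof self-contained and considerably more economical; the paper's route has the minor advantage of not needing to verify the NNP hypothesis for $L+\mathbb{I}$, but that check is trivial here.
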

\begin{proof}
Let $V = QR$ thanks to the QR-decomposition of $V$, with $Q\in \mathbb{R}^{n\times p}$. Let $Q_\perp\in \mathbb{R}^{n\times (n-p)}$ be a matrix with orthonormal columns so that $\mathbb{P}_{V^\perp} = Q_{\perp}Q_{\perp}^\top$.  Then, $\widetilde{L} = Q_{\perp}Q_{\perp}^\top L Q_{\perp}Q_{\perp}^\top$. This gives the following decomposition
\begin{equation}
     \mathbb{I}+ \widetilde{L} = Q Q^\top + Q_{\perp}\left( \mathbb{I}_{n-p} + Q_{\perp}^\top L Q_{\perp}\right)
Q_{\perp}^\top,\label{eq:1+tildeL}
\end{equation} 
where we used $\mathbb{I} = Q_{\perp}Q_{\perp}^\top +Q Q^\top$.
Define the eigendecomposition 
$
Q_{\perp}^\top L Q_{\perp}+ \mathbb{I}_{n-p} = U \Lambda U^\top,
$
with $U\in \mathbb{R}^{(n-p)\times (n-p)}$ an orthogonal matrix and $\Lambda\in \mathbb{R}^{(n-p)\times (n-p)}$ a diagonal matrix. Therefore, in the light of~\eqref{eq:1+tildeL}, we find 
$\det(\mathbb{I}+ \widetilde{L}) = \det(\Lambda)$. Now, we use the following identity, which results from Lemma 2.6 in~\cite{Barthelme2020},
\begin{equation}
\det \begin{pmatrix}
L + \mathbb{I} &  V\\
V^\top & 0\\
\end{pmatrix} = \det \begin{pmatrix}
\widetilde{L} + \widetilde{\mathbb{I}} &  V\\
V^\top & 0\\
\end{pmatrix},\label{eq:det_tilde}
\end{equation}
with $\widetilde{\mathbb{I}} = \mathbb{P}_{V_\perp}$ and $\widetilde{L} = \mathbb{P}_{V_\perp}L\mathbb{P}_{V_\perp}$.
Next, by using the decomposition of $V= QR$, we find
\begin{equation}
\det \begin{pmatrix}
\widetilde{L} + \widetilde{\mathbb{I}} &  V\\
V^\top & 0\\
\end{pmatrix}= \det(R^\top R)\det \begin{pmatrix}
\widetilde{L} + \widetilde{\mathbb{I}} &  Q\\
Q^\top & 0\\
\end{pmatrix} =  \det(V^\top V)\det \begin{pmatrix}
\widetilde{L} + \widetilde{\mathbb{I}} &  Q\\
Q^\top & 0\\
\end{pmatrix}\label{eq:det_RtR},
\end{equation}
where the first equality uses Lemma~\ref{lemma:Qperp}.
Then, we use Lemma~\ref{lemma:Bart} to express the last factor of the RHS of \eqref{eq:det_RtR},
\begin{equation}
\det \begin{pmatrix}
\widetilde{L} + \widetilde{\mathbb{I}} &  Q\\
Q^\top & 0\\
\end{pmatrix} = (-1)^p  [t^p] \det\left( \widetilde{L} + \widetilde{\mathbb{I}}+t Q Q^\top\right),\label{eq:lemma2}
\end{equation}
where $ [t^p] p(t)$ denotes the coefficient of the term $t^p$ in the polynomial $p(t)$. 
Next, by using the above eigendecomposition, we find
\begin{equation}
[t^p] \det\left( \widetilde{L} + \widetilde{\mathbb{I}}+t Q Q^\top\right) = [t^p] \det\left[
\begin{pmatrix}
Q & U
\end{pmatrix}
\begin{pmatrix}
t\mathbb{I}_{p\times p}  & 0\\
0 & \Lambda\end{pmatrix}
\begin{pmatrix}
Q^\top\\
U^\top
\end{pmatrix}
\right]= \det(\Lambda) = \det(\mathbb{I}+ \widetilde{L}),\label{eq:detI+L}
\end{equation}
where we used that $\begin{pmatrix}
Q & U
\end{pmatrix}\in \mathbb{R}^{n\times n}$ is an orthogonal matrix. Finally, we combine \eqref{eq:det_tilde}, \eqref{eq:det_RtR}, \eqref{eq:lemma2} and \eqref{eq:detI+L} to give  
\[
\det \begin{pmatrix}
L + \mathbb{I} &  V\\
V^\top & 0\\
\end{pmatrix} = \det(V^\top V) (-1)^p \det(\mathbb{I}+ \widetilde{L}),
\]
which is the desired result.
\end{proof}
Below, we give a lemma allowing to simplify several expressions in the paper.
\begin{lemma}[\cite{Barthelme2020}]\label{lem:projC}
Let $(K,V)$ be a NNP. Let $B(\calC)\in \mathbb{R}^{k\times (k-p)}$ be a matrix whose columns are an orthonormal basis of $(V_\calC)^\perp$. Let $\widetildeK = \mathbb{P}_{V_\perp}K\mathbb{P}_{V^\perp}$. Then, it holds that $\mathbb{P}_V C B(\calC) = 0$ and
\[
B^\top(\calC) K_{\calC\calC}B(\calC)= B^\top(\calC) \widetildeK_{\calC\calC}B(\calC).
\]
\end{lemma}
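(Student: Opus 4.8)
The plan is to obtain both assertions as immediate consequences of the defining property of $B(\calC)$: its columns form an orthonormal basis of $(V_\calC)^\perp$ with $V_\calC = C^\top V$, so that $V_\calC^\top B(\calC) = 0$.

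First I would prove the orthogonality statement $\mathbb{P}_V C B(\calC) = 0$. Since $(K,V)$ is a NNP, $V$ has full column rank, so $\mathbb{P}_V = V(V^\top V)^{-1}V^\top$ is well defined. Then
\[
\mathbb{P}_V C B(\calC) = V(V^\top V)^{-1}\bigl(V^\top C\bigr) B(\calC) = V(V^\top V)^{-1} V_\calC^\top B(\calC) = 0,
\]
using $V^\top C = (C^\top V)^\top = V_\calC^\top$ together with $V_\calC^\top B(\calC) = 0$.

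Second, for the matrix identity I would expand $\widetildeK = \mathbb{P}_{V^\perp} K \mathbb{P}_{V^\perp}$ with $\mathbb{P}_{V^\perp} = \mathbb{I}_n - \mathbb{P}_V$, so that $\widetildeK_{\calC\calC} = C^\top \widetildeK C = C^\top(\mathbb{I}_n - \mathbb{P}_V) K (\mathbb{I}_n - \mathbb{P}_V) C$ and hence
\[
B^\top(\calC) \widetildeK_{\calC\calC} B(\calC) = \bigl[(\mathbb{I}_n - \mathbb{P}_V) C B(\calC)\bigr]^\top K \bigl[(\mathbb{I}_n - \mathbb{P}_V) C B(\calC)\bigr].
\]
By the first step $\mathbb{P}_V C B(\calC) = 0$, so $(\mathbb{I}_n - \mathbb{P}_V) C B(\calC) = C B(\calC)$, and the right-hand side collapses to $B^\top(\calC) C^\top K C B(\calC) = B^\top(\calC) K_{\calC\calC} B(\calC)$, which is the desired equality.

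There is essentially no obstacle here: the argument is purely linear-algebraic, and the only hypothesis used beyond the definition of $B(\calC)$ is that $V$ has full column rank, which guarantees that $\mathbb{P}_V$ is the genuine orthogonal projector onto $\range(V)$; this is part of the NNP assumption. In particular the identity does not require $K$ (or $\widetildeK$) to be positive semidefinite.
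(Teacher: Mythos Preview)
Your proof is correct and follows essentially the same approach as the paper: both arguments exploit the defining relation $V_\calC^\top B(\calC)=0$ to show that $\mathbb{P}_V C B(\calC)=0$ (the paper does this via the QR decomposition $V=QR$ and the equivalent identity $Q_\calC^\top B(\calC)=0$), and then expand $\widetildeK_{\calC\calC}$ to conclude. Your version is marginally more direct in that it avoids introducing the QR factorization, but the underlying idea is identical.
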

\begin{proof}
By definition, $\mathbb{P}_{V^\perp} = \mathbb{I}- QQ^\top$ where $Q$ has orthonormal columns and is obtained thanks to the QR-decomposition  $V = Q R$. Then, we find $B^\top(\calC)V_\calC = 0 = B^\top(\calC)Q_\calC$. Therefore, we obtain $B^\top(\calC) \widetilde{K}_{\calC\calC}B(\calC) = B^\top(\calC) K_{\calC\calC}B(\calC)$ by using
$
\widetilde{K}_{\calC\calC} = (C^\top - Q_\calC Q^\top ) K (C-Q Q^\top_\calC),
$
and $B^\top(\calC)Q_\calC = 0$.
\end{proof}
\subsection{Proof of Theorem~\ref{thm:implicit_reg} \label{sec:deferred}}
 Let $\bm{u},\bm{v}\in \mathbb{R}^{n+p}$. 
First, we first notice that
 \[
 \begin{pmatrix}
C^\top K C & C^\top V\\
V^\top C & 0\\
\end{pmatrix} =
\begin{pmatrix}
C^\top  & 0\\
0 & \mathbb{I}\\
\end{pmatrix}
\begin{pmatrix}
 K  &  V\\
V^\top & 0\\
\end{pmatrix}
\begin{pmatrix}
C  & 0\\
0 & \mathbb{I}\\
\end{pmatrix}= \widetilde{C}^\top \begin{pmatrix}
 K  &  V\\
V^\top & 0\\
\end{pmatrix} \widetilde{C},
\]
where $\widetilde{C}$ is a sampling matrix corresponding to a subset of $\{1,\dots,n+p\}$, i.e., $\widetilde{C}$ is associated to the set $\widetilde{\mathcal{C}} = \mathcal{C}\cup \mathcal{A}$ with $\mathcal{A} = \{n+1,\dots, n+p\}$. Therefore, by using Lemma~\ref{lemma:KT}, we have the following identity
$\sum_{\widetilde{\mathcal{C}}: \mathcal{A}\subseteq \widetilde{\mathcal{C}}}\det Q_{\widetilde{\mathcal{C}}\widetilde{\mathcal{C}}} = \det(Q+ \mathds{1}_{\bar{\mathcal{A}}})$ for all  $Q\in\mathbb{R}^{(n+p)\times (n+p)}$. In particular, for $ Q = \bigl(\begin{smallmatrix}
K  &  V\\
V^\top & 0\\
\end{smallmatrix}\bigr)$, this gives 
\begin{equation}
 \sum_{\mathcal{C}\subseteq [n]} \det \begin{pmatrix}
C^\top K C & C^\top V\\
V^\top C & 0\\
\end{pmatrix} = \det \begin{pmatrix}
K + \mathbb{I} &  V\\
V^\top & 0\\
\end{pmatrix}.\label{eq:reg}
\end{equation}
Then, the desired expectation can be written as follows
\[
\frac{1}{N}\sum_{\calC \subseteq [n]}\det (\widetildeC^\top Q \widetildeC) \times \bm{u}^\top \widetildeC \left(\widetildeC^\top Q \widetildeC\right)^{-1} \widetildeC^\top \bm{v}  = \frac{1}{N} \sum_{\calC\subseteq [n]}\det(\widetildeC^\top Q \widetildeC) - \frac{1}{N}\sum_{\calC \subseteq [n]}\det(\widetildeC^\top (Q- \bm{v} \bm{u}^\top) \widetildeC),
\]
thanks to the matrix determinant lemma (Lemma~\ref{lem:matrixDetLemma}), and where the normalization $N$ is given by~\eqref{eq:reg}. Define for simplicity $T_1 \triangleq \sum_{\calC\subseteq [n]}\det(\widetildeC^\top Q \widetildeC)$ and $T_2 \triangleq \sum_{\calC \subseteq [n]}\det(\widetildeC^\top (Q- \bm{v} \bm{u}^\top) \widetildeC)$.
Then, we find 
\[
T_1 =\det \begin{pmatrix}
K + \mathbb{I} &  V\\
V^\top & 0\\
\end{pmatrix} = \det(Q + \mathds{1}_{[n]}) = N,
\]
where $\mathds{1}_{\mathcal{B}}$ the diagonal matrix with ones in the diagonal positions corresponding to elements of the set $\mathcal{B}$, and zeros otherwise.
Let $\bm{u} = [\bm{u}_0,  \bm{u}_1]^\top$ and $\bm{v} = [\bm{v}_0,  \bm{v}_1]^\top$, with $\bm{u}_0, \bm{v}_0 \in \mathbb{R}^n$ and $\bm{u}_1, \bm{v}_1 \in \mathbb{R}^p$. Similarly, we have also
\begin{align*}
T_2 = \sum_{\calC \subseteq [n]}\det\begin{pmatrix}
C^\top (K-\bm{v}_0\bm{u}_0^\top) C & C^\top (V-\bm{v}_0\bm{u}_1^\top)\\
(V^\top-\bm{v}_1\bm{u}_0^\top) C & -\bm{v}_1\bm{u}_1^\top\\
\end{pmatrix} &= \det\begin{pmatrix}
(K + \mathbb{I}-\bm{v}_0\bm{u}_0^\top)  &  V-\bm{v}_0\bm{u}_1^\top\\
V^\top-\bm{v}_1\bm{u}_0^\top & -\bm{v}_1\bm{u}_1^\top\\
\end{pmatrix}\\
& = \det(Q + \mathds{1}_{[n]} - \bm{v} \bm{u}^\top).
\end{align*}

Hence, we obtain another expression for the desired expectation
\begin{align*}
\mathbb{E}_{\calC\sim DPP(K,V)}\left[ \begin{pmatrix}
\bm{u}_{0,\calC}\\
\bm{u}_1
\end{pmatrix}^\top\begin{pmatrix}
C^\top K C & C^\top V\\
V^\top C & 0\\
\end{pmatrix}^{-1} \begin{pmatrix}
\bm{v}_{0,\calC}\\
\bm{v}_1
\end{pmatrix}\right]&=\frac{\det(Q+ \mathds{1}_{[n]})-\det(Q + \mathds{1}_{[n]} - \bm{v} \bm{u}^\top)}{\det(Q + \mathds{1}_{[n]})} \\
&= \bm{u}^\top (Q+ \mathds{1}_{[n]})^{-1} \bm{v} ,
\end{align*}
where the last equality uses the matrix determinant lemma. This completes the proof.

\subsection{Details of the derivation of the large scale system of Section~\ref{sec:Nyst_penreg}}
After elementary manipulations, we obtain the following system
\begin{align*}
    &\left(\mathbb{P}_{V_\calC^\perp}K_\calC \mathbb{P}_{V^\perp} K_\calC^\top \mathbb{P}_{V_\calC^\perp} + n\gamma \mathbb{P}_{V_\calC^\perp} K_{\calC\calC} \mathbb{P}_{V_\calC^\perp}\right)\bma' = \mathbb{P}_{V_\calC^\perp}K_\calC \mathbb{P}_{V^\perp} \bm{y}\label{eq:TBC}\\
    &\bmb' = (V^\top V)^{-1} V^\top \left( \bm{y} - K_\calC^\top \mathbb{P}_{V_\calC^\perp}\bma'\right)\nonumber\\
    &\mathbb{P}_{V_\calC}\bma' = 0,\nonumber
\end{align*}
which yields the system in~\eqref{eq:LinSyst}, as we show below.
Let $B(\calC)\in \mathbb{R}^{k\times (k-p)}$ be a matrix whose columns are an orthonormal basis of $(V_\calC)^\perp$, and that is such that $\mathbb{P}_{V_\calC^\perp} =B(\calC) B^\top(\calC)$. Define $\Xi = B^\top(\calC) K_{\calC\calC} B(\calC)$, which is non-singular almost surely, as shown in the proof of Proposition~\ref{prop:ExtendedExpectedpinv}. Then, the first equation of the system yields
\begin{align*}
\bma' &= B(\calC)\left( B^\top(\calC)K_\calC \mathbb{P}_{V^\perp} K_\calC^\top B(\calC)  + n\gamma B^\top(\calC) K_{\calC\calC} B(\calC) \right)^{-1}B^\top(\calC) K_\calC\mathbb{P}_{V^\perp} \bm{y}\\
& = B(\calC)\Xi^{-1/2}\left( \Xi^{-1/2}B^\top(\calC)K_\calC \mathbb{P}_{V^\perp} K_\calC^\top B(\calC)\Xi^{-1/2}  + n\gamma \mathbb{I}_{k-p} \right)^{-1}\Xi^{-1/2}B^\top(\calC) K_\calC\mathbb{P}_{V^\perp} \bm{y}\\
&= B(\calC)\Xi^{-1}B^\top(\calC) K_\calC\mathbb{P}_{V^\perp} \left(\mathbb{P}_{V^\perp}K_\calC^\top B(\calC)\Xi^{-1}B^\top(\calC)K_\calC\mathbb{P}_{V^\perp} + n\gamma \mathbb{I}_n\right)^{-1} \mathbb{P}_{V^\perp}\bm{y},
\end{align*}
where we used the push-through identity for the last equality (Lemma~\ref{lemma:push}) $(XY+\mathbb{I})^{-1}X = X(YX + \mathbb{I})^{-1}$ with $X = \Xi^{-1/2}B^\top(\calC) K_\calC\mathbb{P}_{V^\perp}$, $Y = X^\top$ and $\mathbb{P}_{V^\perp}^2 = \mathbb{P}_{V^\perp}$.
As detailed above in Lemma~\ref{lem:projC}, an equivalent expression for $\Xi = B^\top(\calC) K_{\calC\calC} B(\calC)$ is $\Xi = B^\top(\calC) \widetilde{K}_{\calC\calC} B(\calC)$.
The in-sample estimator $\hat{\bm{z}}_{N} = K^\top_\calC \bma'^\star + V^\top \bmb'^\star$ is then given, in terms of the projected Nystr\"om approximation, as follows
\[
\hat{\bm{z}}_{N} = \widetilde{L(\calC)} \left(\widetilde{L(\calC)}+n\gamma \mathbb{I}_n\right)^{-1}\mathbb{P}_{V^\perp}\bm{y} + \mathbb{P}_{V}\bm{y},
\]
which is the result stated in Section~\ref{sec:Nyst_penreg}.
\paragraph{Preconditioning}
Consider the linear system in~\eqref{eq:LinSyst} and notice that the largest eigenvalue of the matrix 
\[
B^\top(\calC)K_\calC \mathbb{P}_{V^\perp} K_\calC^\top B(\calC) = B^\top(\calC)C^\top \widetildeK^2 C B(\calC),
\]
in the first term can be possibly numerically large, since it involves $\widetildeK^2$.  Therefore, the linear system might be ill-conditioned. A preconditioning  may improve the convergence of a linear solver such as the conjugate gradient method. We define the preconditioner as follows.
Denote the marginal probabilities of $DPP(K/\lambda,V)$ by 
\begin{equation}
    \label{eq:weightsLS}
    \bm{\ell}= \diag\left(\mathbb{P}_V + \widetildeK (\widetildeK + \lambda\mathbb{I})^{-1}\right),
\end{equation}
as given by~\eqref{eq:marginalKernel}, and define the diagonal matrix $D = \Diag(\bm{\ell})^{-1}$.
Then, it simply holds that 
$
\mathbb{E}_\calC[CD_{\calC\calC}C^\top] = \mathbb{I},
$
since the marginal probability is $\bm{\ell}_i = \Pr(i\in \mathcal{Y})$ for $\mathcal{Y}\sim DPP(K/\lambda,L)$. This remark motivates a preconditioning of the above linear system by approximating $\widetildeK^2$ by $\widetildeK C D_{\calC\calC}C^\top\widetildeK $. Let $H$ be a matrix obtained by the following Cholevski decomposition
\[
HH^\top = \left( B^\top(\calC)\widetildeK_{\calC\calC} D_{\calC\calC}  \widetildeK_{\calC\calC} B(\calC)  + n\gamma B^\top(\calC) \widetildeK_{\calC\calC} B(\calC) \right)^{-1}.
\]
The equivalent resulting linear system
\begin{align}
\label{eq:precSystem}
\bma'^\star &=  B(\calC) H \Big(H^\top B^\top(\calC)K_\calC \mathbb{P}_{V^\perp} K_\calC^\top B(\calC)H  + n\gamma H^\top B^\top(\calC) K_{\calC\calC} B(\calC) H\Big)^{-1}H^\top B^\top(\calC) K_\calC\mathbb{P}_{V^\perp} \bm{y},
\end{align}
 is likely to have a smaller condition number. This type of conditioning was studied in the case of kernel ridge regression in~\cite{BLESS}. 
  \begin{figure}[ht]
		\centering
		\begin{subfigure}[t]{0.24\textwidth}
			\includegraphics[width=\textwidth, height= 0.91\textwidth]{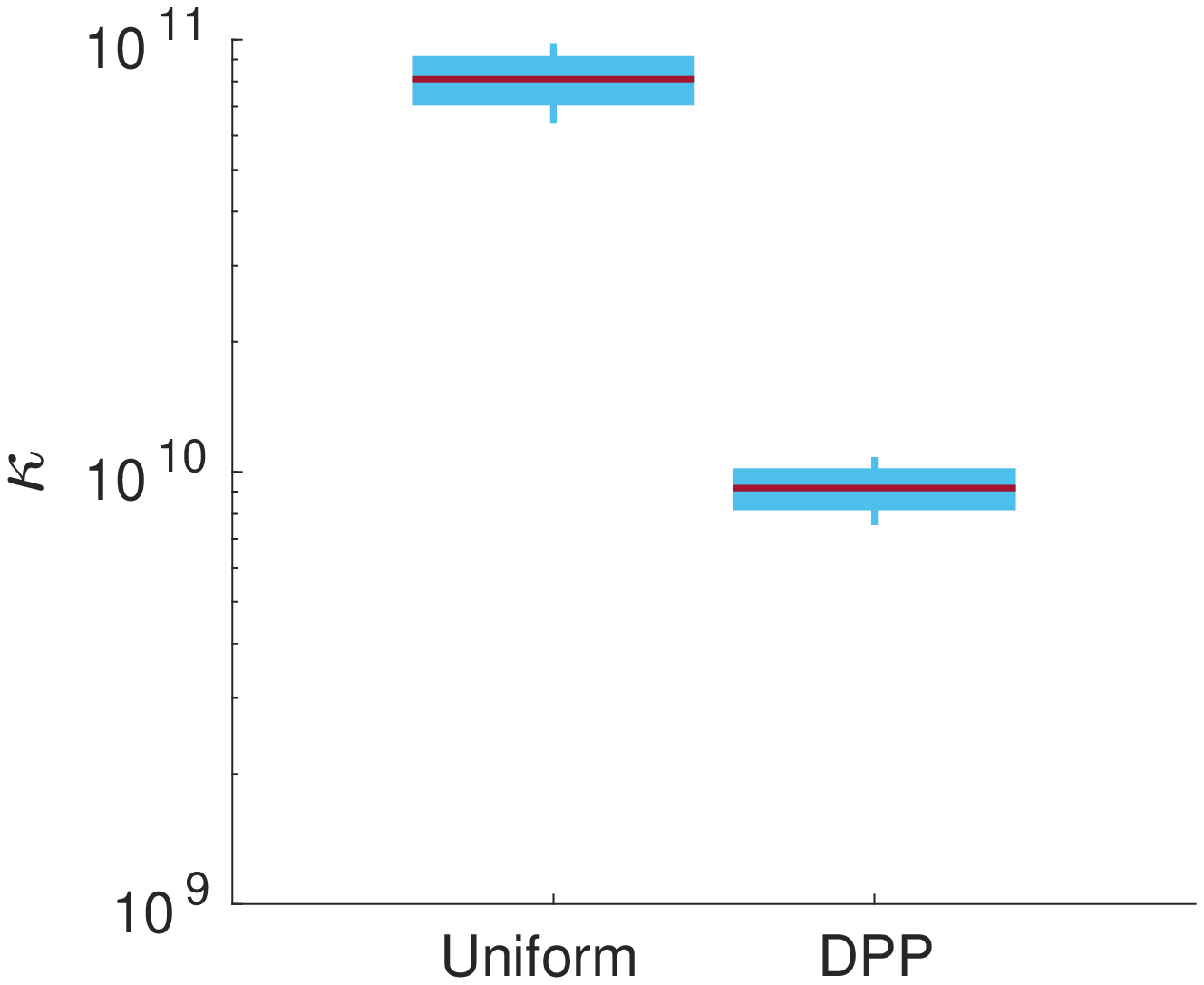}
			\caption{\texttt{Parkinson}}
		\end{subfigure}
		\begin{subfigure}[t]{0.24\textwidth}
			\includegraphics[width=\textwidth, height= 0.91\textwidth]{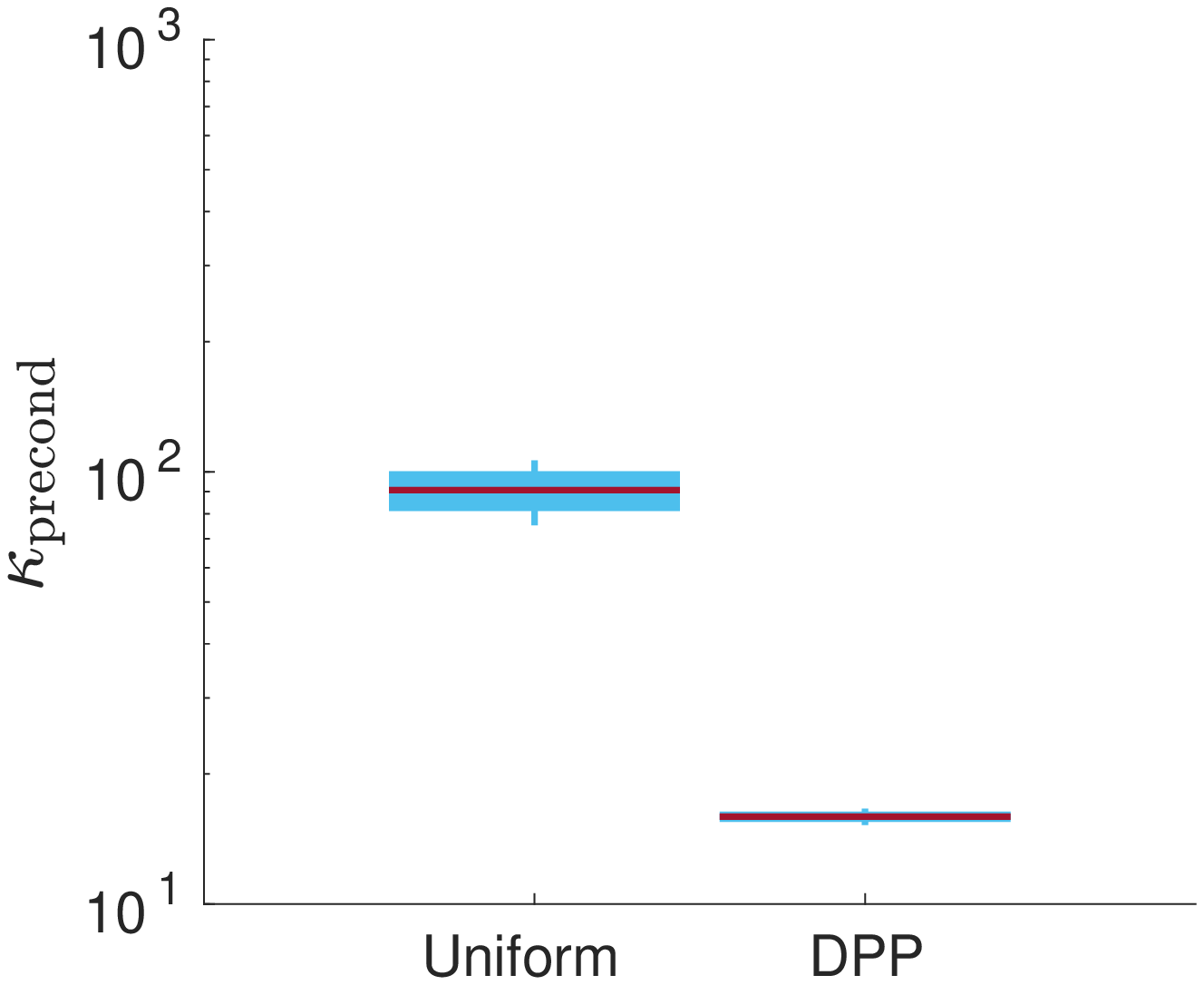}
			\caption{\texttt{Parkinson}}
		\end{subfigure}
		\begin{subfigure}[t]{0.24\textwidth}
			\includegraphics[width=\textwidth, height= 0.91\textwidth]{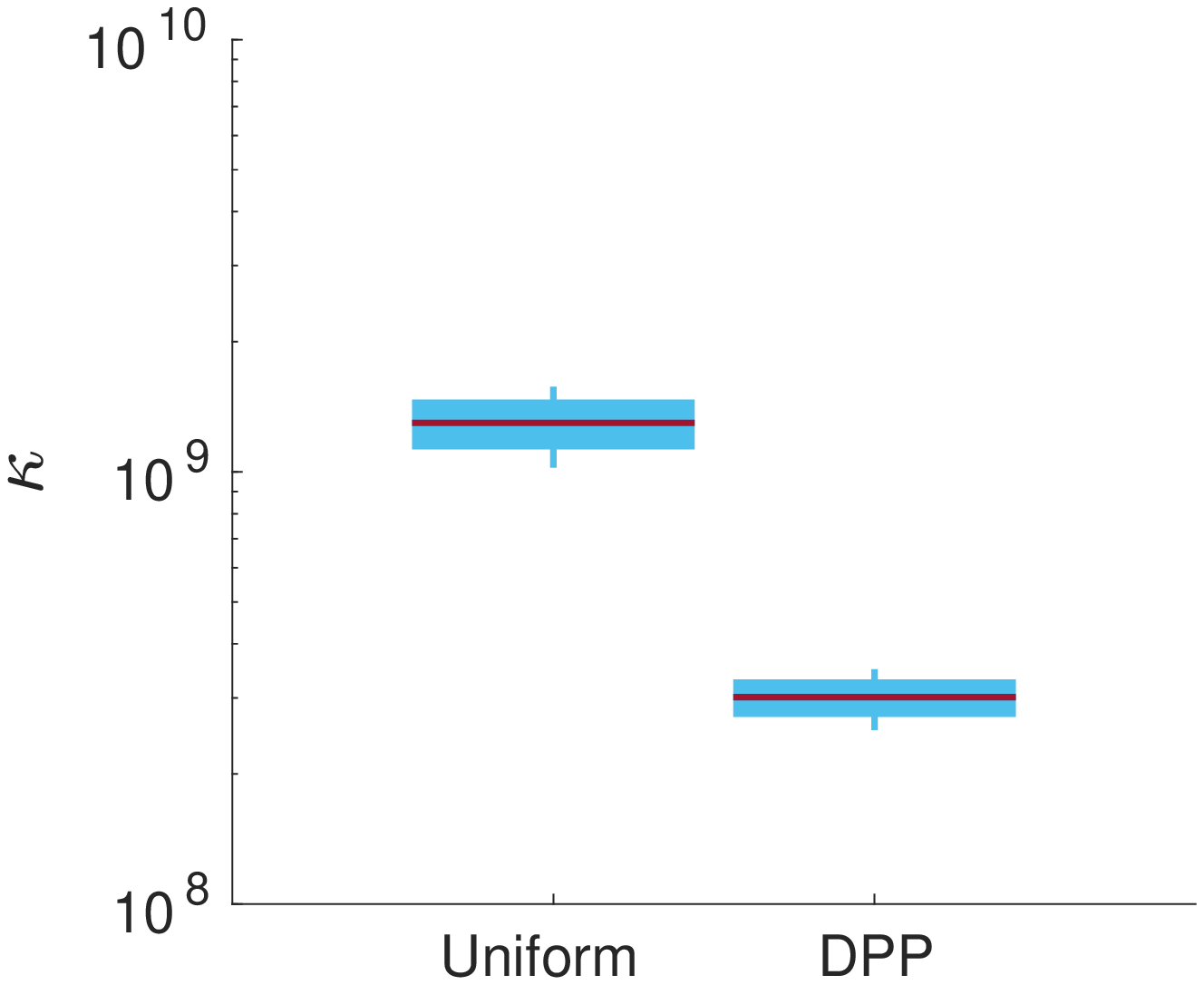}
			\caption{\texttt{Pumadyn8FM}}
		\end{subfigure}	
		\begin{subfigure}[t]{0.24\textwidth}
			\includegraphics[width=\textwidth, height= 0.91\textwidth]{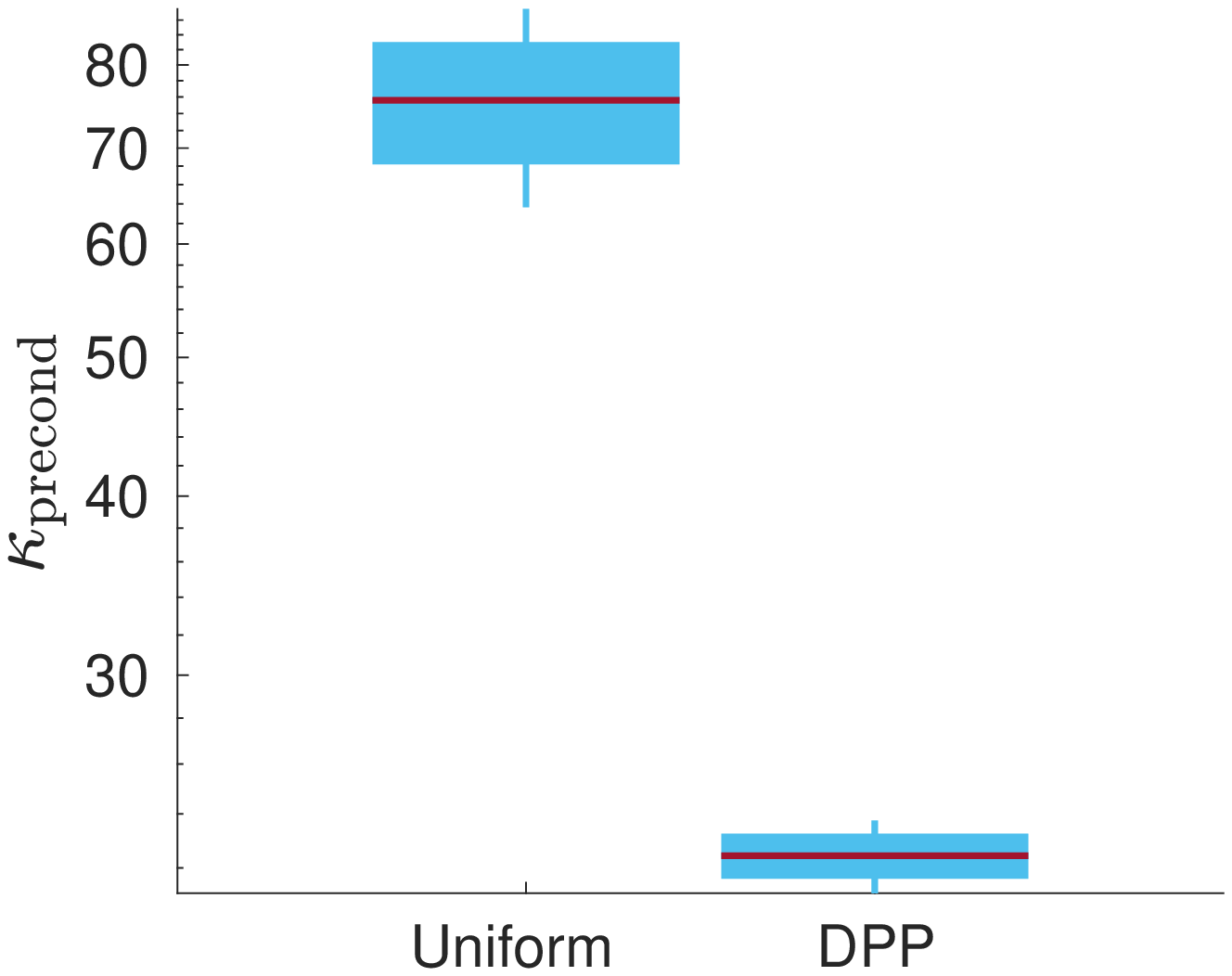}
			\caption{\texttt{Pumadyn8FM}}
		\end{subfigure}			
		\caption{Preconditioning results. The condition number of the linear system before ($\kappa$) and after the preconditioning ($\kappa_{\text{precond}}$) is plotted for uniform  and DPP sampling. From left to right, the condition number before and after preconditioning, for \texttt{Parkinson} and \texttt{Pumadyn8FM} data sets, respectively.}\label{fig:Preconditioner}
\end{figure}

 For convenience, we illustrate the use of the preconditioner on the UCI benchmark data sets \texttt{Parkinson}, and \texttt{Pumadyn8FM}. A Gaussian kernel with $\sigma = 5$ and linear regression component is used after standardizing the data sets: $V = [X \enskip \bm{1}_n]$ where $X = [\bmx_1 \dots \bmx_n]^\top\in \mathbb{R}^{n\times d}$. We compare the condition number of the linear system in~\eqref{eq:LinSyst} with the preconditioned system using the preconditioner given in~\eqref{eq:precSystem}.  For the uniform sampling method, we use  $D = (n/|\calC|)\diag(\bm{1}_n)$ in the preconditioner formula as in~\cite{BLESS}.  The ridge regularization parameter for the linear system  as well as the regularization parameter of the DPP are equal to $\lambda = \gamma = 10^{-6}$ for simplicity. The number of samples is equal to the effective dimensionality: $\sum_i \bm{\ell}_i$, which is $1325$ and $636$ for the \texttt{Parkinson}, and  \texttt{Pumadyn8FM} data sets respectively. The experiment is repeated 10 times. From the results in Figure~\ref{fig:Preconditioner}, we empirically see that using the proposed preconditioner in combination with the DPP sampling procedure, results in a smaller condition number of the linear system obtained from~\eqref{eq:pen_reg_Nys}.

\FloatBarrier

\bibliographystyle{siamplain}
\bibliography{References}
\end{nolinenumbers}
\end{document}